%% file: main.tex
\documentclass[sn-nature,pdflatex]{sn-jnl} 
\usepackage{geometry}
 \usepackage{layout}

\input{_custom_added_preamble}

\begin{document}
%
\includecomment{methods_section}
\includecomment{code_availability_section}
\includecomment{acknowledgements_section} 
\includecomment{pointer_to_suppl_info_arxiv} 
%
\excludecomment{pointer_to_suppl_info_submission} 
\includecomment{supplementary_information_section}

\title[IBNN: implicit bias neural network]{Updating the standard neuron model in artificial neural networks}

\author*[1]{\fnm{Raul} \sur{Mohedano}}\email{raul.mohedano@csic.es}
\author[2]{\fnm{Thomas} \sur{Batard}}
\author[1,3]{\fnm{Erik} \sur{Velasco-Salido}}
\author[2]{\fnm{Ramsses} \sur{De Los Santos Mendoza}}
\author[2]{\fnm{Jorge H.} \sur{Mart{\'\i}nez}}
\author[4]{\fnm{Stacey} \sur{Levine}}
\author*[1]{\fnm{Marcelo} \sur{Bertalm{\'\i}o}}\email{marcelo.bertalmio@csic.es}

\affil[1]{\orgname{Spanish National Research Council (CSIC)}, \orgaddress{\city{Madrid}, \country{Spain}}}
\affil[2]{\orgname{Center for Research in Mathematics (CIMAT)}, \orgaddress{\city{Guanajuato}, \country{Mexico}}}
\affil[3]{\orgname{Universidad Aut{\'o}noma de Madrid (UAM)}, \orgaddress{\city{Madrid}, \country{Spain}}}
\affil[4]{\orgname{National Science Foundation (NSF)}, \orgaddress{\city{Alexandria}, \state{VA}, \country{USA}}}

\abstract{
    From their inception in the 1950s, artificial neural networks (ANNs) started using the so-called point neuron model then prevalent in neuroscience, hoping that this analogy would allow for a better emulation of brain function. Over the years the neuroscience literature has shown that the point neuron model is too simplistic to properly represent many fundamental neural processes; however, the standard neuron model in ANNs still remains the same. Here we substitute it by a very recent model of cortical cells and demonstrate through theoretical analyses and experimental results how, simply by using a more realistic neural unit element without augmenting the number of parameters, the resulting ANNs offer a number of important advantages that include increases in expressivity, robustness and learning speed, and a reduction in memorization and the amount of training data needed.
}


\maketitle

\section*{Introduction} 

In the neuroscience of the mid 20th century, the prevalent view on neural computation was a simplistic abstraction \cite{McCulloch1943,Hubel1959} currently known as the point neuron model, where it is assumed that each individual neuron linearly sums its inputs, i.e. that its dendrites are passive and act as mere ``readers'' of incoming signals, and after this linear filtering there follows a nonlinear process to generate the neuronal output. 
This approach was adopted to represent neurons in \acfp{ANN} 
when they were created in the late 1950s \cite{Rosenblatt1958}, 
with the belief that matching the brain starts with emulating its building block, the neuron \cite{Haykin2009}. 
To this day, neural units in \acp{ANN} still follow the same paradigm \cite{Chavlis2025}, and the cascade of linear-nonlinear stages in \acp{ANN} is referred to as the standard \ac{DL} framework \cite{Bastounis2021}. 

However, the ANN architectures that are currently most popular are poorly aligned with brain functionality \cite{Sartzetaki2024,KarDiCarlo2024,Bowers2022,Tong2024}
and have some critical limitations that do not affect living beings, such as requiring huge amounts of training data and being extremely sensitive to virtually unnoticeable perturbations of their input \cite{Wichmann2023}.
Our conjecture is the following: at the root of these and other fundamental constraints and drawbacks of \acp{ANN} lies the fact that the point neuron model is overly simplistic.
It assumes that dendrites only act linearly, but since the late 1990s and early 2000s it has become clear that dendritic processing is not only much more complex, but also that complex dendritic computations are an indispensable element of key neural behaviors and brain functions \cite{PolegPolsky2019,Pagkalos2024,Poirazi2020}.
Of note, an essential property of dendrites is that they can receive \acp{BAP} initiated from the soma \cite{Stuart2015}, and the importance of \acp{BAP} for the computations of dendrites, single cells and the whole network cannot be overstated \cite{Francioni2022,Stuyt2022,Stingl2025}.
There is currently a renewed interest in the research community in leveraging neuroscience to accelerate progress in \acs{AI} \cite{Zador2023}, and a number of recent works consider more realistic dendritic models to improve the performance of \acp{ANN} \cite{Iyer2022,Chavlis2025}.
However, these approaches still fall under the standard \ac{DL} framework, which is not adequate to represent the back and forth exchange between cell output and dendritic contributions caused by \acp{BAP} \cite{Larkum2022}.

Very recently, a new model for neurons in the primary visual cortex that considers the interaction between dendritic nonlinearities and \acp{BAP} was shown to be able to explain a number of physiological results that challenge standard approaches  \cite{Rentzeperis2025}.
Here, we adapt this more realistic activity model of biological neurons into a novel formulation for the neural units in ANNs. 
Compared with an ANN that uses the standard neuron model and has the same number of parameters,
an ANN built using this updated neuron model is much less impacted by the critical limitations mentioned above, being considerably more robust to input perturbations and requiring a significantly smaller amount of training data,
as demonstrated by our theoretical analyses and experimental results.
Furthermore, the novel neuron model allows the network to have other very relevant advantages as well: to be more expressive, requiring less parameters to achieve a given quality level in the task it solves; to learn faster, reaching a target performance with fewer learning steps; and to memorize less, which is an asset when the training data has noisy or mislabeled samples, as is generally the case.

\section*{Results} 

\subsection*{Updating the neuron model}

We say that an ANN follows the \ac{SM} when the output $u_i$ of a neural unit given an input $\ve{x} \in \R[N]$ may be written as:
\begin{align}
    \label{eq:u}
    u_i &= \phi(y_i),\\
        \label{eq:y}
        y_i &= \sum_{j=1}^N m_{ij} x_j - b_i, 
\end{align}

\noindent where $\ve{m}_i \in \R[N]$ is a linear filter, $b_i \in \R$ is the bias term, and $\phi$ is the (nonlinear) activation function.

\begin{figure}[t]
    \centering
    \begin{subfigure}[t]{0.49\textwidth}
        \centering
        \includegraphics[width=1.0\linewidth]{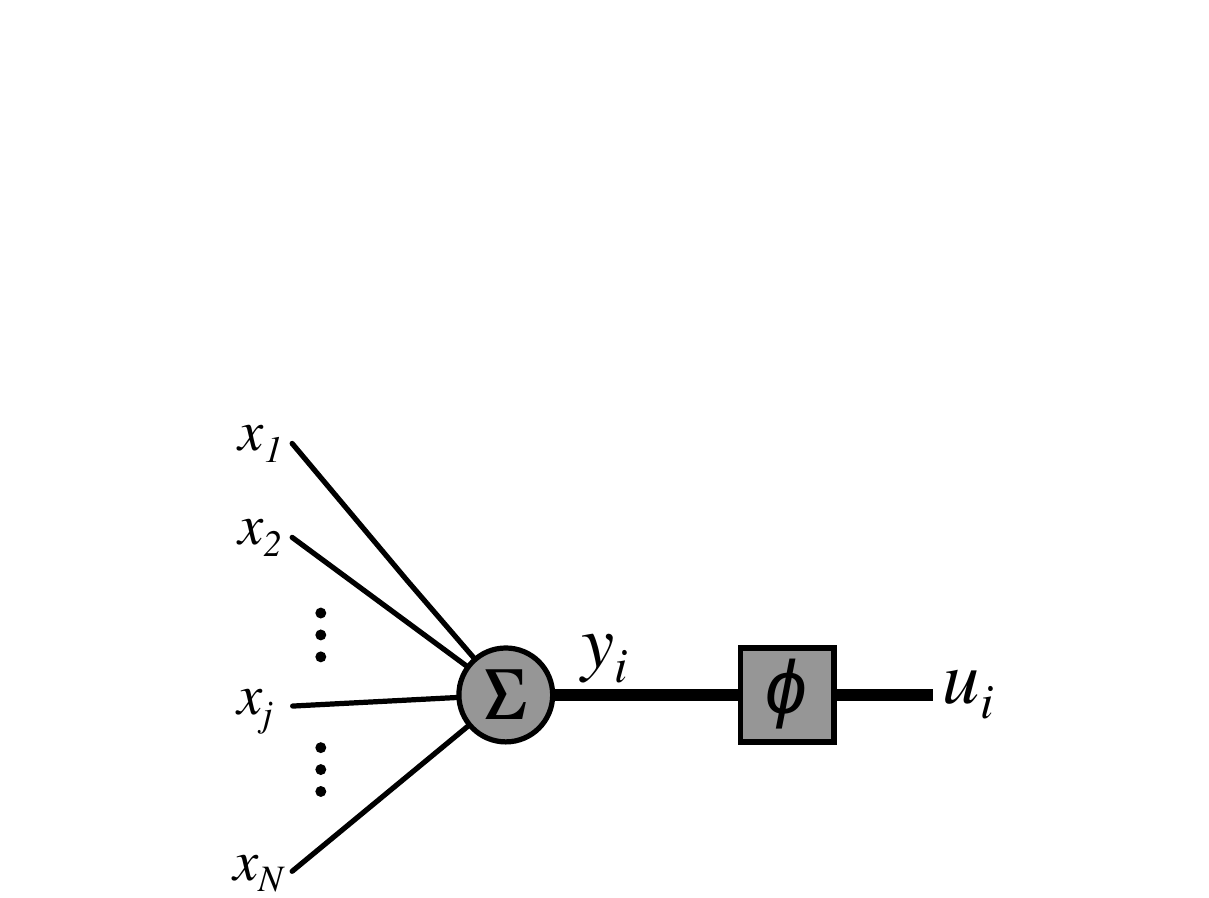}
        \caption{}
        \label{fig:schemeSM}
    \end{subfigure}
    \hfill
    \begin{subfigure}[t]{0.49\textwidth}
        \centering
        \includegraphics[width=1.0\linewidth]{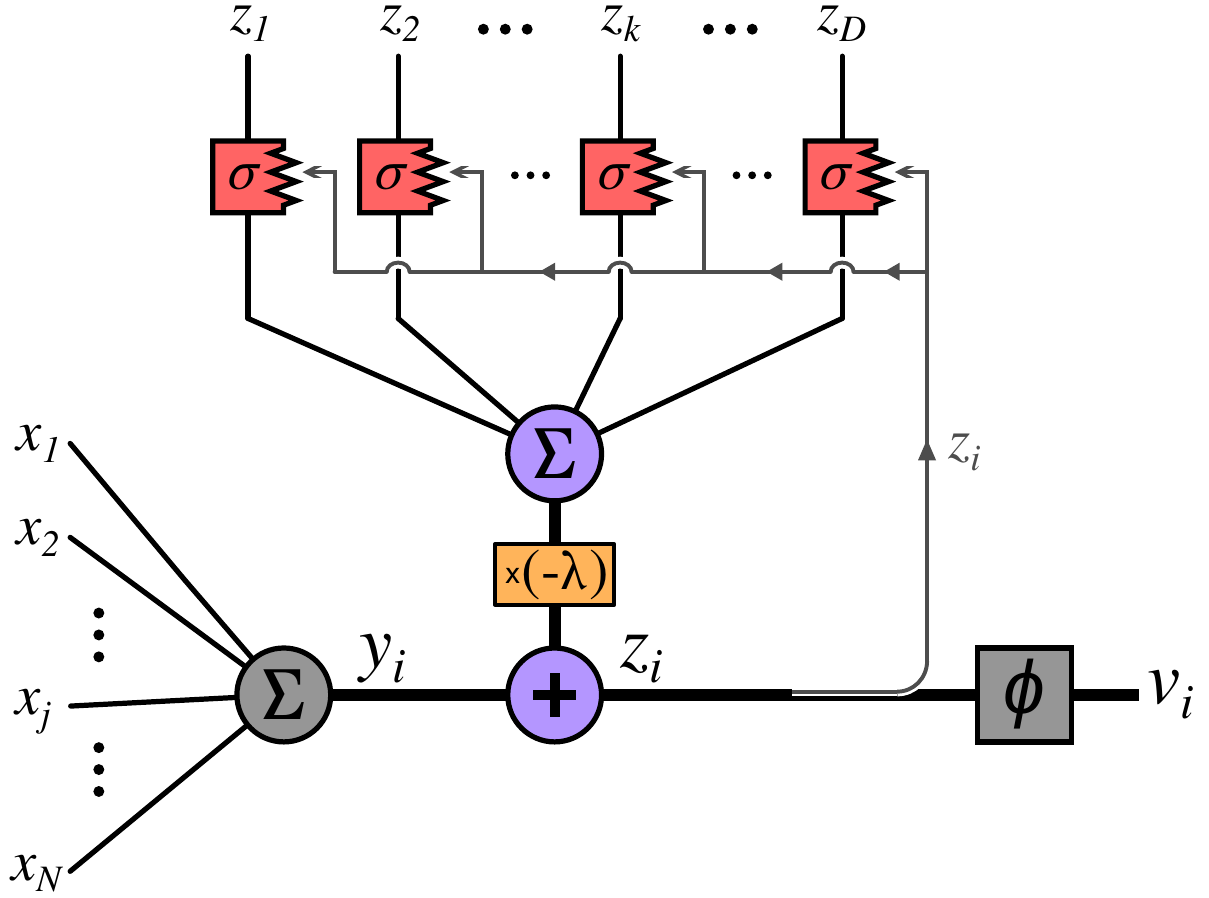}
        \caption{}
        \label{fig:schemeIBNN}
    \end{subfigure}
    \vspace{+0.1cm}
    \caption{
Schematic neuron models. Left: standard neuron model in ANNs, following a simplistic abstraction from the neuroscience of the 1950s \cite{McCulloch1943,Hubel1959} that assumed linear dendrites and an output nonlinearity. Right: proposed neuron model, based on a recent model of cortical cells \cite{Rentzeperis2025}; notice how this new model extends the standard neuron model by considering internal nonlinearities $\sigma$ (that represent dendritic processes), and these nonlinearities are affected in a dynamic way by the activity value $z_i$ (that represents the bAP going from the soma to the dendrites), itself depending on the inputs $x_j$ and the contributions $z_k$ from other neurons.
    }
    \label{fig:scheme}
\end{figure}

Here we propose a new neuron model for \acp{ANN} that extends the standard model with the addition of a new bias term,
 inspired by a very recent neuroscience work \cite{Rentzeperis2025},
that emulates the contribution of dendritic nonlinearities mediated by their interaction with \acp{BAP}.
In our formulation, the output $v_i$ of a neural unit given an input $\ve{x} \in \R[N]$ is

\begin{align}
    \label{eq:v}
    v_i &= \phi(z_i),\\
    \label{eq:z}
    z_i &= \overbrace{\sum_{j=1}^N m_{ij} x_j - b_i}^{y_i}
    \,\mLbin\, \overbrace{\lambda \sum_{k=1}^D w_{ik} \, \sigma\!\left( z_k \!-\! z_i\right)}^{\text{new bias}} , 
\end{align}

\noindent where $\lambda \in \R$, $\ve{w}_i=(w_{ik})_{k=1}^{D} \in \R[D]$ is a linear filter, and $\sigma$ is a nonlinear function.

The first term in Eq.~\eqref{eq:z} represents the response to the input $\ve{x}$ produced by the linear dendrites; notice how this term is identical to $y_i$ in Eq.~\eqref{eq:y}, which is to be expected given that the \ac{SM} only considers dendrites that are passive.
The second term in Eq.~\eqref{eq:z} models the effect of nonlinear dendrites. In this new bias term, each of the dendrites has the nonlinear response function $\sigma$, which receives two inputs: the contribution $z_k$ from a neighboring unit in the same neural layer, and the value $z_i$, which emulates the \ac{BAP} from the soma.
From Eqs.~\eqref{eq:y} and~\eqref{eq:z} it is trivial to see that, when $\lambda=0$, our proposed neuron model is identical to the standard one; see Fig.~\ref{fig:scheme} for an schematic of both models.
Importantly, if we take $\lambda$ and $\ve{w}_i$ to be fixed and predefined, then our proposed model has exactly the same number of trainable parameters as in the \ac{SM} case, i.e. $\ve{m}_i$ and $b_i$.

Given that the new bias term involves $z_i$, the definition of $z_i$ depends on $z_i$ itself and therefore Eq.~\eqref{eq:z} is {\it implicit}. For this reason, we say that an ANN that uses our model is an
\acfi{IBNN} \acused{IBNN}.
We can see from Eqs.~\eqref{eq:v} and~\eqref{eq:z} that computing the output of an \ac{IBNN} requires solving a system of coupled implicit equations, and while this type of problem does not necessarily have a solution, in our case 
when $\sigma$ is a sigmoidal function and $\lambda < T$ for some positive threshold $T$,
the solution always exists, is unique, and minimizes an energy, and consequently it can be computed with a simple gradient descent algorithm (see
\textrm{Methods}).

As an illustration, we consider a toy example in 2D where we want to find the boundary between two classes.
We train different ANNs with analogous architecture to solve this task (see
\textrm{Methods}):
a \ac{SM} network, and several \acp{IBNN} with a variety of values for the weight $\lambda$ of the implicit bias term and the maximum slope of the nonlinearity $\sigma$, denoted by $p$.
Fig.~\ref{fig:visualization_2d_lambda_p} shows the two classes in yellow and blue, and the classification boundaries obtained with the different ANNs.
As we can see, \ac{IBNN} is more expressive than the \ac{SM} network, being able to approximate well the non-convex shape of the classification boundary while the latter is only capable of producing a convex polygon.
Also, the resulting boundary in \ac{IBNN} is able to become more curved as the magnitude of the parameter $\lambda$ increases (Fig.~\ref{fig:visualization_2d_lambda_p}, left), and 
$p$ controls the curvature of the segments or arcs composing the classification boundary (Fig.~\ref{fig:visualization_2d_lambda_p}, right).

\begin{figure}[h!]
    \centering
    \begin{subfigure}[t]{0.49\textwidth}
        \centering
        \includegraphics[width=1.0\linewidth]{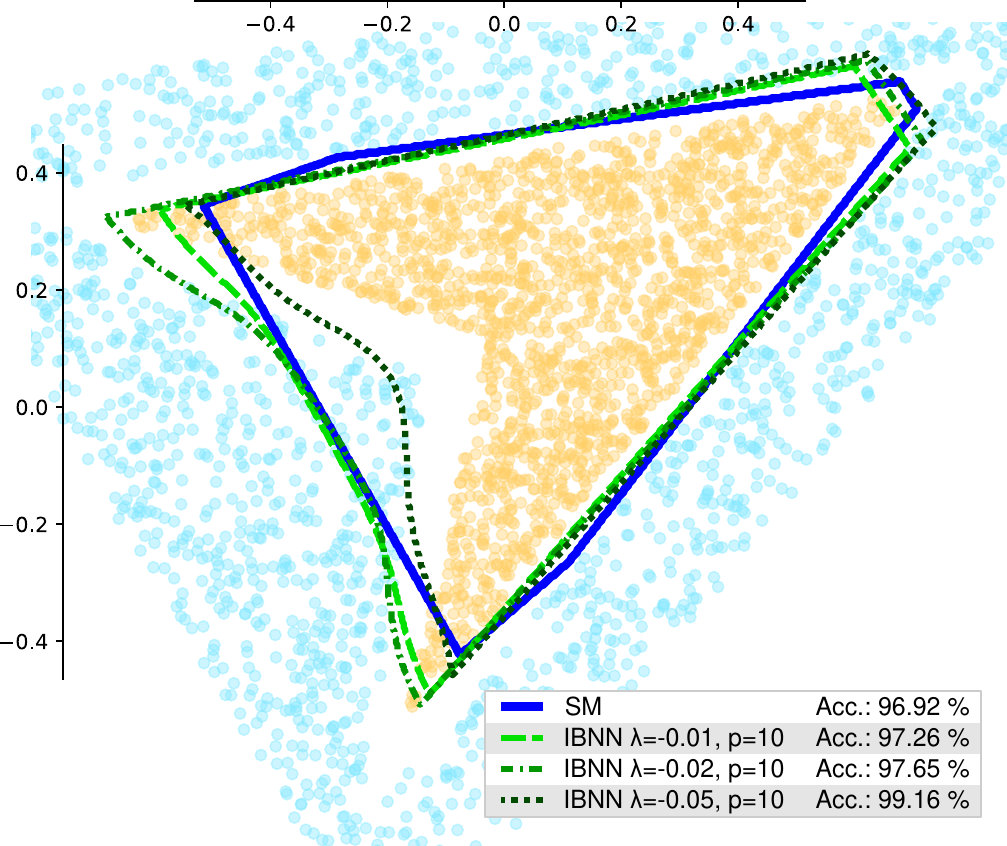}
        \caption{}
        \label{fig:visualization_2d_lambda}
    \end{subfigure}
    \hfill
    \begin{subfigure}[t]{0.49\textwidth}
        \centering
        \includegraphics[width=1.0\linewidth]{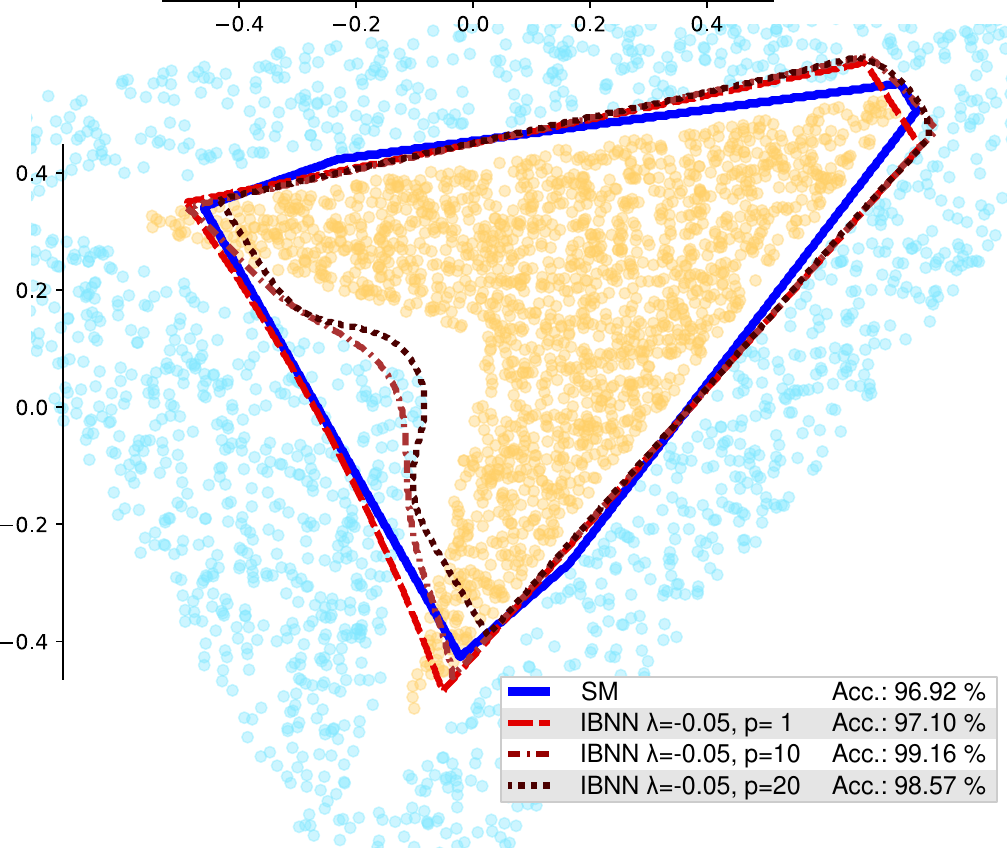}
        \caption{}
        \label{fig:visualization_2d_p}
    \end{subfigure}
    \vspace{+0.1cm}
    \caption{
        Plot of the decision boundary for \acp{ANN} with a single hidden layer of three neurons
        in a binary classification of 2D input data:
        comparison between \acp{ANN} whose hidden layer is composed of neurons according to the \ac{SM}(blue solid line in both left and right panels) and of neurons according to the bio-inspired neuron model with different weights $\lambda$ for the implicit bias term (left panel, in shades of green) and different values of the maximum slope $p$ of the sigmoidal nonlinearity $\sigma$ (right panel, shades of red).
        The resulting validation accuracy of each \ac{ANN} is included in the legend.
        Both panels suggest that the \ac{IBNN} is more expressive than the \ac{SM} network, as the \ac{IBNN} is able to approximate well the non-convex shape of the classification boundary while the \ac{ANN} using the \ac{SM} neuron is only capable of producing a convex polygonal shape. The \ac{IBNN} boundary can better fit curved boundaries as the magnitude of the parameter $\lambda$ increases (left)
        while he maximum slope $p$ of the nonlinearity $\sigma$ controls the sharpness of the transitions between the segments or arcs composing the classification boundary (right).
    }
    \label{fig:visualization_2d_lambda_p}
\end{figure}

\subsection*{Advantages of the networks that use the updated neuron model}

We perform theoretical analyses and numerical experiments comparing in each case an ANN that uses \ac{SM} neurons with an \ac{IBNN}
that has the same architecture and number of parameters but whose neurons follow the proposed model, 
thus ensuring that the benefits of \ac{IBNN} do not come from an increase in model complexity.
Numerical tests consist of solving image classification tasks over different image datasets and with different architectures,
in order to highlight the generality of the approach.

\subsubsection*{Less training data required}
Fig.~\ref{fig:partial_training_data} shows, for three different image datasets, that \ac{IBNN} consistently requires 
just a fraction ($20-82\%$) of the training data in order to achieve the best performance level of an \ac{ANN} with neurons that follow the \ac{SM}.

\begin{figure}[h!]
    \centering
    \begin{subfigure}[t]{0.32\textwidth}
        \centering
        \includegraphics[width=1.0\linewidth]{
            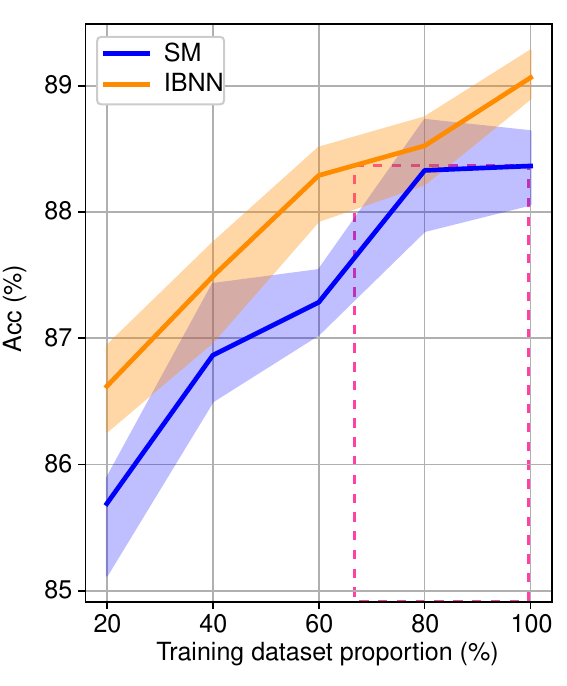
        }
        \caption{Fashion-MNIST}
    \end{subfigure}
    \hfill
    \begin{subfigure}[t]{0.32\textwidth}
        \centering
        \includegraphics[width=1.0\linewidth]{
            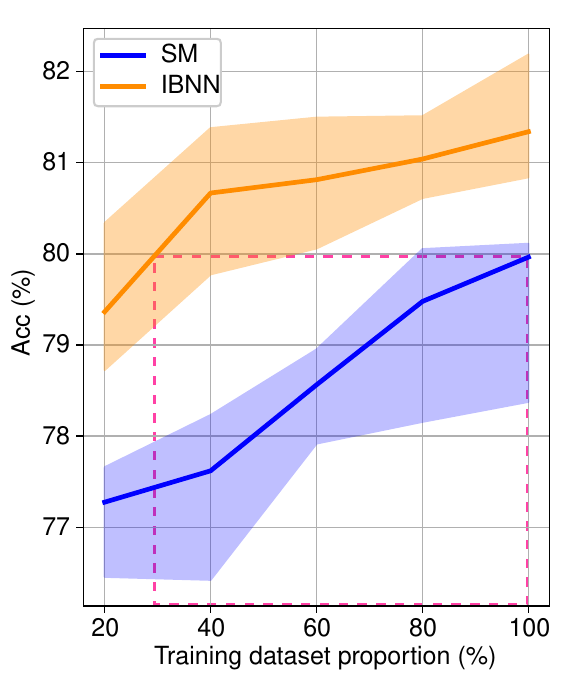
        }
        \caption{SVHN}
    \end{subfigure}
    \hfill
    \begin{subfigure}[t]{0.32\textwidth}
        \centering
        \includegraphics[width=1.0\linewidth]{
            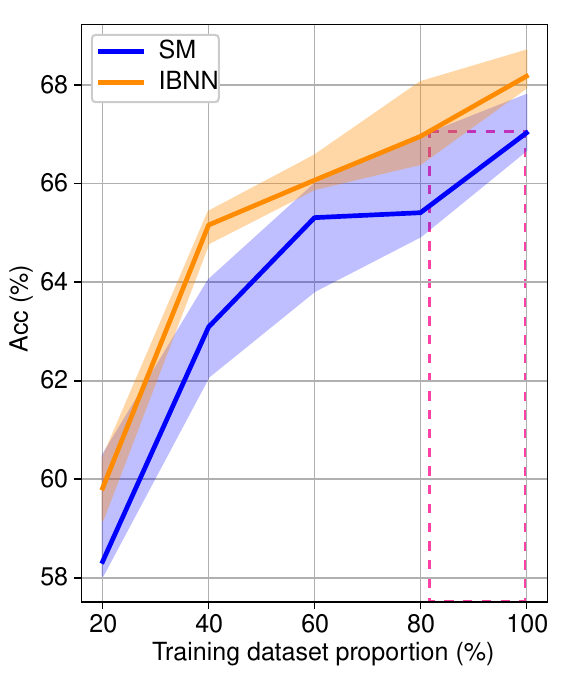
        }
        \caption{CIFAR-10}
    \end{subfigure}
    \vspace{+0.0cm}
    \caption{
        \ac{IBNN} consistently requires  just a fraction ($29-82\%$) of the training data in order to achieve the best performance level of an \ac{ANN} with neurons that follow the \ac{SM}.
        The pink dashed line indicates the best validation accuracy for \ac{SM} and the proportion of the complete dataset needed to achieve it, and the proportion for the \ac{IBNN} to achieve the same accuracy.
        Datasets are indicated under each graph and corresponding architectures indicated in \textrm{Methods}.
        Shadowed, CI of $90$\%.
    }
    \label{fig:partial_training_data}
\end{figure}

\subsubsection*{Increased robustness to input perturbations}
A common way to analyze the sensitivity of an \ac{ANN} to input perturbations is to compute the norm of the Jacobian matrix of the network with respect to the input, with smaller norm values corresponding to more robust networks \cite{Jakubovitz2018}. Under a mild set of assumptions, it can be shown that the norm of the Jacobian of an \ac{IBNN} is smaller than that of its corresponding \ac{SM} network (i.e. the network obtained by setting $\lambda=0$), and therefore the \ac{IBNN} is more robust (see \textrm{Methods}). It is notable that the formal mathematical proof presented in~\cite{Bastounis2021} to demonstrate that \ac{SM} networks are either inaccurate or unstable (if accurate) does not directly apply to the \ac{IBNN} formulation, whose structure does not satisfy the assumptions in~\cite{Bastounis2021} due to its
implicit and highly nonlinear formulation (see \textrm{Methods}).

Recently, it has become very relevant to evaluate the robustness of networks against stealth attacks \cite{Tyukin2024}, defined as a scenario where an attacker modifies one or more neurons in an \ac{ANN} in such a way that the following two conditions are met: (1) for inputs belonging to the validation set of the \ac{ANN}, the responses are as expected, and therefore no attack is observed; (2) however, for some trigger input known only to the attacker, the ANN provides the result that the attacker wants. There is proof that a one-neuron stealth attack is possible on a \ac{SM} network \cite{Tyukin2024} but we demonstrate that it would be unfeasible on an \ac{IBNN} (see \textrm{Methods}).

As experimental demonstration of the robustness capabilities that the proposed neuron model provides, 
Figs. \ref{fig:robustness_pgd} and~\ref{fig:robustness_pixle} show 
comparisons of the performance of \ac{IBNN} and \ac{SM} networks with the same number of parameters against adversarial attacks of different nature.
As we can see, \ac{IBNN} is consistently less sensitive to input perturbations.

\begin{figure}[h!]
    \centering
    \begin{subfigure}[t]{0.32\textwidth}
        \centering
        \includegraphics[width=1.0\linewidth]{
            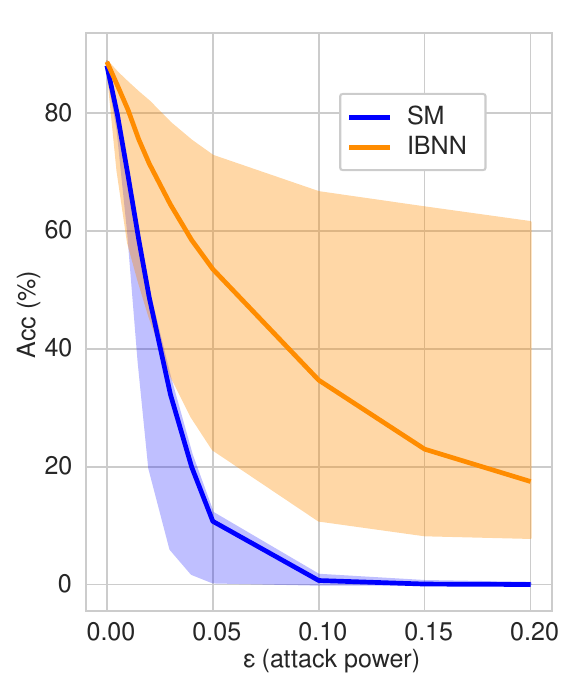
        }
        \caption{Fashion-MNIST}
    \end{subfigure}
    \hfill
    \begin{subfigure}[t]{0.32\textwidth}
        \centering
        \includegraphics[width=1.0\linewidth]{
            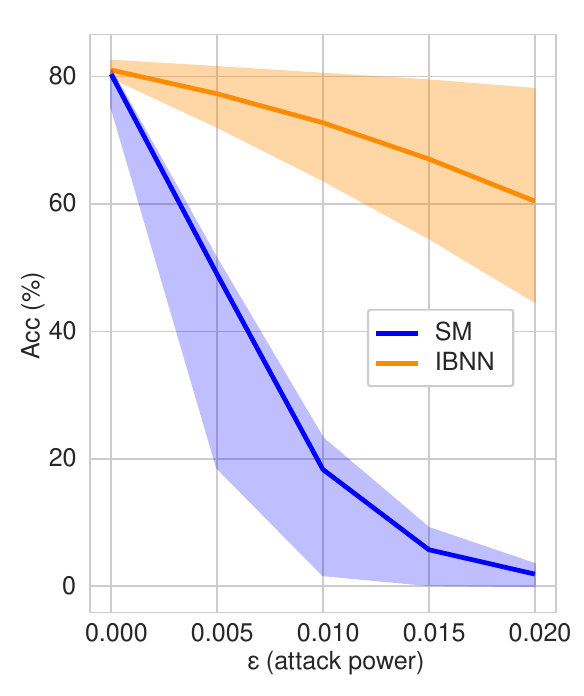
        }        \caption{SVHN}
    \end{subfigure}
    \hfill
    \begin{subfigure}[t]{0.32\textwidth}
        \centering
        \includegraphics[width=1.0\linewidth]{
            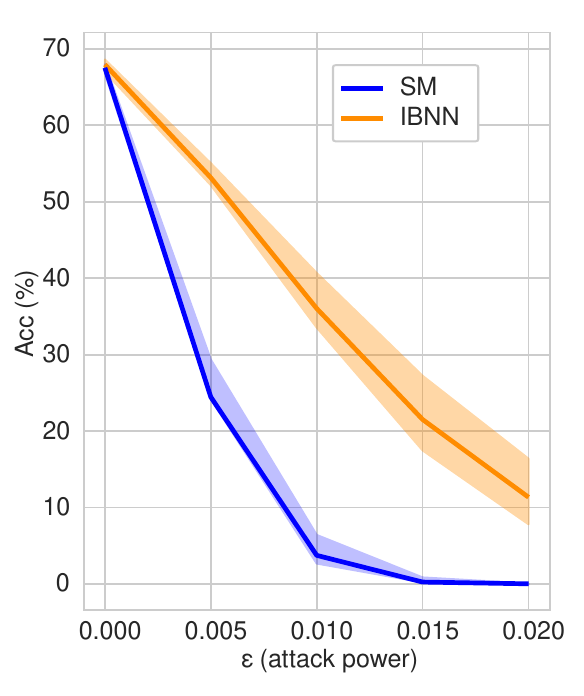
        }        \caption{CIFAR-10}
    \end{subfigure}
    \vspace{+0.0cm}
    \caption{
        \ac{IBNN} is more robust to white-box adversarial attacks.
        The curves represent the accuracy achieved by each model for different intensities of an adversarial attack, in particular \ac{PGD}~\cite{MadryPGD2018} attack (power denoted in the horizontal axis: note that a power of $0$ corresponds to the clean accuracy of the model).
        Datasets are indicated under each graph and corresponding architectures and details on the adversarial attacks are indicated in \textrm{Methods}.
        Shadowed, full range of data (CI of $100$\%).
    }
    \label{fig:robustness_pgd}
\end{figure}

\begin{figure}[h!]
    \centering
    \hfill
    \begin{subfigure}[t]{0.25\textwidth}
        \centering
        \includegraphics[width=1.0\linewidth]{
            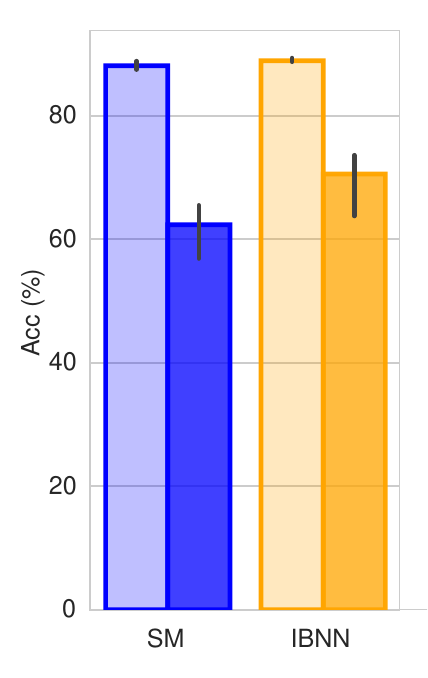
        }
        \caption{Fashion-MNIST}
    \end{subfigure}
    \hfill
    \begin{subfigure}[t]{0.25\textwidth}
        \centering
        \includegraphics[width=1.0\linewidth]{
            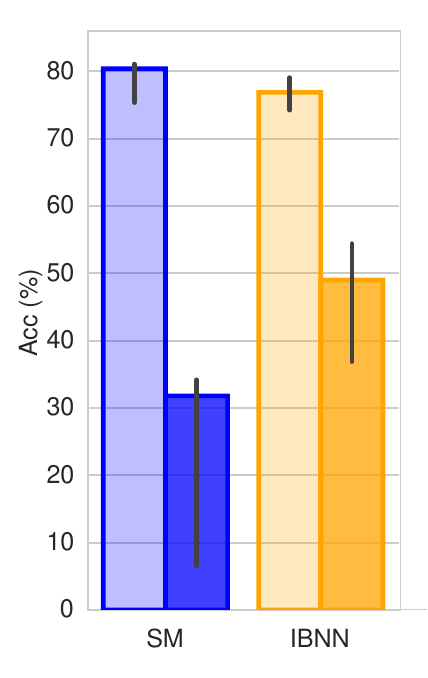
        }
        \caption{SVHN}
    \end{subfigure}
    \hfill
    \begin{subfigure}[t]{0.25\textwidth}
        \centering
        \includegraphics[width=1.0\linewidth]{
            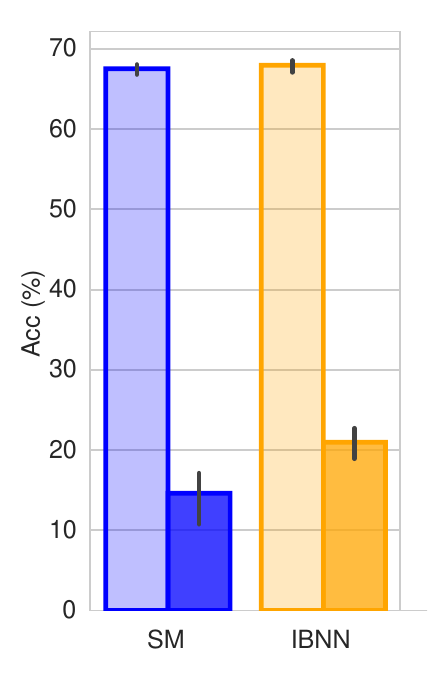
        }
        \caption{CIFAR-10}
    \end{subfigure}
    \hfill
    \vspace{+0.0cm}
    \caption{
        \ac{IBNN} is more robust to black-box adversarial attacks.
        The translucent bar represents clean accuracy for each model, whereas the solid bar represents the accuracy under an adversarial attack, in particular Pixle~\cite{PomponiPixle2022} of fixed strength.
        Datasets are indicated under each graph and corresponding architectures and details on the adversarial attacks are indicated in \textrm{Methods}.
        The black lines indicate the full range of obtained accuracies (CI of $100$\%).
    }
    \label{fig:robustness_pixle}
\end{figure}

\subsubsection*{Greater expressivity}
Assessing the expressivity of a neural network entails determining the functions it can compute.
In analogy to the classical works that demonstrate the universal approximation power of \acp{ANN} based on the \ac{SM} \cite{Hornik1989,Cybenko1989}, we establish that a single-layer \ac{IBNN} is also able to approximate any given mapping with an arbitrarily small error (see \textrm{Methods}).
But we also show that \acp{IBNN} are in fact more expressive than \ac{SM} networks, because they need a smaller number of parameters to approximate the same function (see \textrm{Methods}).

As an example of the improvement in expressivity that the updated neuron model can provide to an ANN, Fig.~\ref{fig:expressivity_all_datasets} shows that the performance of an \ac{IBNN} is consistently better than that of a \ac{SM} network with the same number of trainable parameters or, equivalently, that \ac{IBNN} 
can achieve the same performance as a \ac{SM} network with less parameters. Note that the potential consideration of trainable $\lambda$ would introduce an additional number of trainable parameters equal to the number of \ac{IBNN} layers in the network, which would make a negligible difference to the comparison shown in the figure.

\begin{figure}[h!]
    \centering
    \begin{subfigure}[t]{0.32\textwidth}
        \centering
        \includegraphics[width=1.0\linewidth]{
            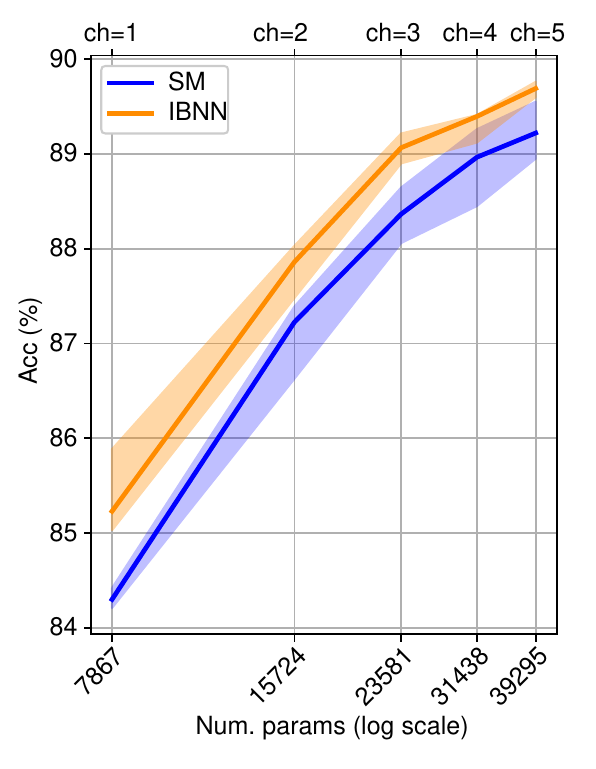
        }
        \caption{Fashion-MNIST}
    \end{subfigure}
    \hfill
    \begin{subfigure}[t]{0.32\textwidth}
        \centering
        \includegraphics[width=1.0\linewidth]{
            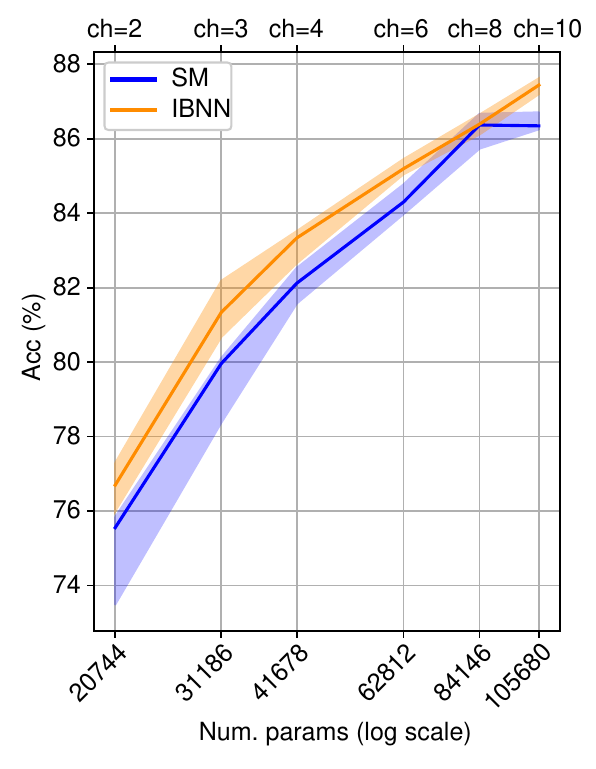
        }
        \caption{SVHN}
    \end{subfigure}
    \hfill
    \begin{subfigure}[t]{0.32\textwidth}
        \centering
        \includegraphics[width=1.0\linewidth]{
            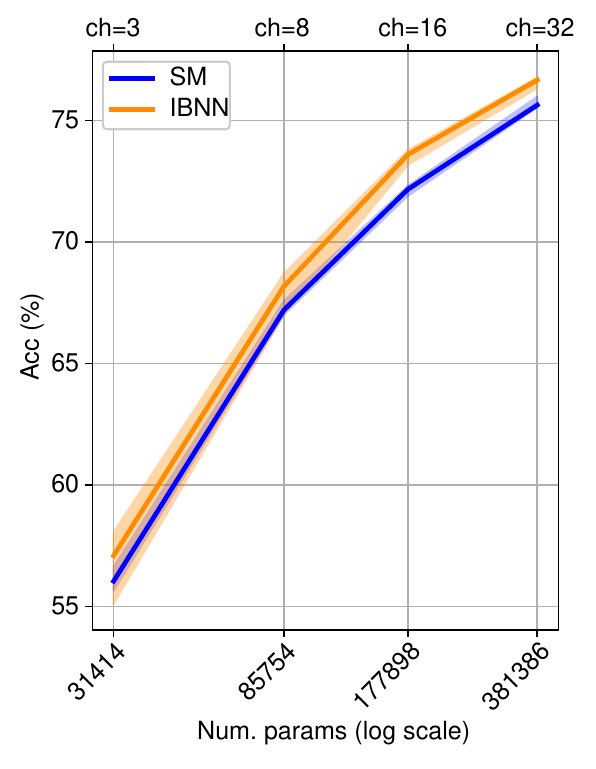
        }
        \caption{CIFAR-10}
    \end{subfigure}
    \vspace{+0.0cm}
    \caption{
        \ac{IBNN} is more expressive: it achieves higher accuracy with the same number of trainable parameters or, equivalently, 
        it requires less parameters to perform as well as a \ac{SM} network. Variation in the number for parameters caused by the increase in the number of channels of the layers of the backbone is accounted for in each graph. 
        Datasets are indicated under each graph and corresponding architectures are indicated in \textrm{Methods}.
        Shadowed, CI of $90$\%.         
    }
    \label{fig:expressivity_all_datasets}
\end{figure}

\subsubsection*{Faster learning}
Fig.~\ref{fig:learning_speed_all_datasets} shows
that \ac{IBNN} requires less learning steps to achieve 
the same classification performance as a \ac{SM} network: in all cases, this equated to less than half the number of epochs.

\begin{figure}[h!]
    \centering
    \begin{subfigure}[t]{0.32\textwidth}
        \centering
        \includegraphics[width=1.0\linewidth]{
            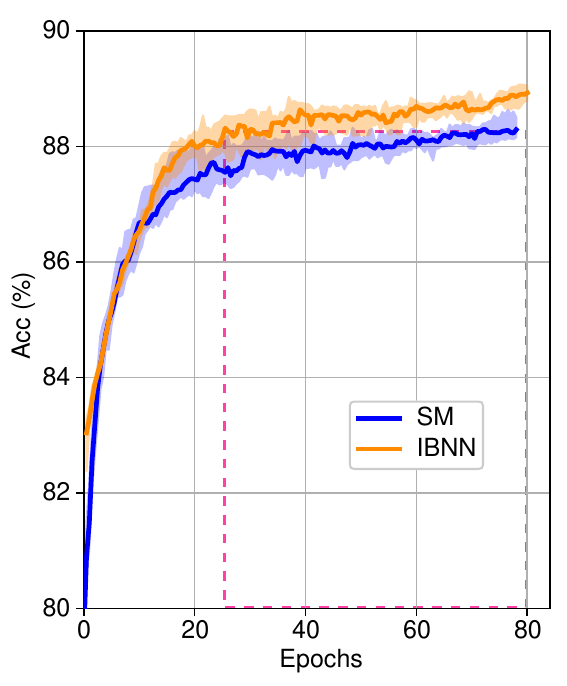
        }
        \caption{Fashion-MNIST}
    \end{subfigure}
    \hfill
    \begin{subfigure}[t]{0.32\textwidth}
        \centering
        \includegraphics[width=1.0\linewidth]{
            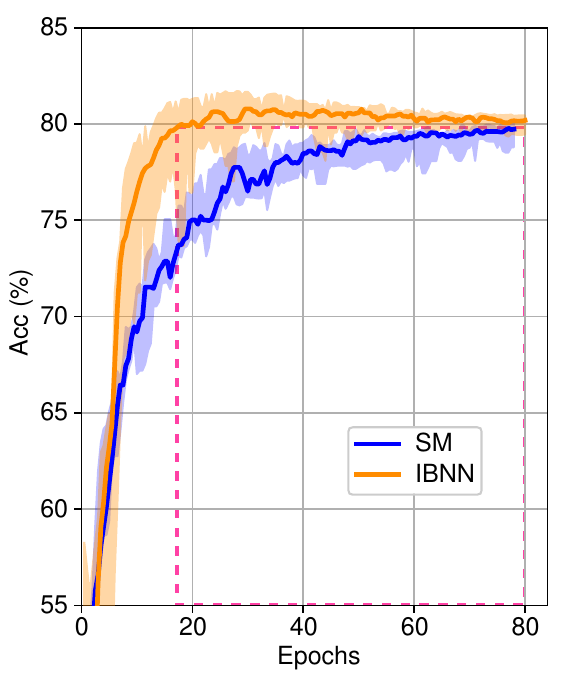
        }
        \caption{SVHN}
    \end{subfigure}
    \hfill
    \begin{subfigure}[t]{0.32\textwidth}
        \centering
        \includegraphics[width=1.0\linewidth]{
            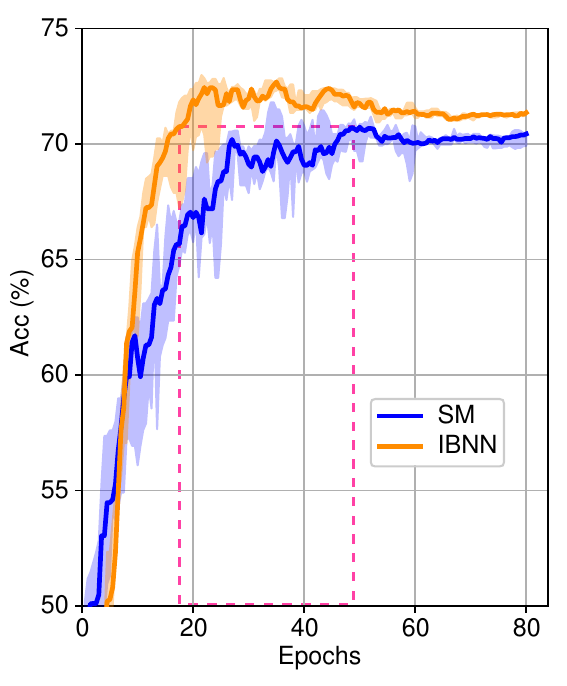
        }
        \caption{CIFAR-10}
    \end{subfigure}
    \vspace{+0.0cm}
    \caption{
        \ac{IBNN} learns faster: \ac{IBNN} matches in $18$-$25$ epochs the best validation accuracy reached by the \ac{SM} network within a $80$-epoch training, which, in all cases, happened after the $53$-th epoch. 
        The pink dashed line indicates the best validation accuracy for the \ac{SM} over the considered training epochs and epoch when it was achieved, and the epoch for the \ac{IBNN} to achieve the same accuracy.
        Datasets are indicated under each graph and corresponding architectures are indicated in \textrm{Methods}.
        Shadowed, CI of $90$\%.         
    }
    \label{fig:learning_speed_all_datasets}
\end{figure}

\subsubsection*{Less memorization}
When networks have a very large number of parameters they can manage to learn imperfect and even mislabeled training data, in a process called memorization \cite{FeldmanZhang2020}.
This is clearly problematic, as the performance of the network may become brittle. 
The occurrence of memorization correlates with the curvature of the loss function with respect to the input \cite{Garg2024}, and we
provide a mathematical analysis suggesting 
that \ac{IBNN} has a smaller curvature and therefore memorizes less than a \ac{SM} network (see \textrm{Methods}).

Fig.~\ref{fig:memorization_all_datasets} shows 
an experiment where the training data has been corrupted with varying percentages of mislabeled samples (details in \textrm{Methods}): 
the advantage of the best validation accuracy achieved by \ac{IBNN} over the best validation achieved by an analogous \ac{SM} network is consistently
larger when the training set includes data with wrong labels,
indicating that \ac{IBNN} memorizes less.


\begin{figure}[h!]
    \hfill
    \begin{subfigure}[t]{0.32\textwidth}
        \centering
        \includegraphics[width=1.0\linewidth]{
            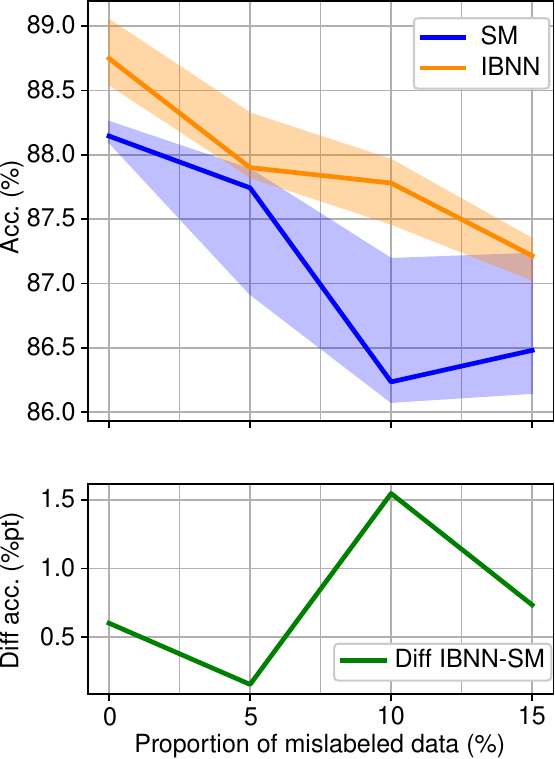
        }
        \caption{Fashion-MNIST}
    \end{subfigure}
    \hfill
    \begin{subfigure}[t]{0.32\textwidth}
        \centering
        \includegraphics[width=1.0\linewidth]{
            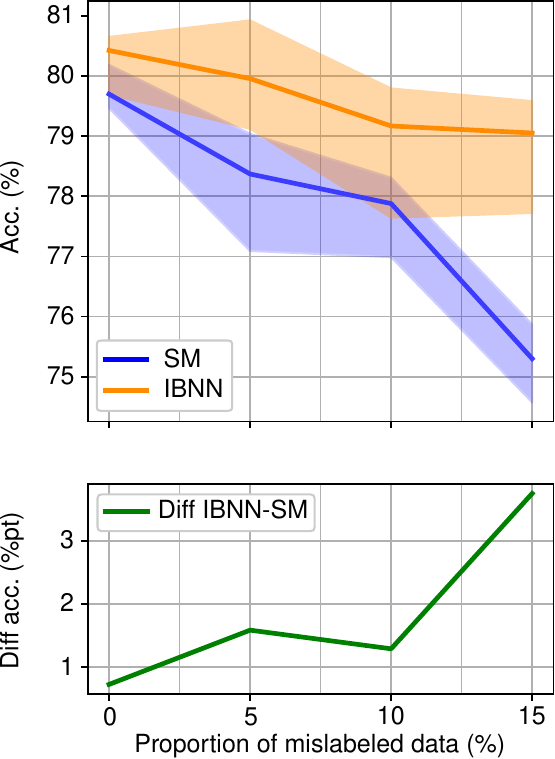
        }
        \caption{SVHN}
    \end{subfigure}
    \hfill
    \begin{subfigure}[t]{0.32\textwidth}
        \centering
        \includegraphics[width=1.0\linewidth]{
            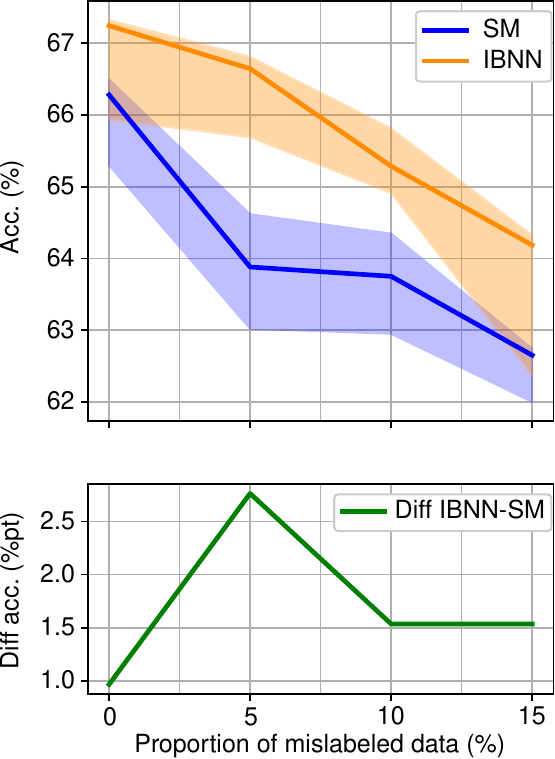
        }
        \caption{CIFAR-10}
    \end{subfigure}
    \hfill
    \vspace{+0.1cm}
    \caption{
        \ac{IBNN} memorizes less: the improvement in validation accuracy of \ac{IBNN} over the \ac{SM} network is constently larger when the training set includes data with wrong labels
        and, in certain instances, the difference even widens as the proportion of wrong labels increases (see panel (b)).
        Datasets are indicated under each graph and corresponding architectures are indicated in \textrm{Methods}.
        For each dataset: top row, best validation accuracy achieved for each model for different percentage of wrong training labels;
        bottom row, difference of the validation of \ac{IBNN} and \ac{SM} depicted in the top row.
        Shadowed, CI of $90$\%.         
    }
    \label{fig:memorization_all_datasets}
\end{figure}

\section*{Discussion} 

While ANNs continue to serve as the core of many state-of-the-art AI algorithms, they still have significant limitations that have not been overcome despite extensive research efforts.
There appears to be an emerging consensus that the solution to these problems will not arrive simply from scaling up current ANNs, but that fundamentally new approaches are needed instead \cite{Zador2023}. 
In this article we take inspiration from neuroscience and propose to use a new, compact and very effective model of cortical cells \cite{Rentzeperis2025} to replace the omnipresent but clearly outdated point neuron model of ANNs.
The novelty of the proposed approach lies in the combination of several bio-inspired elements, including internal dendritic-like nonlinearities that become dynamic and input-dependent through the modeling of the bAP, and lateral interactions among neurons that result in a system of coupled and implicit differential equations. 
In the NeuroAI literature there are recent works that introduce more involved models of neural units \cite{Bertalmio2020Evidence,Paiton2020,Rawat2024,Chavlis2025}, 
but they fail to emulate the bAP, or assume linear dendrites, or disregard lateral interactions.
And, while some AI approaches use implicit layers and differential equations \cite{BaiKolter2019}, they lack an equivalent to the bAP that might make the nonlinearities input-dependent.

The experimental and theoretical results presented in the article support our original hypothesis, namely, that the overly simplistic neuron model of ANNs contributes to their fundamental limitations, and that an updated and biologically inspired neuron model, one that takes into account the complexity of dendritic processing, may allow ANNs to achieve significant advantages. 
We have shown that ANNs using the new model demonstrate gains over a number of aspects that include expressivity, robustness and learning, 
and this aligns with the fact that nonlinear dendrites bring these same advantages to cortical computations and neural behaviors \cite{Poirazi2020}.

The experiments have focused on image data, and the model comes from visual neuroscience, but we expect the approach to generalize well, making extensions to other data modalities simple, in the same way that deep learning is applied to any sort of data despite its being based on the hierarchical model that Hubel and Wiesel proposed for neurons in the visual cortex \cite{Hubel1959}.
The \ac{IBNN} parameters $\{\ve{m}_i,b_i\}$ in all experiments have been learned, but we have also tested a second mode of operation for our formulation where these parameters are simply taken from a \ac{SM} network that solves the same task and has already been trained; the original mode produces more accurate results and is better suited for scenarios where training data is scarce (given that \ac{IBNN} needs less training data), whereas the second mode can be seen as a fast and simple method to take an existing AI system and make it more robust to input perturbations, without the need to train it again
(\textrm{Supplementary Information}). 

There are several research questions related to the proposed model that are worth exploring as future work.
To begin, 
we must note that in our experiments the optimum value for the hyperparameter $\lambda$ was either found by grid search 
or through pre-training (\ie by optimizing for accuracy); moving forward we believe that $\lambda$ should always be considered a trainable parameter.
In the same vein, the spatial extent of the filters $\ve{w}_i$ and the maximum slope $p$ of the nonlinearity $\sigma$ should also be learned.
This would produce just a marginal increase in the number of parameters while it could enhance the expressivity of the network even further, although the mathematical analysis could become much more involved.
Related to this, the use of values of $\lambda$ above a certain positive threshold might imply the existence of multiple solutions for a single input, which could allow to represent with higher fidelity the modeling of biological processes through partial differential equations, where the final state of the process is often path-dependent \cite{Khona2022}.
We will also develop numerical techniques specifically tailored to \acp{IBNN}, in order to reduce the computational complexity associated to the iterative nature of finding the solution to the implicit system of equations, and also to the presence of multiple cross-comparisons in the nonlinear term of the new neuron model.

Secondly, while the proposed model was inspired by the V1 model in \cite{Rentzeperis2025}, for the sake of simplicity we did not emulate it fully, omitting one term of their formulation. We will incorporate that term to our model, which will introduce a non-monotonic internal nonlinearity, and analyze in which contexts this may provide an advantage (a recent work suggests that the use of non-monotonic nonlinearities might be key to predict time-varying phenomena \cite{Luna2025}).

Thirdly, the second term of our model (\ie the new bias in Eq.~\eqref{eq:v}) leads to the observation that \ac{IBNN} has the ability to push for independent representations, because if the activities $z_i$ and $z_j$ are very similar 
then the neural units $i$ and $j$ do not influence each other; this is consistent with the fact that the V1 model that motivated our approach has a direct connection with the sparse coding framework \cite{Rentzeperis2025}.
We view this property of our proposed neuron model as providing a complementary behavior to the attention mechanism employed by Transformer architectures \cite{Vaswani2017},
it would be interesting to test if their combination (\ie a Transformer using the updated neuron model) provides substantial gains in expressivity.

Finally, the mathematical form of our model, where the solution minimizes an energy and
can be found through an iterative algorithm analogous to the numerical approximation of a differential equation (\textrm{Methods}, Eq.~\eqref{eq:iterative_calculation_of_IBNN}),
links our approach with energy models for AI \cite{Hopfield1984} and also with causal modeling for machine learning \cite{Scholkopf2022}, a topic of the utmost importance as it has been argued that 
the major unsolved problems in AI are fundamentally tied to causality \cite{Pearl2018}.

\begin{methods_section}

    \section*{Methods} 
    \label{sec:methods}
        \subsection*{Model solution}
        \label{sec:methods:model_solution}
        %
        \input{_methods__model_solution}
       \subsection*{\ac{IBNN} in matrix notation}
        \label{sec:methods:ibnn_in_matrix_notation}
        %
        \input{_methods__ibnn_matrix_notation}
        \subsection*{Expressivity}
        \label{sec:methods:expressivity}
        %
        \input{_methods__expressivity}

        \subsection*{Comparative analysis of the robustness of \ac{IBNN}}
        \label{sec:methods:comparative_analysis_of_adversarial_robustness}
        %
        \input{_methods__comparative_analysis_of_robustness}

        \subsection*{Resistance to memorization}
        \label{sec:methods:memorization}
        %
        \input{_methods__memorization}
        \subsection*{Implementation of the \ac{IBNN} layers}
        \label{sec:methods:implementation_of_IBNN}
        %
        \input{_methods__implementation_of_IBNN}
        \subsection*{Experimental details}
        \label{sec:methods:experimental_details}
        \input{_methods__experimental_details}

\end{methods_section}


\begin{code_availability_section}

    \section*{Code availability}
    
    The code used to carry out the simulations and analysis is available at \url{https://github.com/vmg-io-csic/ibnn}.

\end{code_availability_section}


\section*{Author contributions}

Conceptualization and methodology: M.B.;
mathematical analysis: R.M., T.B., M.B., S.L.;
numerical implementation: R.M., E.V.S.;
experiments: R.M., E.V.S., R.S.M., J.H.M.;
analysis of results: all authors;
writing: R.M., M.B., T.B., S.L.


\begin{acknowledgements_section}

    \section*{Acknowledgments}
    
    This work has been supported by: Programa Fundamentos (ref.~VIS4NN) of Fundaci{\'o}n BBVA 2022 (Spain); Programa Tecnolog{\'\i}as (ref. TEC~2024/COM-322) of Comunidad de Madrid (Spain); Proyectos de Generaci{\'o}n de Conocimiento 2024 (ref. PID2024-161841NB-I00) of Ministerio de Ciencia, Innovaci{\'o}n y Universidades (Spain); and Programa Momentum (ref. MMT24-IO-02) of Consejo Superior de Investigaciones Cient{\'\i}ficas (Spain), funded by European Commission–NextGenerationEU through the Spanish Recovery, Transformation and Resilience Plan.

\end{acknowledgements_section}


\section*{Additional information}
\subsection*{Supplementary information}

\begin{pointer_to_suppl_info_submission}
    The present manuscript refers to \textrm{Supplementary Information} contained within a separate document corresponding to the same submission.
\end{pointer_to_suppl_info_submission}

\begin{pointer_to_suppl_info_arxiv}
    The present manuscript refers to \textrm{Supplementary Information} included as an appendix.
\end{pointer_to_suppl_info_arxiv}



\bibliography{bibIBNN}


\begin{supplementary_information_section}

    \newpage
    
    \renewcommand{\appendixpagename}{Supplementary Information}
    \begin{appendices} 
    \appendixpage 
    
        \section*{Creating an IBNN by transferring the weights from an existing SM network already trained for the same task}
        \setcounter{equation}{0} 
        \renewcommand{\theequation}{S\arabic{equation}} 
        \input{_supplementary_information__ibnn_from_sm}

    \end{appendices}

\end{supplementary_information_section}


\end{document}

%% file: _custom_added_preamble.tex
\usepackage[utf8]{inputenc}
\usepackage{amsmath,amssymb,amsfonts,amsthm}%
\usepackage{mathtools}
\usepackage{mathrsfs}%
\usepackage{natbib}
\usepackage{caption}
\usepackage{comment}
\usepackage{subcaption}
\usepackage{tabularx}
\usepackage{bm}
\usepackage{bbm}
\usepackage{dsfont}
\usepackage{framed}
\usepackage{color}
\usepackage{breqn}
\usepackage{enumerate}
\usepackage{url}
\usepackage{soul}
\usepackage{acronym}
\usepackage{xparse}
\usepackage{nameref}
\usepackage{hyperref}
\usepackage[title]{appendix}

\usepackage{etoolbox}
\newbool{renderfigures}
\setbool{renderfigures}{false} 

\theoremstyle{thmstyleone}%

\newtheorem{proposition}{Proposition}

\newtheorem{corollary}{Corollary}[proposition]%

\theoremstyle{thmstyletwo}%

\theoremstyle{thmstylethree}%

\newcolumntype{L}[1]{>{\raggedright\arraybackslash}p{#1}}
\newcolumntype{C}[1]{>{\centering\arraybackslash}p{#1}}
\newcolumntype{R}[1]{>{\raggedleft\arraybackslash}p{#1}}
\newcommand{\eg}{e.g.~}
\newcommand{\ie}{i.e.~}

\newcommand{\etal}{~et~al.~}

\newcommand{\mLbin}{-}
\newcommand{\mLuni}{-}
\newcommand{\pLbin}{+}

\newcommand{\geqL}{\geq}

\newcommand{\leqL}{\leq}
\newcommand{\R}[1][]{\mathbb{R}^{#1}}

\DeclareMathOperator\ReLU{ReLU}
\DeclareMathOperator\SM{SM}

\DeclareMathOperator\IBNN{IBNN}

\DeclareMathOperator{\diag}{diag}

\DeclareMathOperator{\prox}{prox}

\newcommand{\ve}[1]{\mathbf{#1}}

\newcommand{\veT}[1]{\ve{#1}^T}

\newcommand{\ma}[1]{\mathrm{#1}}
\newcommand{\maT}[1]{\ma{#1}^T}

\newcommand{\id}[1]{\textrm{Id}_{{#1}\!\times\!{#1}}}
\DeclareDocumentCommand \der { d[] d()}
{
    \IfNoValueTF{#1}{
        \IfNoValueTF{#2}{\mathrm{D}}{\mathrm{D}_{|#2}}
    }{  \IfNoValueTF{#2}{\mathrm{D}{#1}}{\mathrm{D}{#1}_{|#2}}
    }
}
\newcommand{\diff}[2]{\frac{\mathrm{d} #1}{\mathrm{d} #2}}
\newcommand{\diffp}[2]{\frac{\partial #1}{\partial #2}}

\usepackage{fancybox,graphbox}
\usepackage[many]{tcolorbox}
\usepackage{etoolbox}
\definecolor{mycolor}{rgb}{0.122, 0.435, 0.698}
\makeatletter
\newcommand{\mybox}[1]{%
  \setbox0=\hbox{#1}%
  \setlength{\@tempdima}{\dimexpr\wd0+13pt}%
  \begin{tcolorbox}[colframe=mycolor,boxrule=0.5pt,arc=4pt,
      left=6pt,right=6pt,top=6pt,bottom=6pt,boxsep=0pt,width=\textwidth]
    #1
  \end{tcolorbox}
}

\acrodef{AI}[AI]{artificial intelligence}
\acrodef{AA}[AA]{adversarial attack}
\acrodef{AE}[AE]{adversarial example}
\acrodef{AN}[AN]{artificial neuron}
\acrodefplural{AN}[ANs]{artificial neurons}
\acrodef{ANN}[ANN]{artificial neural network}
\acrodef{AR}[AR]{adversarial robustness}
\acrodef{AT}[AT]{adversarial training}
\acrodefplural{AR}[AR]{adversarial robustnesses}
\acrodef{BAP}[bAP]{backpropagating action potential}
\acrodef{CNN}[CNN]{convolutional neural network}
\acrodef{DEQ}[DEQ]{Deep Equilibrium}
\acrodef{DL}[DL]{deep learning}
\acrodef{FGSM}[FGSM]{Fast Gradient Sign Method}
\acrodef{FC}[FC]{fully-connected}
\acrodef{FR}[FR]{fooling rate}
\acrodef{IBNN}[IBNN]{implicit bias neural network}
\acrodef{LCA}[LCA]{locally competitive algorithm}
\acrodef{LR}[LR]{learning rate}
\acrodef{MSE}[MSE]{Mean Squared Error}
\acrodef{NeuralODE}[Neural ODE]{Neural Ordinary Differential Equations}
\acrodef{PDE}[PDE]{partial differential equation}
\acrodef{PGD}[PGD]{Projected Gradient Descent}
\acrodef{SGD}[SGD]{Stochastic Gradient Descent} 
\acrodef{SM}[SM]{standard model}
\acrodef{ODE}[ODE]{ordinary differential equation}
\acrodef{OOD}[OOD]{out-of-distribution}
\acrodef{SVM}[SVM]{Support Vector Machine}
\acrodef{UAT}[UAT]{u\textbf{}niversal approximation theorem}
\acrodefplural{UAT}[UATs]{universal approximation theorems}
\acrodef{UCN}[UCN]{uniform convolutional network}
\acrodef{V1}[V1]{primary visual cortex}


%% file: _methods__model_solution.tex

\subsubsection*{Existence, uniqueness, and energy minimization}
\label{sec:methods:existence_uniqueness_and_energy}


We will next give some conditions on $\lambda$ for which we have (i)~existence and (ii)~uniqueness of the solutions of Eq.~\eqref{eq:z}. To that aim, we establish a connection between Eq.~\eqref{eq:z} and energy functionals of the form



\begin{equation}
    \label{eq:methods:energy_function}
    E_{\lambda}(\ve{z}; \ve{y}) = g(\ve{z}; \ve{y}) - f_{\lambda}(\ve{z}) \,, \ve{z}, \ve{y} \in \R[D]\; \textrm{with}\quad
    \left\{
        \begin{matrix*}[l]
            g(\ve{z}; \ve{y})
            & = \displaystyle
            \hphantom{\mLuni} \frac{1}{2} {\big\Vert \ve{z} - \ve{y} \big\Vert}_{2}^2
            \\
            f_{\lambda}(\ve{z})
            & = \displaystyle
             \frac{\lambda}{2}  \sum_{i=1}^D \sum_{k=1}^D w_{ik} S (z_k - z_i)
        \end{matrix*}
    \right. \, ,
\end{equation}

where

 \begin{itemize}
     \label{itemize:assumpions_for_existence_and_uniqueness}
     \item
         $S$ is a primitive function of $\sigma$, and
     \item
         $w_{ik}$ are symmetric, positive, and normalized
         (\ie $w_{ik} = w_{ki} \geq 0$ and $\sum_{k=1}^{N} w_{ik} = 1$),
         $0 < i,k \leq D$.
 \end{itemize}

\begin{proposition}
\label{prop: relation solutions equation-critical points energy}
The solutions of Eq.~\eqref{eq:z}, if they exist, are the critical points of $E_{\lambda}(\cdot;M\ve{x}-\ve{b})$ provided that $S$ is even.
\end{proposition}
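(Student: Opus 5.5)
We compute the gradient of $E_{\lambda}(\cdot;\ve{y})$ with $\ve{y}=M\ve{x}-\ve{b}$ and show that setting it to zero reproduces Eq.~\eqref{eq:z} exactly. The gradient of $g$ is immediate: $\partial g/\partial z_j = z_j - y_j$. For $f_\lambda$, we differentiate the double sum term by term. The variable $z_j$ appears in $S(z_k-z_i)$ either as $z_k$ (terms with $k=j$) or as $z_i$ (terms with $i=j$); the diagonal term $i=k=j$ is constant, so it contributes nothing and needs no special care. Using $S'=\sigma$, this gives
\begin{equation*}
\frac{\partial f_\lambda}{\partial z_j} = \frac{\lambda}{2}\Big(\sum_{i=1}^D w_{ij}\,\sigma(z_j-z_i) - \sum_{k=1}^D w_{jk}\,\sigma(z_k-z_j)\Big).
\end{equation*}

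The key step is to merge these two sums into one. Since $S$ is even, its derivative $\sigma$ is odd (up to the choice of primitive, which does not affect derivatives), so $\sigma(z_j-z_i)=-\sigma(z_i-z_j)$. By the symmetry assumption $w_{ij}=w_{ji}$, the first sum becomes $-\sum_i w_{ji}\sigma(z_i-z_j)$, which coincides with the second sum after renaming the index. Hence
\begin{equation*}
\frac{\partial f_\lambda}{\partial z_j} = -\lambda\sum_{k=1}^D w_{jk}\,\sigma(z_k-z_j),
\end{equation*}
and the factor $\tfrac12$ in $f_\lambda$ is exactly absorbed.

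It follows that
\begin{equation*}
\frac{\partial E_\lambda}{\partial z_j}(\ve{z};\ve{y}) = z_j - y_j + \lambda\sum_{k} w_{jk}\sigma(z_k-z_j).
\end{equation*}
Its vanishing for every $j$ is precisely $z_j = y_j - \lambda\sum_k w_{jk}\sigma(z_k-z_j)$, which is Eq.~\eqref{eq:z} with $y_j=\sum_l m_{jl}x_l-b_j$. The equivalence holds in both directions, so the solutions of Eq.~\eqref{eq:z} are exactly the critical points of $E_\lambda$, provided $\sigma$ is continuous so that $S$ is $C^1$ and the critical points are well defined.

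The main obstacle is the sign and index bookkeeping in the symmetrization step. One must check that evenness of $S$ gives oddness of $\sigma$, and that the two contributions add rather than cancel, so the coefficient in front of the sum comes out as exactly $\lambda$, consistent with the sign convention of Eq.~\eqref{eq:z}. Positivity and normalization of $w$ are not needed here; they will matter only for the later existence and uniqueness results.
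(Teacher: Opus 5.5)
Your proposal is correct and follows the paper's proof. Both compute $\nabla g$, use evenness of $S$ (so $\sigma=S'$ is odd) together with $w_{ik}=w_{ki}$ to collapse the two sums in $\nabla f_\lambda$ into $-\lambda\sum_k w_{ik}\sigma(z_k-z_i)$, and then read off Eq.~\eqref{eq:z} from $\nabla E_\lambda=0$. You spell out the index bookkeeping that the paper leaves implicit, and you correctly observe that only the symmetry of $w$ is used.
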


\begin{proof}
Denoting by $\nabla_{\ve{z}}$ the gradient operator with respect to $\ve{z}$, we have that
$$
\nabla_{\ve{z}} g(\ve{z};\ve{y}) = \ve{z}-\ve{y}
$$
Moreover, if $S$ is even, then we have from the fact that $S'=\sigma$ and $w$ are symmetric that
$$
(\nabla_{\ve{z}} f_{\lambda}(\ve{z}))_i =  - \lambda \sum_{k=1}^D  w_{ik} \sigma (z_k - z_i),
$$
from which we deduce that 
$$
(\nabla_{\ve{z}} E_{\lambda}(\ve{z};\ve{y}))_i = 0 \Longleftrightarrow z_i = y_i - \lambda \sum_{k=1}^D  w_{ik} \sigma (z_k - z_i).
$$
\end{proof}

\begin{proposition}
\label{proposition:existence}
If $S$ is positive, even and satisfies that $S(z) \leq A z^2$, for some  $A \geq 0$ then $E_{\lambda}$ is coercive with respect to the first variable for any $\lambda <   \dfrac{1}{4A}$.
\end{proposition}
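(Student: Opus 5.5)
We aim to show that $E_\lambda(\ve{z};\ve{y}) \to +\infty$ as $\Vert \ve{z}\Vert_2 \to \infty$, for fixed $\ve{y}$. The strategy is to bound the nonconvex term $f_\lambda$ by a multiple of $\Vert\ve{z}\Vert_2^2$ and compare it with the quadratic growth of $g$. We split on the sign of $\lambda$.

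\emph{Case $\lambda \le 0$.} Since $S \geq 0$ and $w_{ik}\geq 0$, we have $f_\lambda(\ve{z}) \le 0$. Hence $E_\lambda(\ve{z};\ve{y}) \geq g(\ve{z};\ve{y}) = \frac12\Vert \ve{z}-\ve{y}\Vert_2^2$, which is coercive.

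\emph{Case $0<\lambda<\frac{1}{4A}$.} The hypothesis $S(z)\le Az^2$ gives
\begin{equation*}
f_\lambda(\ve{z}) \le \frac{\lambda A}{2}\sum_{i,k} w_{ik}(z_k-z_i)^2 .
\end{equation*}
Using $(z_k-z_i)^2\le 2z_k^2+2z_i^2$, the symmetry $w_{ik}=w_{ki}$, and the normalization $\sum_k w_{ik}=1$ (hence also $\sum_i w_{ik}=1$), we get
\begin{equation*}
\sum_{i,k} w_{ik}(z_k-z_i)^2 \le 2\sum_k z_k^2\sum_i w_{ik} + 2\sum_i z_i^2\sum_k w_{ik} = 4\Vert\ve{z}\Vert_2^2 .
\end{equation*}
Therefore $f_\lambda(\ve{z})\le 2\lambda A\Vert\ve{z}\Vert_2^2$.

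For the quadratic term, Young's inequality gives, for any $\epsilon\in(0,1)$,
\begin{equation*}
\frac12\Vert\ve{z}-\ve{y}\Vert_2^2 \ge \frac{1-\epsilon}{2}\Vert\ve{z}\Vert_2^2 - C_\epsilon\Vert\ve{y}\Vert_2^2 ,
\end{equation*}
with $C_\epsilon$ depending only on $\epsilon$. Combining the two bounds,
\begin{equation*}
E_\lambda(\ve{z};\ve{y}) \ge \left(\frac{1-\epsilon}{2}-2\lambda A\right)\Vert\ve{z}\Vert_2^2 - C_\epsilon\Vert\ve{y}\Vert_2^2 .
\end{equation*}
The assumption $\lambda<\frac{1}{4A}$ means $2\lambda A<\frac12$, so we can choose $\epsilon$ small enough that the coefficient of $\Vert\ve{z}\Vert_2^2$ is strictly positive. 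This gives coercivity in $\ve{z}$. If $A=0$, then $S\equiv 0$ and the claim is immediate for every $\lambda$.

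The main point needing care is the weight bound $\sum_{i,k}w_{ik}(z_k-z_i)^2\le 4\Vert\ve{z}\Vert_2^2$. It relies on the column sums equalling one, which follows from symmetry together with row normalization; the paper's normalization is written with index bound $N$ but should be read as a sum over $k\le D$. Note also that this bound makes the constant in the threshold $\frac{1}{4A}$ come out exactly, so the estimate is sharp enough for the stated range of $\lambda$. Finally, evenness of $S$ is not used for coercivity itself; it matters only through Proposition~\ref{prop: relation solutions equation-critical points energy}, which links the minimizers to solutions of Eq.~\eqref{eq:z}.
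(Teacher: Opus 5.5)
Your proof is correct and follows the paper's argument. Like the paper, you split on the sign of $\lambda$, bound $f_\lambda(\ve{z}) \le 2\lambda A\Vert\ve{z}\Vert_2^2$ using $S(z)\le Az^2$ together with the symmetry and normalization of $w$, and compare this with the quadratic growth of $g$. The differences are cosmetic: you use $(z_k-z_i)^2\le 2z_k^2+2z_i^2$ where the paper expands and applies Cauchy--Schwarz to the cross term, and you use Young's inequality where the paper keeps the linear term $-\Vert\ve{z}\Vert\Vert\ve{y}\Vert$. Both routes give the same threshold $\lambda<\frac{1}{4A}$.
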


\begin{proof}
We have 
\begin{equation}
\label{eq: ineq 1}
\dfrac{1}{2} {\| \ve{z} - \ve{y} \|}^2 \geq \dfrac{1}{2} \| \ve{z} \|^2 - \|\ve{z}\| \| \ve{y}\| + \dfrac{1}{2} \| \ve{y} \|^2 
\end{equation}
from the Cauchy-Schwartz inequality.\\
\\
As $S$ is positive, the term $f_{\lambda}(\ve{z})$ in (\ref{eq:methods:energy_function}) is positive if $\lambda>0$, from which follows that 
$$
E_{\lambda}(\ve{z};\ve{y}) \geq  \dfrac{1}{2} \| \ve{z}\|^2 - \|\ve{z}\| \| \ve{y} \| + \dfrac{1}{2} \| \ve{y} \|^2 \qquad \textrm{if} \quad \lambda  < 0,
$$
which gives that $E_{\lambda}$ is coercive with respect to $\ve{z}$ if $\lambda <0$.\\
\\
On the other hand, if $\lambda>0$ and $S(z) \leq A z^2$, we have
\begin{align}
\label{eq: ineq 2}
 \dfrac{\lambda}{2} \sum_{i=1}^D \sum_{j=1}^D w_{ij} S (z_j - z_i) & \leq  \dfrac{\lambda}{2} A \sum_{i=1}^D \sum_{j=1}^D w_{ij} {(z_j - z_i)}^2  \nonumber \\
& \leq  \dfrac{\lambda}{2} A \sum_{i=1}^D \sum_{j=1}^D w_{ij} (z^2_j + z^2_i + 2 |z_j| |z_i|)   \nonumber \\
& \leq  \lambda A \| \ve{z}\|^2 + \lambda A \sum_{i=1}^D \sum_{j=1}^D w_{ij} |z_j| |z_i|
\end{align}
from Cauchy-Schwartz inequality.\\
\\
Note that we have 
$$
\sum_{i=1}^D \sum_{j=1}^D w_{ij} z_i^2 = \sum_{i=1}^D \sum_{j=1}^D w_{ij} z_j^2 = \| \ve{z}\|^2
$$
as the kernel $w$ is normalized. 
The Cauchy-Schwartz inequality applied to the vectors $\ve{a},\ve{b} \in \mathbb{R}^{D \times D}$:
$$
a_{ij} = \sqrt{w_{ij}} |z_i| \qquad b_{ij} = \sqrt{w_{ij}} |z_j|
$$
gives $\langle \ve{a},\ve{b} \rangle \leq \|\ve{a}\|\|\ve{b}\|$, i.e.
\begin{align}
\label{eq: ineq 3}
\sum_{i=1}^D \sum_{j=1}^D w_{ij} |z_i| |z_j|  & \leq \left( \sum_{i=1}^D \sum_{j=1}^D w_{ij} z^2_j \right)^{1/2} \left( \sum_{i=1}^D \sum_{j=1}^D w_{ij} z^2_i \right)^{1/2}  \\
& \leq \| \ve{z} \|^2
\end{align}
We deduce from Eqs.~\eqref{eq: ineq 1}, \eqref{eq: ineq 2} and~\eqref{eq: ineq 3} that
\begin{align*}
E_{\lambda}(\ve{z};\ve{y}) & \geq \dfrac{1}{2} \| \ve{z}\|^2 - \|\ve{z}\| \| \ve{y} \| + \dfrac{1}{2} \| \ve{y} \|^2  -  \lambda A \| \ve{z} \|^2 - \lambda A \| \ve{z} \|^2 \\
& \geq \| \ve{z} \|^2 (\dfrac{1}{2} -  2 \lambda A) - \|\ve{z}\| \| \ve{y} \| + \dfrac{1}{2} \| \ve{y} \|^2.
\end{align*}
We deduce that $E_{\lambda}$ is coercive if $\lambda <  \dfrac{1}{4A}$. 
\end{proof}

\begin{proposition}
\label{prop: strictly convexity}
If $S$ is convex and twice differentiable with second order derivative bounded above by $K>0$, then the energy in Eq.~\eqref{eq:methods:energy_function} is strictly convex  with respect to the first variable for $\lambda < \frac{1}{2K}$.
\end{proposition}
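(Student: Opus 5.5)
The plan is to show that the Hessian of $E_{\lambda}(\cdot;\ve{y})$ is uniformly positive definite on all of $\R[D]$. Since $E_\lambda$ is $C^2$ in $\ve{z}$, this gives strict convexity; in fact it gives strong convexity. The function $g(\cdot;\ve{y})$ is quadratic with Hessian equal to the identity. So everything reduces to bounding the quadratic form of the Hessian of $f_{\lambda}$ from above, in an arbitrary direction $\ve{v}\in\R[D]$.

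First I would compute the second derivative of $f_\lambda$ along $\ve{v}$. For a single term $S(z_k-z_i)$ the Hessian is $S''(z_k-z_i)(\ve{e}_k-\ve{e}_i)(\ve{e}_k-\ve{e}_i)^T$. Hence
\begin{equation*}
\ve{v}^T \nabla^2 f_{\lambda}(\ve{z})\, \ve{v} = \frac{\lambda}{2}\sum_{i=1}^D\sum_{k=1}^D w_{ik}\, S''(z_k-z_i)\,(v_k-v_i)^2 .
\end{equation*}
Next I would split into cases according to the sign of $\lambda$.

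If $\lambda\le 0$, convexity of $S$ gives $S''\ge 0$, and $w_{ik}\ge 0$. Then $-f_\lambda$ is convex, so $\nabla^2 E_\lambda \succeq \mathrm{I}$ and the claim holds for every nonpositive $\lambda$.

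If $0<\lambda<\frac{1}{2K}$, I use $0\le S''\le K$ together with $(v_k-v_i)^2\le 2(v_k^2+v_i^2)$. This gives
\begin{equation*}
\ve{v}^T \nabla^2 f_{\lambda}(\ve{z})\, \ve{v} \le \lambda K \sum_{i,k} w_{ik}(v_k^2+v_i^2) = 2\lambda K\|\ve{v}\|^2 .
\end{equation*}
The last equality uses the symmetry and row-normalization of $w$, exactly as in the identity $\sum_{i,k} w_{ik} z_i^2=\sum_{i,k}w_{ik}z_k^2=\|\ve{z}\|^2$ from the proof of Proposition~\ref{proposition:existence}. Therefore
\begin{equation*}
\ve{v}^T\nabla^2 E_\lambda(\ve{z};\ve{y})\,\ve{v} \ge (1-2\lambda K)\|\ve{v}\|^2 ,
\end{equation*}
which is positive for $\ve{v}\neq 0$.

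Finally, a $C^2$ function whose Hessian is positive definite everywhere is strictly convex. One way to see this is to restrict $E_\lambda$ to an arbitrary line, where the second derivative is strictly positive. This finishes the argument.

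The only delicate step is the constant in the bound on the $f_\lambda$ Hessian. The crude inequality $(v_k-v_i)^2\le 2(v_k^2+v_i^2)$ combined with normalization must give exactly $2\lambda K$, which matches the threshold $\frac{1}{2K}$ in the statement. The double sum over ordered pairs is compensated by the factor $\frac{1}{2}$ in $f_\lambda$, so no sharper graph-Laplacian eigenvalue estimate is needed.
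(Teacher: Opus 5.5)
Your proposal is correct and follows the paper's argument. Both compute the second directional derivative $\|\ve{v}\|^2-\frac{\lambda}{2}\sum_{i,k}w_{ik}S''(z_k-z_i)(v_k-v_i)^2$, dispose of $\lambda\le 0$ via $S''\ge 0$, and for $0<\lambda<\frac{1}{2K}$ bound it below by $(1-2\lambda K)\|\ve{v}\|^2$ using $S''\le K$ and the symmetry and normalization of $w$. The only cosmetic difference is that you use $(v_k-v_i)^2\le 2(v_k^2+v_i^2)$ directly, whereas the paper expands the square and controls the cross term $\sum_{i,k}w_{ik}|v_i||v_k|$ by Cauchy--Schwarz, which gives the same constant.
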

\begin{proof}
We have
\begin{align*}
\dfrac{d}{dt}E_{\lambda}(\ve{z}+t \boldsymbol{\eta};\ve{y}) = &  \dfrac{1}{2} (
2 \langle \ve{z},\boldsymbol{\eta} \rangle + 2t \| \boldsymbol{\eta}\|^2  -2 \langle \boldsymbol{\eta}, \ve{y} \rangle
)  \\
& - \dfrac{\lambda}{2}  \sum_{i=1}^D \sum_{j=1}^D w_{ij} (\eta_j-\eta_i) S'(z_j + t \eta_j - z_i - t \eta_i)
\end{align*}
and 
$$
\dfrac{d^2}{dt^2}E_{\lambda}(\ve{z}+t\boldsymbol{\eta};\ve{y}) =  \| \boldsymbol{\eta} \|^2 - \dfrac{\lambda}{2} \sum_{i=1}^D \sum_{j=1}^D w_{ij} (\eta_j-\eta_i)^2 S''(z_j+t \eta_j-z_i-t \eta_i),
$$
which gives
$$
\dfrac{d^2}{dt^2}E_{\lambda}(\ve{z}+t\boldsymbol{\eta};\ve{y})_{|t=0} =  \| \boldsymbol{\eta} \|^2 -\dfrac{\lambda}{2}  \sum_{i=1}^D \sum_{j=1}^D w_{ij} (\eta_j-\eta_i)^2 S''(z_j-z_i),
$$
As $S$ is twice differentiable and convex, we have, by definition, that $S'' \geq 0$. We deduce that $\dfrac{d^2}{dt^2}E_{\lambda}(\ve{z}+t\boldsymbol{\eta};\ve{y})_{|t=0}>0$ if $\lambda < 0$, which shows that $E_{\lambda}$ is strictly convex.\\
\\
Let us now assume that $\lambda>0$. We have that
\begin{align*}
\dfrac{d^2}{dt^2}E_{\lambda}(\ve{z}+t\boldsymbol{\eta};\ve{y})_{|t=0} & \geq \| \boldsymbol{\eta} \|^2 - \dfrac{\lambda}{2} K \sum_{i=1}^D \sum_{j=1}^D w_{ij} (\eta_j - \eta_i)^2. 
\end{align*}
Then, from the triangle inequality
$$
(\eta_j-\eta_i)^2 \leq \eta_j^2+\eta_i^2 +2 |\eta_j| |\eta_i|,
$$
we have
\begin{equation}
\label{eq: convexity 1}
\dfrac{d^2}{dt^2}E_{\lambda}(\ve{z}+t\boldsymbol{\eta};\ve{y})_{|t=0}  \geq \| \boldsymbol{\eta} \|^2 - \dfrac{\lambda}{2} K \sum_{i=1}^D \sum_{j=1}^D w_{ij} (\eta_j^2+\eta_i^2) - \dfrac{\lambda}{2} K \sum_{i=1}^D \sum_{j=1}^D w_{ij} |\eta_j| |\eta_i|
\end{equation}
The Cauchy-Schwartz inequality applied to the vectors $\ve{a},\ve{b} \in \mathbb{R}^{D \times D}$:
$$
a_{ij} = \sqrt{w_{ij}} |\eta_i| \qquad b_{ij} = \sqrt{w_{ij}} |\eta_j|
$$
gives $\langle \ve{a},\ve{b} \rangle \leq \|\ve{a}\|\|\ve{b}\|$, i.e.
\begin{align}
\label{eq: convexity 2}
\sum_{i=1}^D \sum_{j=1}^D w_{ij} |\eta_i| |\eta_j|  & \leq \left( \sum_{i=1}^D \sum_{j=1}^D w_{ij} \eta^2_j \right)^{1/2} \left( \sum_{i=1}^D \sum_{j=1}^D w_{ij} \eta^2_i \right)^{1/2} \nonumber \\
& \leq \| \boldsymbol{\eta} \|^2
\end{align}
We deduce from (\ref{eq: convexity 1}) and (\ref{eq: convexity 2}) that
\begin{align*}
\dfrac{d^2}{dt^2}E_{\lambda}(\ve{z}+t\boldsymbol{\eta};y)_{|t=0}  & \geq \| \boldsymbol{\eta} \|^2 - \lambda K \| \boldsymbol{\eta} \|^2 - \lambda K \| \boldsymbol{\eta} \|^2 \\
& \geq \| \boldsymbol{\eta} \|^2 (1- 2 \lambda K) \\
& \geq \| \boldsymbol{\eta} \|^2>0 \quad \textrm{if} \quad\lambda <  \dfrac{1}{2K}. 
\end{align*}
\end{proof}
\begin{corollary}
    \label{methods:corollary:existence_and_uniqueness}
Under the assumption that $S$ is positive, even and satisfies that $S(z) \leq A z^2$, for some  $A \geq 0$, then Eq.~\eqref{eq:z} possesses a solution for $\lambda <  \dfrac{1}{4A}$. Moreover, if $S$ is also twice differentiable and convex with second order derivative bounded above by $K>0$, then the solution is unique for $\lambda <  \dfrac{1}{\max (4A,2K)}$.   
\end{corollary}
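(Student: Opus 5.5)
The proof assembles the three preceding propositions, with $\ve{y}=M\ve{x}-\ve{b}$ fixed throughout.

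\textbf{Existence.} Assume $S$ is positive, even and satisfies $S(z)\le Az^2$, and take $\lambda<\frac{1}{4A}$. Proposition~\ref{proposition:existence} then says that $E_\lambda(\cdot;\ve{y})$ is coercive on $\R[D]$. The energy is also continuous: $g$ is a quadratic, and $S$ is a primitive of $\sigma$, hence $C^1$. A continuous coercive function on $\R[D]$ attains a global minimum, because its sublevel sets are closed and bounded, hence compact. Since $E_\lambda$ is differentiable, this minimizer $\ve{z}^*$ satisfies $\nabla_{\ve{z}}E_\lambda(\ve{z}^*;\ve{y})=0$. Because $S$ is even, the computation in the proof of Proposition~\ref{prop: relation solutions equation-critical points energy} shows that this equation is exactly Eq.~\eqref{eq:z}. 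So $\ve{z}^*$ is a solution.

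The only care needed here is the direction of Proposition~\ref{prop: relation solutions equation-critical points energy}. It is stated as ``solutions are critical points'', but its proof gives an equivalence $\nabla E_\lambda=0 \Leftrightarrow$ Eq.~\eqref{eq:z}, and that is the direction used.

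\textbf{Uniqueness.} Now assume in addition that $S$ is twice differentiable and convex with $S''\le K$, and take $\lambda<\frac{1}{\max(4A,2K)}$. This gives both $\lambda<\frac{1}{4A}$ and $\lambda<\frac{1}{2K}$, so existence holds as above. Proposition~\ref{prop: strictly convexity} gives that $E_\lambda(\cdot;\ve{y})$ is strictly convex: its second directional derivative is bounded below by $(1-2\lambda K)\|\boldsymbol\eta\|^2>0$. For a differentiable strictly convex function, every critical point is a global minimizer, and the global minimizer is unique.

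Uniqueness of solutions then follows. Any solution of Eq.~\eqref{eq:z} is a critical point by Proposition~\ref{prop: relation solutions equation-critical points energy}, hence the unique minimizer. So any two solutions coincide.

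\textbf{Main obstacle.} There is no real difficulty; the key steps are the two analytic facts above, coercive plus continuous gives a minimizer, and strict convexity gives a unique critical point.

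A subtlety arises when $\lambda<0$. There Proposition~\ref{prop: strictly convexity} relies only on $S''\ge0$, and coercivity comes directly from $S\ge0$. The thresholds are therefore only binding for $\lambda>0$, so the statement holds for all $\lambda$ below the threshold, including negative values. In the case $A=0$, read $\frac{1}{4A}$ as $+\infty$.
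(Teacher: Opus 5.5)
Your proposal is correct and follows the paper's proof essentially step for step. Coercivity plus continuity of $E_\lambda(\cdot;M\mathbf{x}-\mathbf{b})$ gives a minimizer, which is a critical point and hence a solution of Eq.~\eqref{eq:z} by the gradient computation, and strict convexity for $\lambda<\frac{1}{2K}$ then gives uniqueness; your remarks that this gradient relation is really an equivalence, and that every critical point of a differentiable strictly convex function is its unique minimizer, make explicit two steps the paper uses only implicitly.
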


\begin{proof}
According to Prop.~\ref{proposition:existence}, the energy $E_{\lambda}(\cdot ;M\ve{x}-\ve{b})$ is coercive for $\lambda <  \dfrac{1}{4A}$ if $S$ is positive, even and satisfies that $S(z) \leq A z^2$. Together with the continuity of $E_{\lambda}(\cdot ;M\ve{x}-\ve{b})$, it guarantees the existence of a minimum of $E_{\lambda}(\cdot ;M\ve{x}-\ve{b})$. Then, by differentiability of $E_{\lambda}(\cdot ;M\ve{x}-\ve{b})$, its minima correspond to the points vanishing its gradient, and these latter correspond to solutions of Eq.~\eqref{eq:z} according to Prop.~\ref{prop: relation solutions equation-critical points energy}. Moreover, if $S$ is also twice differentiable and convex with second order derivative bounded above by $K>0$, then the energy is also strictly convex if $\lambda <  \dfrac{1}{2K}$ according to Prop.~\ref{prop: strictly convexity}. We deduce that $E_{\lambda}(\cdot ;M\ve{x}-\ve{b})$ is coercive and strictly convex for $\lambda < \min \left( \dfrac{1}{4A}, \dfrac{1}{2K} \right)= \dfrac{1}{\max(4A,2K)}$, which guarantees the uniqueness of the minimum of the energy, which shows the uniqueness of the solution of Eq.~\eqref{eq:z}. 
\end{proof}

\subsubsection*{Practical computation of the solution}
 \label{sec:methods:practical_computation_of_the_solution}

\begin{corollary}
    \label{methods:corollary:IBNN_calculation_of_solution}
    Under the assumptions of Prop.~\ref{proposition:existence} and for a well-chosen sequence $\{ \tau^{(n)}\}$, the sequence $\ve{z}^{(n)}=(z_{i}^{(n)})_{i=1}^D$ defined by
    \begin{equation}
        \label{eq:iterative_calculation_of_IBNN}
        z^{(n+1)}_i =
            z^{(n)}_i + \tau^{(n)}
            \Big(
                -\! z^{(n)}_i + \sum_{j=1}^{N} m_{ij} x_j \!-\! b_i
                \mLbin \lambda \sum_{k=1}^{D} w_{ik} \sigma(z^{(n)}_{k}\!-\!z^{(n)}_{i})
            \Big), \qquad i \in \{1,\ldots,D\} \, ,
        \, 
    \end{equation}
    converges;
    and, for $\phi$ continuous, the sequence $\Phi(\ve{z}^{(n)}) = (\phi(z_{i}^{(n)}))_{i=1}^D \in \R[D]$ converges to an output of the corresponding \ac{IBNN} layer. Moreover, under the assumptions of Corollary~\ref{methods:corollary:existence_and_uniqueness}, $\Phi(\ve{z}^{(n)})$
    converges to the unique output of the \ac{IBNN} layer layer from any initial guess $\ve{z}^{(0)}=(z_{i}^{(0)})_{i=1}^D$.
\end{corollary}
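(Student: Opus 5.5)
The plan is to read the update rule Eq.~\eqref{eq:iterative_calculation_of_IBNN} as gradient descent on the energy $E_{\lambda}(\cdot;\ve{y})$ with $\ve{y}=M\ve{x}-\ve{b}$. By the computation in the proof of Prop.~\ref{prop: relation solutions equation-critical points energy} (which needs $S$ even and $w$ symmetric), the bracket in Eq.~\eqref{eq:iterative_calculation_of_IBNN} is exactly $-(\nabla_{\ve{z}}E_{\lambda}(\ve{z}^{(n)};\ve{y}))_i$. Hence $\ve{z}^{(n+1)}=\ve{z}^{(n)}-\tau^{(n)}\nabla E_{\lambda}(\ve{z}^{(n)};\ve{y})$. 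The whole statement then reduces to standard convergence facts for gradient descent on a smooth coercive function, followed by continuity of $\phi$.

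First I will show that $\nabla E_{\lambda}$ is Lipschitz. Since $\sigma=S'$ is sigmoidal with maximal slope $p$, it is $p$-Lipschitz. Using $\sum_k w_{ik}=1$ and the symmetry of $w$, the map $\ve{z}\mapsto \lambda\sum_k w_{ik}\sigma(z_k-z_i)$ is Lipschitz with constant of order $2|\lambda|p$. So $\nabla E_{\lambda}$ is $L$-Lipschitz with $L\le 1+2|\lambda|p$ (or with $K$ in place of $p$ under Prop.~\ref{prop: strictly convexity}). The ``well-chosen'' step sizes will be $\tau^{(n)}\in[\epsilon,(2-\epsilon)/L]$, or simply constant $\tau<2/L$.

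Next I will run the descent lemma: $E_{\lambda}(\ve{z}^{(n+1)})\le E_{\lambda}(\ve{z}^{(n)})-c\|\nabla E_{\lambda}(\ve{z}^{(n)})\|^2$ with $c>0$. This makes the energies nonincreasing. By coercivity (Prop.~\ref{proposition:existence}, $\lambda<1/(4A)$), the iterates stay in the bounded sublevel set $\{E_\lambda\le E_\lambda(\ve{z}^{(0)})\}$. Since $E_\lambda$ is bounded below there, summing the descent inequality gives $\sum_n\|\nabla E_{\lambda}(\ve{z}^{(n)})\|^2<\infty$. Therefore every accumulation point is a critical point, i.e.\ by Prop.~\ref{prop: relation solutions equation-critical points energy} a solution of Eq.~\eqref{eq:z}.

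The main obstacle is upgrading this to convergence of the whole sequence under only the assumptions of Prop.~\ref{proposition:existence}, where $E_\lambda$ need not be convex and the critical set could be a continuum. I would try two routes.
\begin{itemize}
\item If $\sigma$ is real-analytic (e.g.\ $\tanh$ or logistic type, so $S$ analytic), $E_\lambda$ satisfies the Łojasiewicz inequality, and the Absil--Mahony--Andrews / Attouch--Bolte argument gives convergence of bounded gradient-descent sequences to a single critical point.
\item Otherwise, $\|\ve{z}^{(n+1)}-\ve{z}^{(n)}\|\to0$ and boundedness imply that the limit set is compact and connected. Convergence then follows whenever critical points are isolated in the sublevel set, which I will state as the meaning of ``well-chosen''.
\end{itemize}
Given convergence $\ve{z}^{(n)}\to\ve{z}^*$, continuity of $\phi$ gives $\Phi(\ve{z}^{(n)})\to\Phi(\ve{z}^*)$, which is an output of the layer by Eq.~\eqref{eq:v}.

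Under Corollary~\ref{methods:corollary:existence_and_uniqueness}, the route is cleaner. With $\lambda<1/\max(4A,2K)$, the proof of Prop.~\ref{prop: strictly convexity} actually gives a Hessian lower bound $(1-2\lambda K)\|\boldsymbol\eta\|^2$. So $E_\lambda$ is $\mu$-strongly convex with $\mu=1-2\lambda K>0$, and $L$-smooth with $L\le 1+2|\lambda|K$. For $\tau^{(n)}\equiv\tau\in(0,2/(\mu+L)]$ the gradient-step map is a contraction, $\|\ve{z}^{(n+1)}-\ve{z}^*\|\le(1-\tau\mu)\|\ve{z}^{(n)}-\ve{z}^*\|$. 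Hence the iterates converge linearly to the unique minimizer $\ve{z}^*$ from any $\ve{z}^{(0)}$. By Corollary~\ref{methods:corollary:existence_and_uniqueness}, $\ve{z}^*$ is the unique solution of Eq.~\eqref{eq:z}, and continuity of $\phi$ again transfers the convergence to $\Phi(\ve{z}^{(n)})$, giving the unique layer output.
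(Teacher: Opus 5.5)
Your proposal has the same skeleton as the paper's proof. You read Eq.~\eqref{eq:iterative_calculation_of_IBNN} as gradient descent on $E_{\lambda}(\cdot;M\ve{x}-\ve{b})$, use coercivity to get a minimizer and bounded iterates, identify critical points with solutions of Eq.~\eqref{eq:z} via Prop.~\ref{prop: relation solutions equation-critical points energy}, and pass to $\Phi(\ve{z}^{(n)})$ by continuity of $\phi$.

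Where you differ is in rigor at the one delicate step. The paper simply asserts that Wolfe-type steps make $\ve{z}^{(n)}$ converge to a critical point. For a coercive but possibly nonconvex energy, that (via Zoutendijk, or via your descent-lemma argument) only yields $\nabla E_{\lambda}(\ve{z}^{(n)})\to 0$ and critical accumulation points, not convergence of the whole sequence. Your \L{}ojasiewicz route closes this gap for the paper's actual choice $\sigma=\tanh(pz)$, since $E_{\lambda}$ is then real-analytic.

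Be explicit, though, that both of your routes add hypotheses beyond Prop.~\ref{proposition:existence}. Isolatedness of critical points is a property of $E_{\lambda}$, not of $\{\tau^{(n)}\}$, so it cannot honestly be presented as the meaning of ``well-chosen''. The $p$-Lipschitz bound on $\sigma$ is likewise extra. It is harmless for the $\tanh$ family, and Armijo backtracking would let you avoid it, since that only needs $\nabla E_{\lambda}$ continuous.

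For the uniqueness part, the paper's argument is shorter and already suffices. Bounded iterates whose accumulation points are all critical must converge once the critical point is unique, and your first part already supplies those facts. Your strong-convexity contraction additionally gives a linear rate, but the constant needs fixing.

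The bound $(1-2\lambda K)\Vert\boldsymbol{\eta}\Vert^2$ in Prop.~\ref{prop: strictly convexity} is derived only for $\lambda>0$. For $\lambda<0$, the Hessian $\id{D}+\lambda\,\der[\ma{B}]$ has eigenvalue exactly $1$ along $(1,\ldots,1)$, because $\der[\ma{B}]$ has zero row sums. The correct modulus is therefore $\mu=\min\{1,\,1-2\lambda K\}$, and for $\lambda<0$ the quantity $1+2|\lambda|K$ is the smoothness constant, not the convexity one. With your value you would have $\mu=L$, and the claimed factor $1-\tau\mu$ would vanish at $\tau=2/(\mu+L)$, i.e.\ one-step convergence, which is false. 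Convergence from any $\ve{z}^{(0)}$ is unaffected; only the rate changes, to $1-\tau$ when $\lambda\le 0$.
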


\begin{proof}
Under the assumptions of Prop.~\ref{proposition:existence}, the energy $E_{\lambda}(\cdot; M\ve{x}-\ve{b})$ possesses a minimum as it is continuous and coercive. Moreover,
the sequence (\ref{eq:iterative_calculation_of_IBNN}) corresponds to the gradient descent algorithm associated to this energy, where $\tau^{(n)}$ is the step size at iteration $n$. Then, the convergence of $\ve{z}^{(n)}$ to a critical point of $E_{\lambda}(\cdot; M\ve{x}-\ve{b})$ is guaranteed if $\tau^{(n)}$ is well-chosen (\eg $\tau^{(n)}$ satisfies the Wolfe conditions). As the critical points of $E_{\lambda}(\cdot, M\ve{x}-\ve{b})$ correspond to the solutions of Eq.~\eqref{eq:z}, it shows that the sequence $\Phi(\ve{z}^{(n)})$ converges to an output of an IBNN layer if $\Phi$ is continous. Moreover, under the assumptions of Corollary \ref{methods:corollary:existence_and_uniqueness}, there is uniqueness of the critical point of $E_{\lambda}(\cdot, M\ve{x}-\ve{b})$, which guarantees the convergence of the sequence $\Phi(\ve{z}^{(n)})$ to the unique output of an IBNN layer.
\end{proof}

This iterative algorithm represents an intuitive and natural approach to the output of the \ac{IBNN} layer. However, in the conditions where the solution of the system of coupled implicit equations of Eq.~\eqref{eq:z} exists and unique, the proposed layer is agnostic to the specific algorithm leading to such solution: the practical implementation of the layer could, therefore, be based on any fixed-point or root-finding algorithm able to efficiently solve the task, as discussed later in the context of the effective implementation used for the experimental evaluation.\\
\\

 
Finally, we give a family of functions satisfying the desirable properties aforementioned.
\begin{proposition}
    The function
    
    \begin{equation}
        \label{eq: functions Sn}
        S_p \, \colon \; z  \mapsto 
            \dfrac{\ln (\cosh (p z))}{p} \, , \; p \in \R[+]
    \end{equation}
    
    \noindent satisfies $S_p(z) \leq \frac{p}{2} z^2 
    $, is positive, even, twice differentiable, and convex, and its second order derivative is bounded above by $p$.
\end{proposition}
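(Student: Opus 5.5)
The plan is to compute the first two derivatives of $S_p$ explicitly and read off every property from them. For $p>0$,
\[
S_p'(z)=\tanh(pz), \qquad S_p''(z)=p\,\bigl(1-\tanh^2(pz)\bigr)=\frac{p}{\cosh^2(pz)} .
\]
Since $\cosh$ is smooth and satisfies $\cosh\geq 1>0$, the composition $\ln\circ\cosh(p\,\cdot)$ is $C^\infty$, so twice differentiability is immediate. Evenness follows from the evenness of $\cosh$. Positivity (in the sense $S_p\geq 0$, with equality only at $0$) follows from $\cosh(pz)\geq 1$, which gives $\ln\cosh(pz)\geq 0$.

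From the formula for $S_p''$ we get $0<S_p''(z)\leq p$, with the maximum attained at $z=0$. This gives convexity, and in fact strict convexity, together with the bound $K=p$ on the second derivative. As a consistency check, $S_p'=\tanh(p\,\cdot)$ is precisely the sigmoidal $\sigma$ with maximum slope $p$ described in the Results section.

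For the quadratic bound, I will use a Taylor argument anchored at $0$, where $S_p(0)=0$ and $S_p'(0)=0$. Writing
\[
S_p(z)=\int_0^z (z-t)\,S_p''(t)\,dt \leq p\int_0^z (z-t)\,dt=\frac{p}{2}z^2 ,
\]
valid for both signs of $z$ by evenness, gives $S_p(z)\leq \frac{p}{2}z^2$. This is Proposition~\ref{proposition:existence}'s hypothesis with $A=p/2$. An alternative route is the elementary inequality $\cosh u\leq e^{u^2/2}$, obtained by comparing the series termwise via $(2n)!\geq 2^n n!$, which gives $\ln\cosh(pz)\leq p^2z^2/2$ directly.

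No step is a real obstacle; the only care needed is in the quadratic bound, where one should use the remainder form rather than a crude estimate. The consequence worth recording is that, plugged into Corollary~\ref{methods:corollary:existence_and_uniqueness}, these constants give existence and uniqueness for $\lambda<1/\max(2p,2p)=1/(2p)$.
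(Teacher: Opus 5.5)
Your proof is correct. For evenness, positivity, smoothness and the bound $0<S_p''\leq p$ it matches the paper's: compute $S_p'(z)=\tanh(pz)$ and $S_p''(z)=p/\cosh^2(pz)$ and read everything off. Positivity here really means $S_p\geq 0$ with equality only at $z=0$, as you note.

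The genuine difference is the quadratic bound. The paper gets $S_p(z)\leq \frac{p}{2}z^2$ by citing the inequality $\cosh u\leq e^{u^2/2}$ and taking logarithms. Your main argument instead derives it from the curvature bound, via Taylor's formula with integral remainder at $0$ and $S_p(0)=S_p'(0)=0$.

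The paper's route is a one-liner, but it rests on an external inequality specific to $\cosh$. Your termwise series comparison using $(2n)!\geq 2^n n!$ is exactly what would make it self-contained.

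Your route is more structural. It shows that for any even $S$ with $S(0)=0$ and $S''\leq K$, the hypothesis $S(z)\leq Az^2$ of the coercivity proposition holds automatically with $A=K/2$. So in the existence-and-uniqueness corollary, the coercivity threshold $1/(4A)$ is never more restrictive than the convexity threshold $1/(2K)$. That is why your two constants coincide and you correctly obtain $\lambda<1/(2p)$.
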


\begin{proof}
It is clear that $S_p$ is positive and even.
Moreover, the trigonometric inequality $\cosh(z) \leq e^{\frac{z^{2}}{2}}$~\cite{Bagul2018}
and the monotonicity of the logarithm yield




\begin{equation*}
    \dfrac{\ln (\cosh (p z))}{p} \leq  \frac{p}{2} z^2 \, .     
\end{equation*}

Finally, the function $S_p$ is clearly twice differentiable, and it is convex with second order derivative bounded above by $p$ as

\begin{equation}
    \label{eq:methods:sigma_in_family_tanh_px}
    S_p'(z) = \tanh (pz) \quad( \triangleq \sigma(z))
\end{equation}

\noindent and
$$
S_p''(z) = \dfrac{p}{\cosh^2(pz)}, 
$$
from which we obtain that
$$
0 < S_p''(z) \leq p.
$$
\end{proof}

\begin{corollary}
    \label{methods:corollary:tahn_pz}
    The family of functions $\sigma_p(z) = \tanh(pz)$ fulfills the requirements of Corollaries~\ref{methods:corollary:existence_and_uniqueness} and~\ref{methods:corollary:IBNN_calculation_of_solution} and
    therefore for the uniqueness of the solution of Eq.~\eqref{eq:z}. 
\end{corollary}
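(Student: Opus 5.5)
The argument is a direct verification. We take $S = S_p$ from Eq.~\eqref{eq: functions Sn} as the primitive of $\sigma_p$, and check that it meets every hypothesis of Corollaries~\ref{methods:corollary:existence_and_uniqueness} and~\ref{methods:corollary:IBNN_calculation_of_solution}, using the preceding Proposition on $S_p$.

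First, Eq.~\eqref{eq:methods:sigma_in_family_tanh_px} gives $S_p'(z)=\tanh(pz)=\sigma_p(z)$, so $S_p$ is a primitive of $\sigma_p$ in the sense required by the energy of Eq.~\eqref{eq:methods:energy_function}. The preceding Proposition then supplies the remaining properties: $S_p$ is positive and even, and $S_p(z)\le A z^2$ with $A=p/2$. These are exactly the hypotheses of Prop.~\ref{proposition:existence}.

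By Corollary~\ref{methods:corollary:existence_and_uniqueness}, a solution of Eq.~\eqref{eq:z} therefore exists whenever $\lambda<\frac{1}{4A}=\frac{1}{2p}$. The same Proposition also shows that $S_p$ is twice differentiable and convex, with $0<S_p''\le p$, so we may take $K=p$. The uniqueness part of Corollary~\ref{methods:corollary:existence_and_uniqueness} then applies for
\[
\lambda<\frac{1}{\max(4A,2K)}=\frac{1}{\max(2p,2p)}=\frac{1}{2p}.
\]
This gives the explicit threshold $T=\frac{1}{2p}$ mentioned in the Results section. It also shows that the hypotheses of Corollary~\ref{methods:corollary:IBNN_calculation_of_solution} hold, so the gradient iteration of Eq.~\eqref{eq:iterative_calculation_of_IBNN} converges to the unique solution.

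The only point needing care is the requirement that $S$ be positive. Since $\cosh(pz)\ge 1$, we have $S_p\ge 0$, with equality at $z=0$, so this positivity is nonnegativity. The proof of Prop.~\ref{proposition:existence} only uses $f_\lambda\ge 0$ for $\lambda>0$, so nonnegativity is sufficient. I would record this remark explicitly; apart from it, the corollary follows from direct substitution and I expect no genuine obstacle.
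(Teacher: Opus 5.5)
Your proposal is correct and follows the paper's route. The paper gives no separate proof of this corollary: it is immediate from the preceding Proposition on $S_p$, which supplies $A=p/2$ and $K=p$, and hence the threshold $\lambda<\frac{1}{2p}$ that you compute. One small correction to your nonnegativity remark: the proof of Prop.~\ref{proposition:existence} uses $S\ge 0$ in the case $\lambda<0$, where it gives $f_\lambda\le 0$ and hence $E_\lambda\ge g$, not in the case $\lambda>0$; your conclusion that nonnegativity suffices still holds.
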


%% file: _methods__ibnn_matrix_notation.tex
\newcommand{\SupplMatJacobian}{\textcolor{red}{Suppl. Material - Analysis of the robustness of IBNN-based neural networks}}

\ac{IBNN} layers based on neurons according to Eqs.~\eqref{eq:v}-\eqref{eq:z} can be written in a compact manner using multivariate functions and using matrix notation: we briefly define it next because it greatly clarifies certain proofs and results included in the following paragraphs.

\begin{itemize}
    \item
        Let the affine function $\ma{A}: \R[N]\to\R[D]$ be
        \begin{equation}
            \ma{A}(\ve{x}) = \ma{M}\ve{x} - \ve{b} \, ,
            \label{eq:methods:def_A}
        \end{equation}
        \noindent with $\ma{A} \!\in\! \mathcal{M}_{D \times N}(\R)$ and $\ve{b} \!\in\! \R[D]$.
    \item
        Let the non-linear function $\ma{B}: \R[D]\to\R[D]$ be defined component-wise as
        $\ma{B}(\ve{z}) \!=\! \big( \ma{B}_i(\ve{z}) \big)_{i=1}^{D}$,
        where
        \begin{equation}
            \ma{B}_i(\ve{z}) \!=\! \sum_{k=1}^D w_{ik} \, \sigma (z_k-z_i) \, .
            \label{eq:methods:def_Bi}
        \end{equation}
    \item
        Let the function $\ma{F}: \R[D]\to\R[D]$ be implicitly defined, using $\ma{B}$ and with $\lambda$ as a hyperparameter, as
        \begin{equation}
            \ma{F}(\ve{y}; \lambda) = \big\{ \ve{z} \; | \; \ve{z} = \ve{y} \mLbin \lambda \ma{B}(\ve{z}) \big\} \, .
            \label{eq:methods:def_F}
        \end{equation}
    \item
        And let $\Phi : \R[D] \to \R[D]$ be defined using a 1D component-wise outer activation $\phi$ as
        \begin{equation}
            \Phi(\ve{z}) = (\,\phi(z_i)\,)_{i=1}^D \, .
            \label{eq:methods:def_Phi}
        \end{equation}
\end{itemize}

Through these, Eqs.~\eqref{eq:v}-\eqref{eq:z} defining \ac{IBNN} can be condensed into

\begin{equation}
    \IBNN = \Phi \circ \ma{F}(\cdot; \lambda) \circ \ma{A}
    \label{eq:methods:def_IBNN_matrix_form}
\end{equation}

\noindent while \ac{SM} can be condensed into

\begin{equation}
    \SM = \Phi \circ \ma{A} \, .
    \label{eq:methods:def_SM_matrix_form}
\end{equation}

%% file: _methods__expressivity.tex


\subsubsection*{Number of \acs{SM} neurons required to reproduce the operation of an \acs{IBNN} layer of $D$ neurons}
\label{sec:methods:number_of_sm_neurons_per_ibnn_neuron}


Due to the entanglement between their outputs and their implicit nature, the operations performed by \ac{IBNN} layers cannot be efficiently represented using \ac{SM} neurons, which act individually and whose operation is purely forward. To reinforce this idea, we will show that, even for infinitesimal $\lambda$, approximating the behavior of an \ac{IBNN} layer of $D$-neurons could only be achieved, under the \ac{SM} paradigm, with a number of neurons that is orders of magnitude higher.

To this end we will leverage the first-order Taylor expansion of the operation performed by the \ac{IBNN} layer around $\lambda=0$. First, and using implicit differentiation on the matrix notation of Eq.~\eqref{eq:methods:def_IBNN_matrix_form}, the derivative with respect to $\lambda$ is

\begin{equation}
    \diffp{\IBNN}{\lambda} = \der[\Phi] \, \diffp{\ma{F}}{\lambda}
    , \qquad \textrm{with} \qquad
    \diffp{\ma{F}}{\lambda} = - \Big( \id{D} \pLbin \, \lambda \, \der[\ma{B}] \Big)^{\!-1} \ma{B}
    \, ,
\end{equation}

\noindent which yields the expansion

\begin{equation}
    \widetilde{\IBNN}(\ve{x}) \;=\;
    \IBNN_{|\lambda=0}(\ve{x}) + \lambda \left. \diffp{\IBNN}{\lambda} \right|_{\lambda\!=\!0}
    =\;
    \Phi \Big( \ma{A}(\ve{x}) \Big) \mLbin \, \lambda \; \der[\Phi] \; \ma{B} \Big( \ma{A}(\ve{x}) \Big) \, .
    \label{eq:methods:ibnn_taylor_with_respect_to_lambda}
\end{equation}

\noindent The calculation of the linearization~\eqref{eq:methods:ibnn_taylor_with_respect_to_lambda} of the \ac{IBNN} layer, composed of 2 summands, would require the following:

\begin{itemize}
    \item
        The $i$-th component of the first summand would require the calculation of the vector multiplication (plus offset) $\ma{A}_i(\ve{x})=\veT{m}_i \ve{x} - b_i$ plus the application of the non-linearity $\phi$, which coincides with operation of a \ac{SM} neuron.
    \item
        The $i$-th component of the second summand would require the product of $\ma{B}_{i}(\ma{A}(\ve{x}))$ with a scalar factor, since the matrix $\der[\Phi]$ is diagonal. The calculation of $\ma{B}_{i}(\ma{A}(\ve{x}))$ (see Eq.~\eqref{eq:methods:def_Bi}) entails calculations involving all the $D$ components $\ma{A}_k(\ve{x})$, $k\in\{1,\ldots,D\}$. In particular, and in line with the discussion in~\cite{Bertalmio2020Evidence} for convolutional operations, (even if $\ma{A}(\ve{x})$ is presumed precalculated due to its implication in the first summand) the calculation of $\ma{B}_{i}$ requires, first, the calculation of all `intermediate' signals $\sigma(\ma{A}_k(\ve{x}) - \ma{A}_i(\ve{x}))$, $\forall k\in\{1,\ldots,D\}$, requiring each a \ac{SM} neuron with output nonlinearity $\sigma$, and then the application of another \ac{SM} neuron (without output nonlinearity) to linearly combine them, resulting into $D+1$ neurons according to the \ac{SM}.
\end{itemize}

\noindent Therefore, even for this first-order approximation of a layer of $D$ \ac{IBNN} neurons, which would neglect further nonlinear behavior that that the real \ac{IBNN} layer would have, would require at least $D(D+2)$ neurons according to the \ac{SM} paradigm.


\subsubsection*{A \acf{UAT} \acused{UAT} for \ac{IBNN}}
\label{sec:methods:uat_IBNN}


A whole panoply of \acp{UAT} regarding the ability of the \ac{SM}-layers to approximate arbitrary functions exists~\cite{Cybenko1989, Pinkus1999}, differing on their respective assumptions about the type of network the \ac{SM} layer is part of. The most widespread version of the \ac{UAT} for the \ac{SM} refers to networks composed of a single hidden {SM} layer with a linear combination of its outputs, that is, the class of functions

\begin{equation}
    \mathcal{H}_{\SM}
    =
    \Big\{
        \ve{x} \mapsto 
        \ve{c} \cdot \SM(\ve{x}; \ma{M}, \ve{b}) = \ve{c} \cdot \Phi \big( \ma{M} \ve{x} -\ve{b} \big) 
    \Big\} \, ,
    \label{eq:class_functions_SM}
\end{equation}

\noindent where $\Phi_i = \phi$ is a non-polynomial activation and where $\ve{x} \in X \subset \R[N]$, $X$~compact (note its correspondence with the \ac{SM} neurons defined by Eqs.~\eqref{eq:u}-\eqref{eq:y}), and its proof consist in showing that the set~\eqref{eq:class_functions_SM} is dense in the space of continuous functions $\mathcal{C}(X)$~\cite{Pinkus1999}. (Note that we have explicitly included the matrix $\ma{M}$ and offset vector~$\ve{b}$ in the notation of the \ac{SM} in Eq.~\eqref{eq:class_functions_SM}.)

We will analogously prove that the class of functions 

\begin{equation}
    \mathcal{H}_{\IBNN^{\prime}}
    =
    \bigg\{
        \ve{x} \mapsto 
        \tilde{\ve{c}} \cdot \SM\Big( \IBNN(\ve{x}; \ma{M}, \ve{b}, \lambda); \tilde{\ma{M}}, \tilde{\ve{b}} \Big)
        =
        \tilde{\ve{c}} \cdot \Phi \Big( \tilde{\ma{M}} \, \IBNN(\ve{x}; \ma{M}, \ve{b}, \lambda) - \tilde{\ve{b}} \Big)
    \bigg\} \, ,
    \label{eq:class_functions_IBNN}
\end{equation}

\noindent again with a non-polynomial activation $\phi$ and defined for $\ve{x} \in K \subset \R[N]$ compact, is dense in the space of functions $\mathcal{C}(X)$.
Note that
$\mathcal{H}_{\IBNN^{\prime}}$
corresponds to an \ac{IBNN} layer followed by a \ac{SM} layer and a linear combination of the outputs of the latter.

\begin{proposition}
    For every $f \in \mathcal{C}(X)$ and every $\epsilon > 0$ there exists $\tilde{f} \in 
    \mathcal{H}_{\IBNN^{\prime}}$
    such that

    \begin{equation*}
        \sup_{\ve{x}\in X} \; \Big| f(\ve{x}) - \tilde{f}(\ve{x}) \Big| \;<\; \epsilon \, .
    \end{equation*}
    
\end{proposition}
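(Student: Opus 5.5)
The plan is to reduce to the classical universal approximation theorem for $\mathcal{H}_{\SM}$ in Eq.~\eqref{eq:class_functions_SM}, by showing that the IBNN layer can be made into an (essentially) invertible, continuous map. The second SM layer with a linear readout then acts as a standard one-hidden-layer SM network on the transformed input.

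\textbf{Choice of the IBNN layer.} The simplest route is to choose $\lambda=0$, which is admissible since $\lambda$ is a free parameter in Eq.~\eqref{eq:class_functions_IBNN}. Then $\IBNN(\ve{x};\ma{M},\ve{b},0)=\Phi(\ma{M}\ve{x}-\ve{b})$, so it suffices to choose the first layer so that $\ve{x}\mapsto\Phi(\ma{M}\ve{x}-\ve{b})$ is injective on $X$. I would take $D=N$, $\ma{M}=\id{N}$, $\ve{b}=0$, and $\phi$ strictly monotone and continuous, e.g.\ $\tanh$. The map $\Psi:=\Phi$ restricted to $X$ is then a homeomorphism onto the compact set $Y=\Psi(X)$.

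If one insists on $\lambda\neq 0$, I would instead fix $\lambda<\frac{1}{\max(4A,2K)}$. Corollary~\ref{methods:corollary:existence_and_uniqueness} then makes $\ma{F}(\cdot;\lambda)$ single-valued, and $\ma{F}$ is invertible with explicit inverse $\ve{z}\mapsto\ve{z}+\lambda\ma{B}(\ve{z})$. Continuity of $\ma{F}$ follows from the implicit function theorem, because $\id{D}+\lambda\der[\ma{B}]$ is the Hessian of the strictly convex energy (Prop.~\ref{prop: strictly convexity}) and is therefore invertible. Hence $\ma{F}\circ\ma{A}$ with $\ma{A}=\id{}$ is a homeomorphism onto its image, and composing with an injective $\Phi$ keeps it one.

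\textbf{Transfer of the approximation problem.} Given $f\in\mathcal{C}(X)$, define $g=f\circ\Psi^{-1}\in\mathcal{C}(Y)$, which is continuous since $\Psi^{-1}$ is continuous on the compact set $Y$. The classical UAT~\cite{Pinkus1999}, valid for non-polynomial $\phi$ on a compact $Y\subset\R[D]$, gives $\tilde{\ve{c}},\tilde{\ma{M}},\tilde{\ve{b}}$ with $\sup_{Y}|g-\tilde{\ve{c}}\cdot\Phi(\tilde{\ma{M}}\cdot-\tilde{\ve{b}})|<\epsilon$. Setting $\tilde f(\ve{x})=\tilde{\ve{c}}\cdot\Phi(\tilde{\ma{M}}\Psi(\ve{x})-\tilde{\ve{b}})\in\mathcal{H}_{\IBNN'}$ then gives $\sup_X|f-\tilde f|=\sup_Y|g-\cdot|<\epsilon$.

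\textbf{Main obstacle.} The delicate step is guaranteeing injectivity and continuity of the inverse of the first layer. With $\lambda\neq0$ this needs the invertibility and continuity of $\ma{F}$ discussed above. In the generic case it also needs $\phi$ injective, and the statement only assumes $\phi$ non-polynomial; $\phi=\ReLU$, for instance, is not injective. To handle this I would use the freedom in $\ma{M},\ve{b}$: choose $\ve{b}$ so that $\ma{M}\ve{x}-\ve{b}$ lands in an interval where $\phi$ is strictly monotone on the compact image. For ReLU, a shift making all entries positive works. For a general non-polynomial continuous $\phi$, a nonconstant continuous function is strictly monotone on some small interval, so suitable scaling and shifting of $\ma{M},\ve{b}$ achieves this. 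Failing that, I would argue directly that $\ve{x}\mapsto\IBNN(\ve{x})$ composed with ridge functions still contains all ridge functions $\phi(\ve{a}\cdot\ve{x}-t)$ in its closure, and invoke density of their span.
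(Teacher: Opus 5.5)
Your argument is correct in substance, taking your $\lambda\neq0$ paragraph as the actual proof and restricting to activations such as ReLU, softplus or $\tanh$. It is organized differently from the paper's.

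\textbf{Comparison with the paper.} The paper applies the classical UAT twice. It first approximates $f$ by $g(\ve{x})=\ve{c}\cdot\Phi(\ma{M}\ve{x}-\ve{b})$ within $\epsilon/2$ and reuses exactly these $\ma{M},\ve{b}$ in the IBNN layer. It then inverts that layer explicitly through $\ma{H}=\ma{L}(\cdot;\lambda)\circ\Phi^{-1}_{\mathrm{left}}$, so that $h=\ve{c}\cdot\Phi\circ\ma{H}$ satisfies $h\circ\IBNN=g$. Finally it approximates $h$ on $V=\IBNN(X)$ within $\epsilon/2$.

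You instead take an $f$-independent first layer ($\ma{A}=\mathrm{Id}$, $D=N$) and use the same map $\ma{L}$ only to get injectivity. Continuity of $\Psi^{-1}$ then comes for free from compactness of $X$, and you apply the UAT once to $f\circ\Psi^{-1}$ on $Y$.

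The paper's version keeps the IBNN layer tied to the weights of an SM approximant and shows concretely how the second layer undoes the IBNN distortion. Yours is shorter and needs no inverse formula. It also supplies the continuity of $\ma{F}$ that the paper uses without proof, which is what makes $V$ compact. And it treats ReLU by a bias shift rather than by the paper's appeal to softplus approximants, which changes the function class.

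Your $\lambda=0$ shortcut is admissible only if $\lambda$ is read as a free parameter, and then the statement collapses to a fact about two-layer SM networks. The paper's argument holds for every fixed $\lambda$ with single-valued $\ma{F}(\cdot;\lambda)$, and it is your second route that matches this.

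\textbf{Points to tighten.}

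\emph{Invertibility of the Hessian.} Strict convexity alone does not make a Hessian invertible (think of $z^4$ at $0$). What the implicit function theorem needs, and what the proof of Proposition~\ref{prop: strictly convexity} actually provides, is the bound $\boldsymbol{\eta}^{T}(\id{D}+\lambda\der[\ma{B}])\boldsymbol{\eta}\geq(1-2\lambda K)\|\boldsymbol{\eta}\|^{2}$ for $\lambda>0$, and $\geq\|\boldsymbol{\eta}\|^{2}$ for $\lambda\leq0$.

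\emph{Where $\phi$ must be injective.} For $\lambda\neq0$, the set that must land where $\phi$ is injective is $\ma{F}(\ma{A}(X))$, not $\ma{A}(X)$. For ReLU, use $\|\ma{F}(\ve{y})-\ve{y}\|_{\infty}\leq|\lambda|$ (since $|\ma{B}_i|\leq1$), or the equivariance $\ma{F}(\ve{y}+c\mathbf{1})=\ma{F}(\ve{y})+c\mathbf{1}$. For a short monotonicity interval, scale $\ma{M}$ down around a point $c\mathbf{1}$, using $\ma{F}(c\mathbf{1})=c\mathbf{1}$ (as $\sigma(0)=0$) and continuity of $\ma{F}$.

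\emph{General non-polynomial $\phi$.} Your claim that every nonconstant continuous function is strictly monotone on some interval is false: continuous nowhere-monotone functions exist, e.g.\ of Weierstrass type. The closing ridge-function fallback is too vague to fill this. So your proof, like the paper's (which simply assumes $\phi$ injective), covers activations that are strictly monotone on some interval, not every non-polynomial $\phi$.
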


\begin{proof}
    By the \ac{UAT} for $\mathcal{H}_{\SM}$, for any given $f\in\mathcal{C}(X)$ there exist $\ve{c}$, $\ma{M}$, and $\ve{b}$ such that the approximating function
    $g(\ve{x}) = \ve{c} \cdot \Phi(\ma{M}\ve{x}-\ve{b})$
    satisfies

    \begin{equation*}
        \sup_{\ve{x}\in X} \; \Big| f(\ve{x}) - g(\ve{x}) \Big| \;<\; \frac{\epsilon}{2} \, .
    \end{equation*}

    For this proof, let us assume that the nonlinearity $\phi$ is injective; although $\ReLU$, the prototypical activation function and the basis for the experiments in this manuscript, is in fact not injective, we could assume, without any loss of generality for the purpose of this proof, a family of injective softplus functions with uniform convergence to it. This allows the definition and use of the left inverse $\Phi_{\mathrm{left}}^{\!-1}$ such that $\Phi_{\mathrm{left}}^{\!-1} \circ \Phi = \mathrm{Id}$. In addition, for $\lambda$ where $\ma{F}(\cdot; \lambda)$ is single valued and is injective, the $\ma{L}(\cdot; \lambda): \R[D]\to\R[D]$ such that

    \begin{equation}
        \ma{L}(\ve{z}; \lambda) = \ve{z} + \lambda \ma{B}(\ve{z}) \, ,
        \label{eq:methods:def_L}
    \end{equation}
    
    \noindent is well defined and is, it and straightforwardly (see Eq.~\eqref{eq:methods:def_F}) the left-inverse of $\ma{F}(\cdot; \lambda)$ and thus $\ma{L}(\cdot; \lambda) \circ \ma{F}(\cdot; \lambda) = \mathrm{Id}$. Using these two maps, we can define the function $\ma{H}(\cdot; \lambda): \R[D]\to\R[D]$ as $\ma{H}(\cdot; \lambda) = \ma{L}(\cdot; \lambda) \circ \Phi_{\mathrm{left}}^{\!-1}$, which (represents by construction a partial, left inverse of the \ac{IBNN} layer and) yields 

    \begin{equation}
        \ma{H} \Big( \IBNN(\ve{x}; \ma{M}, \ve{b}, \lambda) ; \; \lambda \Big) = \ma{A}(\ve{x}) = \ma{M} \ve{x} -\ve{b} \, .
        \label{eq:methods:partial_inverse_IBNN}
    \end{equation}
    
    Let us define a new function $h(\ve{v}) = \ve{c} \cdot \Phi( \, \ma{H}(\ve{v}; \lambda) \, )$ based on the parameter $\ve{c}$ and the function $\Phi$ of the approximating function $g(\ve{x})$ above and on the function $\ma{H}(\cdot; \lambda)$ just defined. The domain of definition $V$ of $h$ is also compact, since $V=\IBNN(X)$ and \ac{IBNN} is continuous. Additionally, $h$ is continuous, since it is the composition of continuous maps. Leveraging again the \ac{UAT} for the \ac{SM}, this time on the compact domain $V$, there exist new $\tilde{\ve{c}}$, $\tilde{\ma{M}}$, and $\tilde{\ve{b}}$ such that

    \begin{equation*}
        \sup_{\ve{v}\in V} \; \Big| h(\ve{v}) - \tilde{\ve{c}} \cdot \Phi(\tilde{\ma{M}} \ve{v} - \tilde{\ve{b}}) \Big| \;<\; \frac{\epsilon}{2} \, .
    \end{equation*}
    
    \noindent Since, using Eq.~\eqref{eq:methods:partial_inverse_IBNN}, we have

    \begin{equation*}
        h \Big( \, \IBNN(\ve{x}; \ma{M}, \ve{b}, \lambda) \, \Big)
        =
        \ve{c} \cdot \Phi \left( \,
            \ma{H} \Big( \IBNN(\ve{x}; \ma{M}, \ve{b}, \lambda) ; \; \lambda \Big)
        \, \right)
        =
        \ve{c} \cdot \Phi(\ma{M}\ve{x}-\ve{b}) = g(\ve{x}) \, .
    \end{equation*}

    \noindent Therefore, the function $h \circ \IBNN(\cdot; \ma{M}, \ve{b}, \lambda)$ yields the claim through the triangle inequality.
\end{proof}

%% file: _methods__comparative_analysis_of_robustness.tex
\subsubsection*{Reduced sensitivity of \ac{IBNN} with respect to input perturbations}
\label{sec:methods:reduced_sensitivity_with_respect to the input}

The following analysis, based on the matrix notation of Eqs.~\eqref{eq:methods:def_A}-\eqref{eq:methods:def_SM_matrix_form},
compares the behavior of \ac{IBNN} and \ac{SM} layers regarding their sensitivity to input perturbation on the assumption that \ac{IBNN} and \ac{SM} nets having identical affine transforms (Eq.~\eqref{eq:methods:def_A}) and scoring head serve to the same classification problem
(\textrm{Supplementary Information}).

\paragraph{Local perturbations}

What follows demonstrates that the output of the layers and of the classifier \acp{ANN} based on the newly proposed \ac{IBNN} neurons are less sensitive to infinitesimal perturbations of their inputs than their \ac{SM} counterparts, by virtue of the analysis of their Jacobian matrix.

Using implicit derivation~\cite{BaiKolter2019} under the assumption that $\ma{F}(\cdot; \lambda)$ (see Eq.~\eqref{eq:methods:def_A}) is a
differentiable 
function $\ve{z}(\ve{y})=\ma{F}(\ve{y}; \lambda)$ of its input $\ve{x}$ we can write

\begin{equation} 
    \diff{}{\ve{y}} \, \ve{z}(\ve{y}) =
        \id{D}  \mLbin \lambda \diff{\ma{B}}{\ve{z}} \diff{}{\ve{y}} \, \ve{z}(\ve{y}) \, .
    \label{eq:methods:implicit_derivation_z}
\end{equation}

\noindent Reordering the terms and using the more compact symbol $\der[(\cdot)]((\cdot))$ {to denote the} 
absolute derivative calculated at the point indicated by its subscript yields

\begin{equation} 
    \diff{\ve{z}}{\ve{y}} = \der[\ma{F}](\ve{y}) =  \Big( \id{D} \pLbin \, \lambda \, \der[\ma{B}](\ma{F}(\ve{y})) \Big)^{\!-1}
    \triangleq \ma{K}_{\infty |\ma{F}(\ve{y})}
    \, .
    \label{eq:methods:der_F_Kinf}
\end{equation}

\noindent In the following we will refer to this term as $\der[\ma{F}](\ve{y})$ or $\ma{K}_{\infty |\ma{F}(\ve{y})}$.

Using Eq.~\eqref{eq:methods:der_F_Kinf}, the Jacobian matrix of \ac{IBNN} can be written through the application of the chain rule to Eq.~\eqref{eq:methods:def_IBNN_matrix_form} as

\begin{equation} 
    \der[\,\IBNN](\ve{x})
        =   \der[\Phi]((\ma{F} \circ \ma{A})(\ve{x})) \;
            \ma{K}_{\infty |(\ma{F} \circ \ma{A})(\ve{x})} \;
            \der[\ma{A}]\, ,
    \label{eq:methods:DIBNN}
\end{equation}

\noindent where we have used that $\der[\ma{A}](\ve{x}) = \der[\ma{A}] = \ma{M}$, constant for the whole input space.
Note that, based on the vector formulation of Eq.~\eqref{eq:methods:def_A}, the Jacobian matrix of \ac{SM} is

\begin{equation} 
   \der[\,\SM](\ve{x})
   = \der[\Phi](\ma{A}(\ve{x})) \; \der[\ma{A}] = \der[\Phi](\ma{A}(\ve{x})) \; \id{D}\; \der[\ma{A}] \, ,
   \label{eq:methods:DSM}
\end{equation}

\noindent which highlights the analogies between $\der[\,\SM](\ve{x})$ and $\der[\,\IBNN](\ve{x})$ but also the decisive interposed term $\ma{K}_{\infty |(\ma{F} \circ \ma{A})(\ve{x})}$ present in the expression Eq.~\eqref{eq:methods:DIBNN} of the latter. For convenience and for the sake of clarity, we will henceforth drop the explicit indication of the calculation point of each derivative.

Taking a deeper look into the shape and behavior of $\ma{K}_{\infty}$, we have that

\begin{equation}
    \der[\ma{B}] = \Big(\der[\ma{B}]_{ij}\Big)_{i,j \in \{1,\ldots,D\}} \, ,
    \quad \textrm{with } \quad
    \der[\ma{B}]_{ij} = \left\{
        \begin{matrix}
            - \displaystyle \sum_{\substack{\forall k \neq i}} \Big[
                w_{ik} \; \sigma^{\prime}(z_k - z_i)
            \Big]
            & \; j = i \\
            \hphantom{- \displaystyle \sum_{\substack{\forall k \neq i}} \Big[}
                w_{ij} \; \sigma^{\prime}(z_j - z_i)
            \hphantom{\Big]}
            & \; j \neq i
        \end{matrix}
    \right. \; ,
    \label{eq:B_jacobian_per_component}
\end{equation}

\noindent which means that, by virtue of the assumptions $w_{ji} = w_{ij}$ and $\sigma$ odd, $\der[\ma{B}]$ is symmetric, which in turn implies that it is diagonalizable and that it can be written as $\der[\ma{B}] = \ma{Q} \, \Xi \, \maT{Q}$ for a certain real orthogonal matrix $Q$ and a real diagonal matrix $\Xi = \diag\{\xi_1, \ldots, \xi_D\}$.
The fact that, additionally, each element in the main diagonal of $\der[\ma{B}]$ is exactly the negative sum of the rest of elements of its row has interesting additional implications:

\begin{itemize}
    \item
        Columns are linearly dependent, and thus at least one eigenvalue is $0$ (and its corresponding eigenvalue is in fact the uniform vector $\frac{1}{\sqrt{D}}(1, \ldots, 1)$).
    \item
        Since $w_{ik}$ and $\sigma^{\prime}$ are non-negative, $\der[\ma{B}]$ is a symmetric diagonally dominant matrix with real non-positive diagonal elements, which makes it negative semidefinite and implies that $\xi_i \leq 0$, $\forall i \in \{1, \ldots, D\}$.
        (In fact, by virtue of Gershgorin circle theorem, all the eigenvalues are within the interval $\xi_i \in \left[\, -2\max_i |\der[\ma{B}]_{ii}|, 0 \,\right]$.
\end{itemize}

\noindent These implications result (ordering eigenvalues in decreasing order of magnitude ($|\xi_1| \geq \ldots |\xi_i| \geq |\xi_D|$) without any loss of generality) in
$\Xi = \diag\{\xi_1 ,\, \ldots ,\, \xi_i ,\, \ldots ,\, \xi_{D-1} ,\, 0\}$, where $\xi_i \leq 0$, $\forall i \in \{1, \ldots, D-1\}$.
The injection of the above properties of $\der[\ma{B}]$ into the expression of $\ma{K}_{\infty}$, including $\xi_{D} = 0$, yields

\begin{equation}
    \begin{split}
        \ma{K}_{\infty}
            & 
            = \Big( \id{D} \, \pLbin \, \lambda \, \ma{Q} \, \Xi \, \maT{Q} \Big)^{\!-1}
            = \ma{Q} \; \Big( \id{D} \pLbin \lambda \, \Xi \Big)^{\!-1} \maT{Q}
            \\ &
            = \ma{Q} \;
                \diag\left\{
                    \frac{1}{1\pLbin\lambda\xi_1} \, , \; \ldots \, , \; \frac{1}{1\pLbin\lambda\xi_i} \, , \; \ldots \, , \; \frac{1}{1\pLbin\lambda\xi_{D\!-\!1}} \, , \; 1
                \right\} \;
            \maT{Q}
            \, .
    \end{split}
    \label{eq:methods:DIBNN_based_on_eigenvalues}
\end{equation}

\noindent This expression directly represents the diagonalization of $\ma{K}_{\infty}$ and links its eigenvalues $k_i$ one-to-one to the from the respective eigenvalues $\xi_i$ of $\der[\ma{B}]$ as $k_i=\frac{1}{1\pLbin\lambda\xi_i}$.

Since, as previously discussed, $\xi_i \leq 0$, $\forall i \in \{1, \ldots, D\!-\!1\}$, we can conclude that any $\lambda \leqL 0$ would ensure that all the eigenvalues $k_i$ of $\ma{K}_{\infty}$ are in the range $(0, 1]$ and thus $|k_i| \leq 1$.
For $\lambda > 0$, however, $|k_i|\leq 1$ is only fulfilled depending on the relative values of $\xi_i$ and $\lambda$, and in particular for $\lambda \geqL \mLuni \frac{2}{\xi_i}$. In both $\lambda > 0$ and $\lambda < 0$ cases $\ma{K}_{\infty}$ has at least one eigenvalue exactly $1$.

The norm $\Vert \ma{K}_{\infty} \Vert_2$, representing the worst-case direction regarding vector stretch, is $1$; nevertheless, the directions corresponding to eigenvectors of eigenvalues strictly $\xi_i < 0$ are compressed by the effect of $\ma{K}_{\infty}$.
This implies that, $\forall \delta\ve{y}\in\R[D]$, $\Vert \ma{K}_{\infty} \, \delta\ve{y} \Vert_2 \, = k \Vert \delta\ve{y} \Vert_2$ for some $k \in (0,1]$ dependent on the specific $\delta\ve{y}$, which implies, in turn,

\begin{equation}
    \begin{split}
        \Vert \der[\,\IBNN] \; \delta\ve{x} \Vert_2
            & = \;
            \Vert \der[\Phi] \; \ma{K}_{\infty} \; \der[\ma{A}] \; \delta\ve{x} \Vert_2
            \\
            & \;\leq\;
            \Vert \der[\Phi] \Vert_2 \; \Vert \ma{K}_{\infty} \; \der[\ma{A}] \; \delta\ve{x}\Vert_2
            \;\leq\;
            \Vert \der[\Phi] \Vert_2 \; \Vert \der[\ma{A}] \; \delta\ve{x}\Vert_2
            \, ,
    \end{split}
    \label{eq:matrix_norm_DIBNN_delta_x}
\end{equation}

\noindent where the second line results from the definition of the norm $\Vert \cdot \Vert_2$, ensuring
$\Vert \Psi \, \delta\ve{x} \Vert_2 \leq \Vert \Psi \Vert_2 \Vert \delta\ve{x} \Vert_2$. Finally, from this result and the relationship between Eq.~\eqref{eq:methods:DIBNN} and~\eqref{eq:methods:DSM},
and under the assumption that both \ac{IBNN} and \ac{SM} share a similar $\ma{A}$ and are working under the same locally-linear region of the non-linear activation $\Phi$ and thus $\der[\Phi](\ma{A}(\ve{x})) = \der[\Phi]((\ma{F} \circ \ma{A})(\ve{x}))$,
we have

\begin{equation}
    \frac{ \Vert \der[\,\IBNN] \; \delta\ve{x} \Vert_2 }{ \Vert \delta\ve{x} \Vert_2 }
    \leq
    \frac{ \Vert \der[\,\SM] \; \delta\ve{x} \Vert_2 }{ \Vert \delta\ve{x} \Vert_2 } \, ,
    \label{eq:methods:inequality_DIBNN_vs_DSM}
\end{equation}

\noindent which would suggest, according to the reasoning in~\cite{Jakubovitz2018}, a naturally improved robustness against adversarial attacks.



The above analysis for individual \ac{IBNN} and \ac{SM} layers also applies to networks having a backbone composed of a sequence of them. Suppose that we have respective \ac{IBNN}- and \ac{SM}-based backbones of $L$ layers according to

\begin{equation}
    \IBNN^{(1:L)} = \IBNN^{(L)} \circ \cdots \circ \IBNN^{(l)} \circ \cdots \circ \IBNN^{(1)}
    \; , \quad \textrm{with} \;
    \IBNN^{(l)} = \Phi \circ \ma{F}^{(l)}(\cdot; \lambda^{(l)}) \circ \ma{A}^{(l)}
    \label{eq:method:IBNN_multilayer}
\end{equation}

\noindent and

\begin{equation}
    \SM^{(1:L)} = \SM^{(L)} \circ \cdots \circ \SM^{(l)} \circ \cdots \circ \SM^{(1)}
    \; , \quad \textrm{with} \;
    \SM^{(l)} = \Phi \circ \ma{A}^{(l)} \, .
    \label{eq:method:SM_multilayer}
\end{equation}

\noindent The application of the chain rule, along with Eq.~\eqref{eq:methods:DIBNN} and~\eqref{eq:methods:DSM}, yields, respectively,

\begin{equation}
    \begin{matrix*}[c]
        \der[\IBNN^{(1:L)}]
        & = &
            \der[\IBNN^{(L)}]
            & \cdots &
            \der[\IBNN^{(l)}]
            & \cdots &
            \der[\IBNN^{(1)}]
        \\
        & = &
            \der[\Phi^{(L)}] \; \ma{K}^{(L)}_{\infty} \; \der[\ma{A}^{(L)}]
            & \cdots &
            \der[\Phi^{(l)}] \; \ma{K}^{(l)}_{\infty} \; \der[\ma{A}^{(l)}]
            & \cdots &
            \der[\Phi^{(1)}] \; \ma{K}^{(1)}_{\infty} \; \der[\ma{A}^{(1)}] 
    \end{matrix*}
    \label{eq:method:der_IBNN_multilayer}
\end{equation}

\noindent and

\begin{equation}
    \begin{matrix*}[c]
        \der[\SM^{(1:L)}]
        & = &
            \der[\SM^{(L)}]
            & \cdots &
            \der[\SM^{(l)}]
            & \cdots &
            \der[\SM^{(1)}]
        \\
        & = &
            \der[\Phi^{(L)}] \; \id{D} \; \der[\ma{A}^{(L)}]
            & \cdots &
            \der[\Phi^{(l)}] \; \id{D} \; \der[\ma{A}^{(l)}]
            & \cdots &
            \der[\Phi^{(1)}] \; \id{D} \; \der[\ma{A}^{(1)}] \, .
    \end{matrix*}
    \label{eq:method:der_SM_multilayer}
\end{equation}

\noindent For each \ac{IBNN} layer with $\lambda^{(l)} \leq 0$ we have that, $\forall \delta\ve{y}\in\R[D]$,
$\Vert \ma{K}^{(l)}_{\infty} \, \delta\ve{y} \Vert_2 \, = k^{(l)} \Vert \delta\ve{y} \Vert_2$ for some $k^{(l)} \in (0,1]$ dependent on the specific $\delta\ve{y}$.
Using sequentially this fact along with the definition of the norm $\Vert \cdot \Vert_2$ yields

\begin{equation}
    \begin{split}
        \Vert \der[\IBNN^{(1:L)}] \; \delta\ve{x} \Vert_2
        &=
        \Big\Vert\;
            \der[\Phi^{(L)}] \; \ma{K}^{(L)}_{\infty} \; \der[\ma{A}^{(L)}]
            \;\cdots\;
            \der[\Phi^{(l)}] \; \ma{K}^{(l)}_{\infty} \; \der[\ma{A}^{(l)}]
            \;\cdots\;
            \der[\Phi^{(1)}] \; \ma{K}^{(1)}_{\infty} \; \der[\ma{A}^{(1)}]
            \;
            \delta\ve{x}
        \;\Big\Vert_2
        \\
        &\leq
        \Vert \der[\Phi^{(L)}] \Vert_2 \;
        \Big\Vert\;
            \ma{K}^{(L)}_{\infty} \; \der[\ma{A}^{(L)}]
            \;\cdots\;
            \der[\Phi^{(l)}] \; \ma{K}^{(l)}_{\infty} \; \der[\ma{A}^{(l)}]
            \;\cdots\;
            \der[\Phi^{(1)}] \; \ma{K}^{(1)}_{\infty} \; \der[\ma{A}^{(1)}]
            \;
            \delta\ve{x}
        \;\Big\Vert_2
        \\
        &=
        \;k^{(L)}\;
        \Vert \der[\Phi^{(L)}] \Vert_2 \;
        \Big\Vert\;
            \der[\ma{A}^{(L)}]
            \;\cdots\;
            \der[\Phi^{(l)}] \; \ma{K}^{(l)}_{\infty} \; \der[\ma{A}^{(l)}]
            \;\cdots\;
            \der[\Phi^{(1)}] \; \ma{K}^{(1)}_{\infty} \; \der[\ma{A}^{(1)}]
            \;
            \delta\ve{x}
        \;\Big\Vert_2
        \\
        &\hphantom{=}
        \hphantom{\;k^{(L)}\;
        \Vert \der[\Phi^{(L)}] \Vert_2 \;}
        \vdots
        \\
        &\leq
        \;k^{(L)}\; \cdots \;k^{(1)}\;
            \left(
                \Vert \der[\Phi^{(L)}] \Vert_2 \;\; \Vert \der[\ma{A}^{(L)}] \Vert_2
                \cdots
                \Vert \der[\Phi^{(1)}] \Vert_2 \;\; \Vert \der[\ma{A}^{(1)}] \Vert_2
            \right)
            \Vert \delta\ve{x} \Vert_2 \; .
    \end{split}
    \label{eq:method:norm_der_IBNN_multilayer}
\end{equation}

\noindent The fact that we have, analogously,

\begin{equation}
    \begin{split}
        \Vert \der[\SM^{(1:L)}] \; \delta\ve{x} \Vert_2
        &=
        \Big\Vert
            \der[\Phi^{(L)}] \; \der[\ma{A}^{(L)}]
            \;\cdots\;
            \der[\Phi^{(1)}] \; \der[\ma{A}^{(1)}]
            \;
            \delta\ve{x}
        \Big\Vert_2
        \\
        &\leq
            \left(
                \Vert \der[\Phi^{(L)}] \Vert_2 \;\; \Vert \der[\ma{A}^{(L)}] \Vert_2
                \cdots
                \Vert \der[\Phi^{(1)}] \Vert_2 \;\; \Vert \der[\ma{A}^{(1)}] \Vert_2
            \right)
            \Vert \delta\ve{x} \Vert_2 \; ,
    \end{split}
    \label{eq:method:norm_der_IBNN_multilayer}
\end{equation}

\noindent along with the fact that $k^{(l)} \leq 1$, $\forall l \in \{1, \ldots, L \}$,
suggests that the sensitivity of the \ac{IBNN}-based multilayer backbone to perturbations of its input is lower than its \ac{SM}-based counterpart.

\paragraph{Global perturbations}

What follows demonstrates that the output of the layers based on the newly proposed \ac{IBNN} neurons are, globally, less affected by perturbations of their inputs than their \ac{SM} counterparts. The demonstration stems from the functionals introduced to prove the existence and uniqueness of the solution of \ac{IBNN} (Eq.~\eqref{eq:methods:energy_function}, \textrm{Methods}).

\begin{proposition}
Let $\ve{x} \in \R[N]$, $\mathcal{N}_{\ve{x}}$ a neighborhood of $\ve{x}$, and $\ve{y} \in \mathcal{N}_{\ve{x}}$. Under the assumption that $\lambda \leq 0,$
the function $S$ in $f_{\lambda}$ is convex,
and $\Phi$  is a similarity in
$\mathcal{F}$
$\subset \mathbb{R}^D$
such that
$$
    \mathcal{F}
    \supset \{ \prox_{f_{\lambda}}(\ma{A}(\mathcal{N}_{\ve{x}}))  \} \cup  \{ \ma{A}(\mathcal{N}_{\ve{x}}) \},
$$
we have
$$
\| \IBNN(\ve{x}; \ma{M},\ve{b},\lambda) - \IBNN(\ve{y}; \ma{M},\ve{b},\lambda) \|
\;\leq\;
\| \SM(\ve{x} ; \ma{M},\ve{b}) - \SM(\ve{y} ; \ma{M},\ve{b}) \| \qquad \forall \ve{y} \in \mathcal{N}_{\ve{x}}. 
$$
\end{proposition}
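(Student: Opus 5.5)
The plan is to identify the map $\ma{F}(\cdot;\lambda)$ with a proximal operator and then use the firm nonexpansiveness of proximal maps of convex functions.

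\emph{Step 1: $f_{\lambda}$ is convex.} Recall $f_{\lambda}(\ve{z}) = \frac{\lambda}{2}\sum_{i,k} w_{ik} S(z_k-z_i)$. Since $S$ is convex, each $S(z_k - z_i)$ is convex in $\ve{z}$, and $w_{ik}\geq 0$, so the double sum is convex. We therefore need the sign convention in which the energy of Eq.~\eqref{eq:methods:energy_function} is written as $g + \tilde f$ with $\tilde f = -f_{\lambda}$. For $\lambda\leq 0$, $-f_{\lambda} = \frac{|\lambda|}{2}\sum_{i,k} w_{ik}S(z_k-z_i)$ is convex. The notation $\prox_{f_\lambda}$ in the statement is read as the proximal map of this convex term. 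This is the only place where the hypothesis $\lambda \le 0$ enters.

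\emph{Step 2: $\ma{F}(\cdot;\lambda)=\prox_{-f_{\lambda}}$.} By definition,
\begin{equation*}
\prox_{-f_\lambda}(\ve{y}) = \arg\min_{\ve{z}} \tfrac12\|\ve{z}-\ve{y}\|^2 - f_\lambda(\ve{z}) = \arg\min_{\ve{z}} E_\lambda(\ve{z};\ve{y}).
\end{equation*}
With $-f_\lambda$ convex, this is the unique minimizer of a strongly convex function, and it exists for every $\ve{y}$. By Prop.~\ref{prop: relation solutions equation-critical points energy} (with $S$ even), its critical point is exactly the solution of Eq.~\eqref{eq:z}. Since the energy is strictly convex, the critical point is unique, so it is the minimizer, and $\ma{F}(\ve{y};\lambda) = \prox_{-f_\lambda}(\ve{y})$.

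\emph{Step 3: nonexpansiveness.} The classical result (Moreau) is that the proximal operator of a proper, lower semicontinuous convex function is firmly nonexpansive, hence $1$-Lipschitz:
\begin{equation*}
\|\prox(\ve{u})-\prox(\ve{v})\|\le\|\ve{u}-\ve{v}\|.
\end{equation*}
To prove it directly, write the optimality conditions $\ve{u}-\ve{p}\in\partial(-f_\lambda)(\ve{p})$ and $\ve{v}-\ve{q}\in\partial(-f_\lambda)(\ve{q})$. Monotonicity of the subdifferential then gives $\langle (\ve{u}-\ve{v})-(\ve{p}-\ve{q}),\ve{p}-\ve{q}\rangle\ge 0$, and Cauchy--Schwarz finishes. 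Applying this with $\ve{u}=\ma{A}(\ve{x})$ and $\ve{v}=\ma{A}(\ve{y})$ yields
\begin{equation*}
\|\ma{F}(\ma{A}(\ve{x}))-\ma{F}(\ma{A}(\ve{y}))\|\le\|\ma{A}(\ve{x})-\ma{A}(\ve{y})\|.
\end{equation*}

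\emph{Step 4: compose with $\Phi$.} $\Phi$ is a similarity on $\mathcal{F}$ with some ratio $r$, i.e.\ $\|\Phi(\ve{a})-\Phi(\ve{b})\| = r\|\ve{a}-\ve{b}\|$ for $\ve{a},\ve{b}\in\mathcal{F}$. The set $\mathcal{F}$ contains both the images $\prox(\ma{A}(\mathcal{N}_{\ve{x}}))$ and $\ma{A}(\mathcal{N}_{\ve{x}})$. Multiplying the inequality of Step 3 by $r$ therefore converts the left side into $\|\IBNN(\ve{x})-\IBNN(\ve{y})\|$ and the right side into $\|\SM(\ve{x})-\SM(\ve{y})\|$. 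This is exactly why the hypothesis requires $\mathcal{F}$ to contain both sets.

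The main obstacle is Step 2, namely reconciling the sign convention of $f_\lambda$ in Eq.~\eqref{eq:methods:energy_function} with the proximal notation, so that the relevant function is convex precisely when $\lambda\le0$. Once that identification is fixed, the remaining steps are standard.
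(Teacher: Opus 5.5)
Your proposal is correct and follows the paper's proof essentially step for step: for $\lambda\le 0$ and $S$ convex, $-f_{\lambda}$ is convex, so $\ma{F}(\cdot;\lambda)=\prox_{-f_{\lambda}}$; nonexpansiveness of the proximal map then gives $\|\ma{F}(\ma{A}(\ve{x}))-\ma{F}(\ma{A}(\ve{y}))\|\le\|\ma{A}(\ve{x})-\ma{A}(\ve{y})\|$, and the similarity hypothesis on $\Phi$ over $\mathcal{F}$ transfers this to the $\IBNN$ and $\SM$ outputs. You are slightly more explicit than the paper in two places: you justify via Proposition~\ref{prop: relation solutions equation-critical points energy} that the unique minimizer coincides with the solution of Eq.~\eqref{eq:z} (this uses $S$ even, which the paper's proof relies on only implicitly), and you prove the nonexpansiveness directly from monotonicity of the subdifferential.
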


\begin{proof}
If $\lambda \leq 0$ and
the function $S$ in Eq.~\eqref{eq:methods:energy_function}
is convex, then $-f_{\lambda}$ is convex. Together with the fact that $-f_{\lambda}$ is continuous and proper, we have that the variational problem
$\underset{z}{\arg \min} \, E_{\lambda}(z; \ma{A}(\ve{x}))$ is the proximal operator $\prox_{-f_{\lambda}}(\ma{A}(\ve{x}))$ of $-f_{\lambda}$ evaluated at $\ma{A}(\ve{x})$, which leads to the existence and uniqueness of the solution of Eq.~\eqref{eq:z} and consequently of $\IBNN(\ve{x};\ma{M},\ve{b},\lambda) \quad \forall \,\, \ve{x} \in \R[N]$.\\
\\
By property of proximal operators, we then have that
$$
\| \prox_{-f_{\lambda}}(\ma{A}(\ve{x})) - \prox_{-f_{\lambda}}(\ma{A}(\ve{y})) \|
\;\leq\;
\| \ma{A}(\ve{x}) - \ma{A}(\ve{y}) \| \qquad \forall \, \ve{x},\ve{y} \in \R[N]. 
$$
We deduce that, under the assumption that $\Phi$ is a similarity, \ie an isometry scaled by a constant, in $\mathcal{F} \subset \R[N]$ such that
$$
\mathcal{F} \supset \{ \prox_{-f_{\lambda}}(\ma{A}(\mathcal{N}_{\ve{x}}))  \} \cup  \{ \ma{A}(\mathcal{N}_{\ve{x}}) \} \, ,
$$ 
we have
$$
\| \Phi(\prox_{-f_{\lambda}}(\ma{A}(\ve{x})) - \Phi(\prox_{-f_{\lambda}}(\ma{A}(\ve{y})) \|
\;\leq\;
\| \Phi(\ma{A}(\ve{x})) - \Phi(\ma{A}(\ve{y})) \| \qquad \forall \ve{y} \in \mathcal{N}_{\ve{x}} \, ,
$$
\ie
$$
\| \IBNN(\ve{x};\ma{M},\ve{b},\lambda) - \IBNN(\ve{y};\ma{M},\ve{b},\lambda) \|
\;\leq\;
\| \SM(\ve{x} ; \ma{M},\ve{b}) - \SM(\ve{y} ; \ma{M},\ve{b}) \| \, .
$$
\end{proof}

\subsubsection*{Inconclusive evidence about accuracy-stability tradeoff, unlike for the \ac{SM}}
\label{sec:methods:bastounis}

Bastounis\etal\cite{Bastounis2021} demonstrated that multilayer \ac{SM}-based \acp{ANN} with continuous nonlinear activations such as ReLU achieving a successful accuracy become universally unstable (see \cite{Bastounis2021} Theorem 2.2 (ii)), which represents a formal proof that \ac{SM}-based nets are subject to an inherent tradeoff between accurate and stability.

Their proof depends on the standard deep neural network structure $\Psi$ whose $L$ layers follow the form of Eqs.~\eqref{eq:u}-\eqref{eq:y}; that is, and using the notation of Eq.~\eqref{eq:methods:def_A} and~\eqref{eq:methods:def_Phi}, it can be expressed as the composition
$\Psi=\ma{A}^{(L)} \circ \Phi \circ \ma{A}^{(L-1)} \circ \cdots\circ \Phi \circ \ma{A}^{(l)} \cdots\circ \Phi \circ \ma{A}^{(1)}$,
where $\Phi_i = \phi$ is a continuous nonlinear function applied coordinate-wise such as ReLU and $\ma{A}^{(l)}$ are affine functions. A key factor in their proof depends on this standard network structure affording a linearization on finite but arbitrarily large sets (\cite{Bastounis2021} lemma 5.5), a property which does not directly carry over to the implicit and highly nonlinear \ac{IBNN} formulation. This linearization is critical in the construction of arbitrarily large sets where the output of the standard model neural network with fixed dimensions fails to match the true classification function output (\cite{Bastounis2021} lemma 5.4). Thus the accuracy-stability tradeoff established in \cite{Bastounis2021} does not directly apply to the \ac{IBNN}.

\subsubsection*{\ac{IBNN} is resistant to single-neuron stealth attacks}
\label{sec:methods:stealth_attacks}

We define an \ac{ANN} composed of one single hidden layer according to the \ac{IBNN} paradigm and with uniform weights $w_{ik}$ so

\begin{align}
    z_i(\ve{x}) &= \veT{m}_{i} \ve{x} - b_i \,\mLbin\, \frac{\lambda}{D} \sum_{k=1}^D \sigma\!\left( z_k(\ve{x}) \!-\! z_i(\ve{x})\right) \\
    v_i(\ve{x}) &= \phi(z_i(\ve{x}))
\end{align}

\noindent and one single output layer so the output score of the resulting \ac{IBNN} \ac{ANN} is:

\begin{equation}
    s(\ve{x})=\sum_{i=1}^N c_i v_i(\ve{x}) \, .
\end{equation}

Following \cite{Tyukin2024}, we say that a successful single-neuron stealth attack on the \ac{IBNN} \ac{ANN} yielding the scoring function $s$ would be a modified \ac{IBNN} \ac{ANN} with the same artificial units of the original plus an extra \ac{IBNN} neuron and yielding a scoring function $s_a$ such, given a verification input set $\mathcal{V}$, known to the owner of the system $s$ but unknown to the attacker, and $\epsilon \geq 0$, $\Delta > 0$,

\begin{enumerate}
    \item
    for some input $\ve{x}_T$, known to the attacker but not to the owner of the network, $| s_a(\ve{x}_T) - s(\ve{x}_T)| \geq \Delta$, and
    \item
    for every input $\ve{x}_v \in \mathcal{V}$,  $| s_a(\ve{x}_v) - s(\ve{x}_v)| \leq \epsilon$.
\end{enumerate}

\noindent Condition~(1) implies that the attacker makes the `hacked' network $s_a$ respond to a trigger input $\ve{x}_T$ in a way that is different from what the original $s$ does, whereas condition~(2) implies that the system owner will not be able to tell that an attack has taken place, that the original network $s$ has been modified, because the responses of the hacked network $s_a$ on $\mathcal{V}$ are very similar to those of the original network $s$.


The hacked network $s_a$ is

\begin{align}
    s_a(\ve{x}) &=\sum_{i=1}^N c_i \hat{v}_i(\ve{x}) + c_a \hat{v}_{N+1}(\ve{x}) \\
    \hat{v}_i(\ve{x}) &= \phi( \hat{z}_i(I) ) \\
    \hat{z}_i(\ve{x}) &=
        \veT{m}_{i} \ve{x} - b_i \,\mLbin\, \frac{\lambda}{D} \sum_{k=1}^{D+1} \sigma\!\left( \hat{z}_k(\ve{x}) \!-\! \hat{z}_i(\ve{x}) \right)
\end{align}

\noindent Let $\ve{x}_T$ be a trigger, such that the magnitude of the attack it produces is $\Delta$; taking $\phi$ to be the ReLU function and assuming that the network units are working in the linear range of $\phi$, we have

\begin{equation}
    \Delta = | s_a(\ve{x}_T) - s(\ve{x}_T)| =
        \left| \sum_i^D c_i \Big( \hat{v}_i(\ve{x}_T) - v_i(\ve{x}_T) \Big) + c_a \hat{v}_{D+1}(\ve{x}_T) \right| \, .
\end{equation}

\noindent If $D \gg 1$ we may assume that adding a single neuron does not change significantly the sum $\sum_{k=1}^D  \sigma(z_j(\ve{x}) - z_i(\ve{x}))$, since only one term is added to the summation; therefore, we have $\hat{z_i} \simeq z_i$ and hence

\begin{equation}\label{eq:Delta}
    \Delta \simeq  \left| c_a \; \hat{z}_{D+1}(\ve{x}_T) \right| \, .
\end{equation}

\noindent Additional, we have

\begin{equation}\label{eq:ua}
    \hat{z}_{D+1}(\ve{x}_T) = \veT{m}_{D+1} \ve{x}_T \,\mLbin\, \frac{\lambda}{D} \sum_{k=1}^{D+1}  \sigma(\hat{z}_k(\ve{x}_T) - \hat{z}_{D+1}(\ve{x}_T)),
\end{equation}

\noindent where, for simplicity, we have chosen $b_{D+1}=0$. Combining Eqs.~\eqref{eq:Delta} and~\eqref{eq:ua} we get

\begin{align}
    \left| \frac{\Delta}{c_a} \right| &=
        \bigg| \, \veT{m}_{D+1} \ve{x}_T \,\mLbin\, \frac{\lambda}{D} \sum_{k=1}^{D+1} \sigma\Big(\hat{z}_k(\ve{x}_T) \pm  \frac{\Delta}{c_a} \Big) \, \bigg| \\
    \ve{m}_{D+1} &=
        \frac{\ve{x}_T}{|| \ve{x}_T ||^2 }
            ( \pm  \frac{\Delta}{c_a} \pm \frac{\lambda}{D} \sum_{k=1}^{D+1}  \sigma(\hat{z}_k(\ve{x}_T) \pm \frac{\Delta}{c_a}) ) \\
    \ve{m}_{D+1} &\propto \ve{x}_T
\end{align}

\noindent This proves that we can find a filter $\ve{m}_{D+1}$ so that the input $\ve{x}_T$ produces very different outputs in $s$ and $s_a$: the difference between the outputs of both networks has an amplitude of $\Delta$, which would fulfill the first requirement of the stealth attack.

Now we need to check the second requirement, \ie that the outputs for the verification set remain unaltered, or change very little for $\ve{x}_v \in \mathcal{V}$. Using analogous assumptions to the reasoning above, the difference $\Delta_v$ between $s_a(\ve{x}_v)$ and $s(\ve{x}_v)$ is

\begin{equation}
    \Delta_v \simeq \left| c_a \; \hat{z}_{D+1}(\ve{x}_v) \right| \, .
\end{equation}

\noindent We would like $\Delta_v$ to be negligible, \eg $\Delta_v \simeq 0$, which gives $ \hat{z}_{D+1}(\ve{x}_v) \simeq 0$, and from the definition of $\hat{z}_{D+1}$ we get

\begin{equation}\label{eq:condIv}
    \veT{m}_{D+1} \ve{x}_v = \frac{\lambda}{D} \sum_{k=1}^{D}  \sigma(z_k(\ve{x}_v))
\end{equation}

\noindent So, in order for the stealth attack to be actually stealth, undetectable, all the inputs in the verification set $\mathcal{V}$ should satisfy Eq.~\eqref{eq:condIv}. But this is an implicit equation on $\ve{x}_v$, whose solutions, if any, might be totally unrelated to the elements of $\mathcal{V}$, which is a set that is only known to the system owner, not to the attacker.

In conclusion, condition~(2) for the stealth attack is not satisfied, and therefore we can say that single-neuron stealth attacks on a \ac{IBNN} are unfeasible.

%% file: _methods__memorization.tex


Here we show that IBNN is less prone to memorization by following the procedure suggested by Garg\etal\cite{Garg2024},  where the tendency of an ANN to memorize its training data is evaluated by computing the curvature of the loss function with respect to the input: lower curvature means less memorization.

    Let $\Psi$ be a binary classifier with a single hidden layer and a \ac{FC}-based scorer $s: \R[D] \to \R$ according to
    
    \begin{equation*}
        s(\ve{x}) = \veT{c} \, \IBNN(\ve{x}) - c_0 \, ,
    \end{equation*}
    
    \noindent and a subsequent probability assignment according to $\Psi(\ve{x}) = P\big( s(\ve{x}) \big)$,
    where $\ve{c} \in \R[D]$, and $c_0 \in \R$ and where $P: \R \to [0,1]$ is a function providing the probability of the class $1$ (for two classes, with respective labels $0$ and $1$), \eg a softmax summarization making $y$ the probability of choosing the label $1$, complementary of the probability of choosing the label $0$. Additionally, let the loss for the training pair $(\ve{x}, y_{t})$ ($t$ for \emph{truth}) be the \ac{MSE}, that is,
    $L(\ve{x}; y_{t}) = (\Psi(\ve{x})-y_{t})^{2}$. The derivative of the loss with respect to the input is, by virtue of the chain rule,
    
    \begin{equation}
    \diffp{L_{\IBNN}}{\ve{x}} \; = \;
        2 \Big( \Psi(\ve{x})-y_{t} \Big) \;
        P^{\prime}(s(\ve{x})) \;
        \veT{c} \; \der[\,\IBNN](\ve{x}) \, .
        \label{eq:derivative_z_matrix_notation}
    \end{equation}
    
    \noindent An analogous expression $\diffp{L_{\SM}}{\ve{x}}$ would correspond to an analogous \ac{SM}-based network, \emph{mutatis mutandis}.
    
    
    The exact expression for $\der[\,\IBNN](\ve{x})$ in Eq.~\eqref{eq:derivative_z_matrix_notation} is provided by Eq.~\eqref{eq:methods:DIBNN} --- it is the product of the terms $\der[\Phi]((\ma{F} \circ \ma{A})(\ve{x}))$, $\ma{K}_{\infty |(\ma{F} \circ \ma{A})(\ve{x})}$, and $\der[\ma{A}]$, which can be approximated as follows:
    

    \begin{itemize}
    \item
        $\der[\Phi] \approx \id{D}$, assuming that the \ac{IBNN} outer activation is a $\ReLU$ and that the network is working within its linear range;
    \item
                $\ma{K}_{\infty |(\ma{F} \circ \ma{A})(\ve{x})} \approx \id{D} \mLbin \, \lambda \, \der[\ma{B}](\ma{A}(\ve{x}))$,
        assuming $|\lambda| \ll 1$, 
        $(\ma{F} \circ \ma{A})(\ve{x}) \approx \ma{A}(\ve{x})$
        and using the first-order Taylor expansion of Eq.~\eqref{eq:methods:der_F_Kinf}.
    \end{itemize}

    \noindent Accordingly, $\der[\,\IBNN](\ve{x})$ can be approximated as
    
    \begin{equation}
    \der[\,\IBNN](\ve{x}) \approx \Big( \id{D} \mLbin \, \lambda \, \der[\ma{B}](\ma{A}(\ve{x})) \Big) \der[\ma{A}]
    \end{equation}
    
    \noindent which, in scalar notation and identifying $z_i = \IBNN_i(\ve{x})$, is
    
    \begin{equation}
    \diffp{z_i}{x_j}
    \approx
    m_{ij} - \lambda \sum_k w_{ik} \sigma^{\prime} \Big( (\veT{m}_k - \veT{m}_i) \ve{x} \Big) (m_{kj} - m_{ij})  \, .
    \end{equation}
    
    Assuming that $\sigma$ is piece-wise linear with central slope $p$
    (\ie $\sigma(z)=-1$, $\forall z < -\frac{1}{p}$;
    $\sigma(z)=1$, $\forall z>\frac{1}{p}$;
    and $\sigma(z)=pz$, $\forall |z|<\frac{1}{p}$),
    that the weights $w_{ik}$ are uniform with $w_{ik} = \frac{1}{D}$, $\forall i, k \in \{1,\ldots,D\}$, and that the norm of $\veT{m}_i \ve{x}$ does not vary significantly for different units $i$ and thus $| (\veT{m}_k - \veT{m}_i) \ve{x} | \ll 1$, we can write
    
    \begin{equation}
        \begin{split}
            \diffp{z_i}{x_j}
            \, \approx \,
            m_{ij} - \lambda p \left( \sum_{k=1}^{D} \frac{m_{kj}}{D}\right) + \lambda \, p \, m_{ij}
            =
            (1 + \lambda p) m_{ij} - \lambda p \left( \frac{1}{D} \sum_{k=1}^{D} m_{kj}\right) \, .
        \end{split}
        \label{eq:par_ui_par_xk}
    \end{equation}
    
    We now make the common assumption that the neurons have a reduced receptive field, \ie each neural unit receives inputs from a small amount of coordinates of $\ve{x}$.
    Then, the matrix whose rows are the filters $\veT{m}_i$ is sparse, and 
    with bounded filter elements (e.g. $|m_{ij}| \leq 1$, $\forall i,j$) we get $|\frac{1}{D} \sum_{j=1}^{D} m_{kj}| \ll 1$, which, plugged in Eq.~\eqref{eq:par_ui_par_xk}, gives
    
    \begin{equation}
        \diffp{z_i}{x_j}
            \, \approx \,
            (1 + \lambda p) m_{ij} \, ,
        \label{eq:par_ui_par_xk_with_zero_mean}
    \end{equation}
    
    \noindent where, just as a note, $\lambda = 0$ would correspond to the \ac{SM}.
    
    With this relationship between derivatives with respect to the input for \ac{IBNN} and for \ac{SM}, and going back to Eq.~\eqref{eq:derivative_z_matrix_notation}, we have
    
    \begin{equation}
        \diffp{L_{\IBNN}}{x_j} \approx (1 + \lambda p) \diffp{L_{\SM}}{x_j} \, .
        \label{eq:derivative_loss_comparison}
    \end{equation}

    \noindent Since, according to~\cite{Garg2024}, the magnitude of the curvature of the loss function w.r.t. the input can be estimated as
    \begin{equation}
    \mathrm{Curv}(\ve{x}) \propto \left\Vert
        \diffp{L(\ve{x}+h\ve{g})}{\ve{x}} - \diffp{L(\ve{x}))}{\ve{x}}
    \right\Vert^{2}  \, ,
    \end{equation}
    
    \noindent where $\ve{g} \in \R[N]$ are vectors sampled from a Rademacher distribution, the above approximation yields
    \begin{equation}\label{eq:memo}
    \mathrm{Curv}_{\IBNN}(\ve{x}) = (1 + \lambda p)^{2} \; \mathrm{Curv}_{\SM}(\ve{x})
    \end{equation}
    
    Given that that the magnitude of the curvature is a measure of memorization~\cite{Garg2024}, 
    Eq. \ref{eq:memo} shows that with a proper choice of $(\lambda, p)$ (s.t. $|1 + \lambda p| < 1$) \acs{IBNN} memorizes less than a \acs{SM} network.

%% file: _methods__implementation_of_IBNN.tex

Trainable \ac{IBNN} layers have been custom-implemented in Python using the design principles and the utilities provided by PyTorch~\cite{Paszke2019PyTorch} for both \ac{ANN} training and evaluation. The implemented \ac{IBNN} layers are analogous to and mostly interchangeable with the \ac{SM} layers available in PyTorch, thus easing comparisons.

Two types of \ac{IBNN} layers, convolutional and \acf{FC},  have been implemented, which are, respectively, analogous to convolutional and \ac{FC} layers according to the \ac{SM} paradigm. The two types differ in the spatial limitations imposed on the linear filters represented by the elements $\{\ve{m}_i, b_i\}$ (see, respectively, Eq.~\eqref{eq:y} and~\eqref{eq:z}): convolutional layers have been achieved by shaping the filter elements $\{\ve{m}_i, b_i\}$
to correspond to a square mask and tying them across all the neurons of the same layer.

As highlighted before, the output $\ve{v}$ of an \ac{IBNN} layer for a certain input $\ve{x}$ corresponds to the solution of a system of coupled implicit equations (Eqs.~\eqref{eq:v}-\eqref{eq:z}), which represents a challenge for both forward and backward (\ie training) operations. The library TorchDEQ~\cite{TorchDEQ2023}, providing both operations, is the engine of the implicit calculations of our implementation of the \ac{IBNN} layer:

\begin{itemize}
    \item
    The forward operation of the \ac{IBNN} layer, which is an implicit, but more specifically, fixed-point calculation, has been performed using the default forward solvers provided by TorchDEQ, in particular Broyden's method~\cite{TorchDEQ2023}.
    Additionally, a forward solver specifically tailored to \ac{IBNN} and based on the minimization of Eq.~\eqref{eq:methods:energy_function} through the procedure of Eq.~\eqref{eq:iterative_calculation_of_IBNN} has also been implemented. Neither showed convergence rates that would advise a preference for one over the other, and they have been used interchangeably.
    \item
    The backward operation, required for training, has been performed using the default forward solvers provided by TorchDEQ, in particular Broyden's method~\cite{TorchDEQ2023}.
\end{itemize}

Our implementation of the \ac{IBNN} layers allows for networks with independent $\lambda$s at each layer, and for fixed or trainable $\lambda$s: the latter option has been used in certain experiments as a sort of tentative hyperparameter optimization.

%% file: _methods__experimental_details.tex
\acused{AN}
\acused{CNN}
\acused{FC}
\acused{IBNN}
\acused{SM}

\subsubsection*{Datasets}
\label{sec:methods:datasets}

The image classification experiments were based on three datasets of increasing difficulty level, each one containing $10$ classes: Fashion-MNIST~\cite{Xiao2017FashionMNIST}, SVHN~\cite{Netzer2011SVHN}, and CIFAR-10~\cite{Krizhevsky2019CIFAR10}. These datasets have been used from their respective built-ins in the module Torchvision~\cite{Torchvision2016} of PyTorch~\cite{Paszke2019PyTorch}.
Fashion-MNIST consists of $28 \times 28$ grayscale images of clothing, for a total of $60,000$ training and $10,000$ test images; their comprised classes are T-shirt/top, trousers, pullover, dress, coat, sandal, shirt, sneaker, bag, and ankle boot.
SVHN consists of $32 \times 32$ color images of digits from $0$ to $9$, obtained as croppings of real street images, for a total $73,257$ training and $26,032$ test images.
And CIFAR-10 consists of $32 \times 32$ color images of animals and vehicles, for a total of $50,000$ training and $10,000$ test images; their comprised classes are airplane, automobile, bird, cat, deer, dog, frog, horse, ship, and truck.

The visualizations in Fig.~\ref{fig:visualization_2d_lambda_p} correspond to a custom 2D dataset, generated using a custom defined mask containing the two concentric arrows hosting the 2D samples of the two separated classes: the samples of each class were generated by `accept-reject sampling' from a 2D uniform distribution with a support comprising their respective masks. The resulting 2D samples where generated in the form of 2-pixel images for compatibility with the image-oriented implementation of the classifiers.

\subsubsection*{Network architectures used in the experimental evaluation}
\label{sec:methods:network_architectures}

The classifier \acp{ANN} used in the presented experimental evaluation, which we will refer to as  \acfp{UCN}\acused{UCN} and which will be denoted using the convention \acs{UCN}$(L, C)$, is a simplified architecture composed of:

\begin{itemize}
    \item
        A backbone formed of $L$ sequential blocks of analogous characteristics (hence the nomenclature `uniform'). Each block $l \in \{1,\ldots,L\}$ contains a convolutional layer, defined according to the \ac{SM} or \ac{IBNN} depending on the case, and a batch normalization layer for its output.
        In all cases, the convolutional layer maintains (through zero padding) the spatial size of its input image but produces $C$ channels at its output, and the side of the convolutional filters is in all cases the $15$\% of the size of its input image.
        $\ReLU$ is used as the output nonlinearity $\phi$ (Eq.~\eqref{eq:u} and~\eqref{eq:v}) of all layers.
    \item
        An output scoring head composed of one single \ac{FC} layer (according to the \ac{SM}) with as many neurons as classes contained in the dataset.
\end{itemize}

\noindent For convolutional layers according to \ac{IBNN}, additionally, uniform weights $w_{ik}=\frac{1}{D}$ and a dendritic nonlinearity $\sigma(z)=\tanh(p z)$ with $p\!=\!10$ is used. This value of $p$ was chosen for two reasons: it is an intermediate value regarding boundary flexibility (see Fig.~\ref{fig:visualization_2d_lambda_p}); and it ensures existence and uniqueness of \ac{IBNN} in the considered ranges of $\lambda$ (see Eq.~\eqref{eq:methods:sigma_in_family_tanh_px}).
The parameter $\lambda_l$ of each layer $l \in \{1,\ldots,L\}$ is set or trained individually.

All experiments shown in \textrm{Results} correspond to the described \acs{UCN} architecture.
However, experiments corresponding to different datasets and aimed aspects differ in $L$ and $C$. For each dataset, all experiments gravitate around a configuration \acs{UCN}$(L_d,C_d)$ whose $L_d$ and $C_d$ were set in order to achieve base accuracies in the range of $65-90 \%$. This central configuration was, for each dataset:
for Fashion-MNIST, \acs{UCN}$(1, 3)$;
for SVHN, \acs{UCN}$(2, 3)$;
and for CIFAR-10, \acs{UCN}$(3, 8)$.
\noindent All the experiments correspond to said central configurations but for two exceptions: experiments related to expressivity compare network sizes by varying the number of channels, and thus correspond for each dataset to the $L$ indicated above but capture different values of $C$ (see Fig.~\ref{fig:expressivity_all_datasets}); and the result included in Fig.~\ref{fig:learning_speed_all_datasets} for CIFAR-10, corresponding to \acs{UCN}$(3, 16)$ is the only departure from these configurations: although both $C\!=\!8$ and $C\!=\!16$ similarly reproduce the same positive trend regarding learning speed, it was determined that the latter illustrates the concept better.

The visualizations in Fig.~\ref{fig:visualization_2d_lambda_p}, however, correspond to a network with a backbone of a single hidden layer of \ac{FC} type composed of $3$ neurons (according to the \ac{SM} or \ac{IBNN} depending on the case) with a $\ReLU$ output nonlinearity and a batch normalization layer, followed by a single layer head/unit leading to a binary classification.

\subsubsection*{Training and evaluation details for \ac{IBNN} and \ac{SM}-based networks}
\label{sec:methods:_training_details_for_IBNN}

Training of both \ac{SM}- and \ac{IBNN}-based \acp{ANN} was performed using \ac{SGD}. The adoption of \ac{SGD}, instead of other more dynamic optimizers such as Adam, is motivated by two reasons: the latter requires the calculation of higher-order moments and therefore conveys a larger computational burden; and the simplicity of \ac{SGD} has been regarded as more controllable by experimenters with respect to the more obscure optimization steps of the latter. A scheduler composed of a linear warmup from $0.001$ to $0.01$ in $40$~epochs and a linear decay symmetric to the former was used for the \ac{LR}. This scheduler was chosen because it represents a favorable trade-off between stability with respect to the initial random weights of the network, reasonable speed due to the higher \ac{LR} during the central epochs, and better convergence due to the final cooldown.


A warmup training step of $2$ epochs was incorporated prior to the training of both \ac{SM}- and \ac{IBNN}-based \acp{ANN}. In the specific case of the \ac{IBNN}-based nets, this warmup was performed on a \ac{SM}-based surrogate with an identical architecture to that of the target \ac{IBNN}; 
the training of the \ac{IBNN} net itself starts by initializing its weights with the weights resulting from the warmup of the surrogate (see Eqs.~\eqref{eq:y} and~\eqref{eq:z}, and consider also the batch normalization and head of the \ac{UCN}). This surrogate-based warmup has been introduced to ensure favorable initializations of \ac{IBNN}-based \acp{ANN} and to avoid those rare cases where random initializations without such warmup did not allow training to progress to viable classifiers.

Each model (labeled in each case as \ac{IBNN} or \ac{SM}) in each experiment in \textrm{Results} was obtained as the median of several instances of the same model, trained independently from distinct random initial weights and dataset reordering: the results addressing expressivity (Fig.~\ref{fig:expressivity_all_datasets}) and memorization (Fig.~\ref{fig:memorization_all_datasets}) aggregate $10$ instances for the \ac{SM} nets and $5$ for the \ac{IBNN} nets; the results addressing decreasing percentages of the training data (Fig.~\ref{fig:partial_training_data}) and learning speed (Fig.~\ref{fig:learning_speed_all_datasets}) aggregate $20$ instances for the \ac{SM} nets and $5$ for the \ac{IBNN} nets. Corresponding confidence intervals of $90$\% are included in all cases. Additionally, the results illustrating learning speed (Fig.~\ref{fig:learning_speed_all_datasets}) perform a running average of $3$~epochs.

\subsubsection*{Adversarial attack generation and evaluation details}
\label{sec:methods:_adversarial_attack_generation_and_evaluation}

The respective \ac{AR} of \ac{SM}- and \ac{IBNN}-based nets has been assessed using two \acp{AA} of very different nature to better characterize the abilities of the proposal: \ac{PGD}~\cite{MadryPGD2018} and Pixle~\cite{PomponiPixle2022}. In both cases, attacks have been generated using the library TorchAttacks~\cite{Kim2021TorchAttacks} from the validation subset of each dataset.

Originally, \ac{PGD} is a white-box attack that presumes the full availability of the target \ac{ANN} and its gradient with respect to variations in its input. It performs a number of gradient-based iterations and limits its operation to a maximum distance with respect to the image from which it is calculated, denoted as $\epsilon$. However, our implementation based on \ac{IBNN} prevents the calculation of such a gradient and thus the feasibility of direct \ac{PGD} attacks, analogous to the gradient masking/obfuscation defenses against \acp{AA}. Nevertheless, the literature has shown that attacks generated using differential surrogate models are usually effective for attack generation~\cite{Biggio2018, Papernot2017}. Accordingly, while \ac{PGD} attacks for the \ac{SM}-based \acp{ANN} have been generated in a purely white-box manner, the \ac{PGD} attacks for the \ac{IBNN}-based \acp{ANN} have been produced using a \ac{SM}-based surrogate model, as follows: each trained \ac{IBNN} layer has been replaced by the \ac{SM} layer resulting from simply removing the nonlinear bias of Eq.~\eqref{eq:z} (identical to setting $\lambda=0$ for the layer). This type of surrogate model has been selected because it is deemed to be the most likely surrogate chosen by a potential attacker due to its similarity to the target \ac{IBNN} net.

Conversely, Pixle is purely black-box, working only on the classification probabilities reported by the \ac{ANN} for a given query image, and has been used to directly attack both \ac{SM} and \ac{IBNN} without any need for surrogates. In this way, it provides a complementary view to the results obtained using \ac{PGD} \emph{via} a surrogate. Pixle gradually shuffles an increasing number of pixels from the original image as its so-called restarts advance, and optimizes the effectiveness of the newly scrambled pixels at each restart for a certain number of iterations.

All Pixle experiments presented in this manuscript correspond to an identical attack with a maximum number of $10$ restarts with $5$ iterations each, with the generation of pixel candidates based on similarity and from $3 \times 3$ image patches~\cite{PomponiPixle2022}. The \ac{PGD}-based attacks presented in this manuscript perform $10$ gradient iterations and sweep across ranges of attack intensities, from $\epsilon=0$ to a certain maximum that depends on the dataset under study, resulting in the corresponding security evaluation curves~\cite{Biggio2018} for each experimental condition.

The models subject to adversarial attacks were trained according to the same principles, architectures, and hyperparameters indicated above, and, as a result, several instances of each model were obtained and evaluated on the clean validation dataset and under attack. The results addressing both \ac{PGD} (Fig.~\ref{fig:robustness_pgd}) and Pixle (Fig.~\ref{fig:robustness_pixle}) aggregate, using the median, $5$ model instances for the \ac{SM} nets and $5$ for the \ac{IBNN} nets.
Since confidence interval estimation would be unreliable for the limited amount of available samples, adversarial attack experiments show the complete range of obtained results (equivalent to confidence intervals of $100$\%).

\subsubsection*{Memorization: experimental evaluation}
\label{sec:methods:_memorization_experimental_evaluation}

Memorization occurs when models start learning individual training samples instead of the underlying statistical distribution of data: therefore, its occurrence must become apparent on the evolution of the network on its validation dataset.

We devised, consequently, the following experiment: we deliberately contaminated the training subset of a certain dataset with different proportions of wrong samples, generated by simply scrambling their labels, while keeping the validation subset unscathed; for each considered proportion we trained networks using the contaminated data and took the epoch when the corresponding validation accuracy started to permanently decline as a sign of memorization and overfitting; and we regard such maximum validation accuracy as the level of generalization achievable for an \ac{ANN} on a contaminated dataset.

%% file: _supplementary_information__ibnn_from_sm.tex
%
%

This section demonstrates that when creating an \ac{IBNN} to solve a given task, one could avoid the training stage by simply transferring the weights from an existing SM network of analogous architecture that has already been trained for the same task. The proof is formulated for the case of a binary classification task and one-hidden-layer networks, but this should not limit its generality. In addition, we show that the magnitude of the parameter $\lambda$ used in \ac{IBNN} has an impact on the accuracy achieved by the network.

Formally, let $\Psi$ be a \ac{SM}-based binary classifier $\Psi(\ve{x}) = H(s(\ve{x}))$, which performs a binary decision, embodied by the Heaviside function $H$, based on the sign of a score $s: \R[D] \to \R$ assigned to each input sample $\ve{x} \in \R[N]$ through a single hidden layer of $D$ \ac{SM} neurons and a \ac{FC}-based head according to
    
\begin{equation}
    \label{eq:suppl_info:eq_score_SM}
    s(\ve{x}) = \veT{c} \, \SM(\ve{x}; \ma{M}, \ve{b}) - c_0 = \sum_{i=1}^D c_i \, \phi(\, y_i(\ve{x}) \,) - c_0 \, ,
\end{equation}

\noindent with $\ve{c} = (c_i)_{i=1}^{D} \in \R[D]$ and $c_0 \in \R$ are the weights of the output layer,
and where

\begin{equation}
    \label{eq:suppl_info:eq_y}
    y_i(\ve{x}) = \veT{m}_i \ve{x} - b_i \, ,
\end{equation}

\noindent where $\ve{m}_i \in \R[N]$ and $b_i \in \R$ are, respectively, the linear filter and the bias for the $i$-th unit of the hidden layer
(see Eq.~\eqref{eq:u} and Eq.~\eqref{eq:methods:def_SM_matrix_form}). And let us assume, without any loss of generality, that $\{\ve{c}, c_0\}$ has been normalized to make $\ve{c}$ a unit vector, that is, $\Vert\ve{c}\Vert_{2}=1$.

We assume that this network has been trained on the set $\mathcal{T}$ composed of two classes $\mathcal{T}_{0}$ and $\mathcal{T}_{1}$ achieving perfect accuracy on it,
that is,

\begin{equation*}
    \Psi(\ve{x}^{(t)}) = \left\{
        \begin{matrix}
            1 \, , & \forall \ve{x}^{(t)} \in \mathcal{T}_{1} \\
            0 \, , & \forall \ve{x}^{(t)} \in \mathcal{T}_{0}
        \end{matrix}
    \right. \; :
\end{equation*}

\noindent in other words, $\SM(\cdot; \ma{M}, \ve{b})$ makes the sets $\SM(\mathcal{T}_{1}; \ma{M}, \ve{b})$ and $\SM(\mathcal{T}_{0}; \ma{M}, \ve{b})$ linearly separable, and $\{\ve{c}, c_0\}$ separates them. Furthermore, we assume that such perfect classifications is achieved through the classifier $s$ of Eq.~\eqref{eq:suppl_info:eq_score_SM} by a margin $M$, that is, that

\begin{equation}
    \label{eq:suppl_info:eq_margin_M}
    \min_{\ve{x}^{(1)} \in \mathcal{T}_{1}} \left\{ s(\ve{x}^{(1)}) \right\}
    \, - 
    \max_{\ve{x}^{(0)} \in \mathcal{T}_{0}} \left\{ s(\ve{x}^{(0)}) \right\}
    \; \geq \; M \; . 
\end{equation}

We define another network $\hat{\Psi}(\ve{x}) = H(\hat{s}(\ve{x}))$, with analogous architecture to $\Psi$ but with \ac{IBNN} instead of \ac{SM} neurons in its hidden layer, and with the peculiarity that the \ac{IBNN} neurons use the same exact $\{\ma{M}, \ve{b}\}$ of the respective \ac{SM} neurons in Eqs.~\eqref{eq:suppl_info:eq_score_SM}-\eqref{eq:suppl_info:eq_y} but whose scoring hyperplane $\{\hat{\ve{c}}, \hat{c}_0\}$ is allowed to differ, and thus

\begin{equation}
    \label{eq:suppl_info:ibnn_score}
    \hat{s}(\ve{x})
        = \hat{\ve{c}}^{T} \IBNN(\ve{x}; \ma{M}, \ve{b}, \lambda) - \hat{c}_0 
        = \sum_{i=1}^D \hat{c}_i \, \phi(\, z_i(\ve{x}) \,) - \hat{c}_0 \, ,
\end{equation}

\noindent where

\begin{equation}
    z_i(\ve{x}) =
    \veT{m}_i \ve{x} - b_i
    \,\mLbin\, \lambda \sum_{k=1}^D w_{ik} \, \sigma\!\left( z_k(\ve{x}) \!-\! z_i(\ve{x}) \right)  \, .
\end{equation}

\noindent We will show that the training error achievable by the \ac{IBNN}-based classifier $\hat{\Psi}$ is below a bound that depends linearly on the magnitude $|\lambda|$ and which could be made, consequently, arbitrarily small. We will do so by
studying the case of $\hat{\ve{c}}=\ve{c}$ and $\hat{c}_0 = c_0$, that is, a scoring hyperplane for $\hat{\Psi}$ identical to that of $\Psi$, as a suboptimal instance of $\{\hat{\ve{c}}, \hat{c}_0\}$.
In the remaining of the section we will use, for convenience, the shorter notation
$\IBNN(\ve{x}) = \IBNN(\ve{x}; \ma{M}, \ve{b}, \lambda)$ and $\SM(\ve{x}) = \SM(\ve{x}; \ma{M}, \ve{b})$ where the parameters of the layer are left implied.

First, a bound on the maximum distance between the images of the maps $\SM(\cdot)$ and $\IBNN(\cdot)$ for the same data point $\ve{x}$ can be established. The difference can be written as

\begin{equation}
    \label{eq:suppl_info:eq_ibnn_minus_sm}
    \IBNN(\ve{x}) - \SM(\ve{x}) =
    \big( \Phi \circ \ma{F} \circ \ma{A} \big)(\ve{x}) - \big( \Phi \circ \ma{A} \big)(\ve{x}) =
    \big(\Phi \circ \ma{F}\big)(\ma{A}(\ve{x})) - \Phi(\ma{A}(\ve{x})) \, ,
\end{equation}

\noindent where the dependence $\ma{F}=\ma{F}(\cdot;\lambda)$ has also left implied. From that form, for any $\Phi=(\phi)_{i=1}^{D}$ contractive like in the case of $\phi=\ReLU$, we have that, for any $\ve{y}=\ma{A}(\ve{x})$,

\begin{equation}
    \label{eq:suppl_info:eq_bound_difference_ibnn_and_sm_without_a}
    \begin{split}
        \left\Vert
            \big(\Phi \circ \ma{F}\big)(\ve{y}) - \Phi(\ve{y})
        \right\Vert_{2}^{2}
        & \leq \,
        \left\Vert
            \ma{F}(\ve{y}) - \ve{y}
        \right\Vert_{2}^{2}
        \\
        & = \,
        \left\Vert \,
            - \lambda \; \ma{B}\big(\ma{F}(\ve{y})\big)
        \, \right\Vert_{2}^{2}
        \, \leq \,
        |\lambda|^{2} \, \left\Vert \,
            \ma{B}\big(\ma{F}(\ve{y})\big)
        \, \right\Vert_{2}^{2}
        \\
        & \leq |\lambda|^{2} \, D
        \; ,
    \end{split}
\end{equation}

\noindent wherein the second line is derived from the very definition of the fixed-point function $\ma{F}$ (see Eq.~\eqref{eq:methods:def_F}) when a unique solution exists, since $\ma{F}(\ve{y}) = \ve{y} - \lambda \ma{B}\big(\ma{F}(\ve{y}))$, and wherein the last inequality derives straightforwardly from the definition of $\ma{B}=(\ma{B}_i)_{i=1}^{D}$ (see Eq.~\eqref{eq:methods:def_Bi}) for a sigmoidal $|\sigma| \leq 1$ and positive and normalized $w_{ik}$ as

\begin{equation}
    \left| \ma{B}_{i}(\ve{z})\right|
    \,=\,
    \left| \sum_{k=1}^{D} w_{ik} \sigma(z_k-z_i) \right|
    \,\leq\,
    \sum_{k=1}^{D} \left| w_{ik} \right| \left| \sigma(z_k-z_i) \right|
    \,\leq\,
    \sum_{k=1}^{D} \left| w_{ik} \right| \leq 1 \, .
\end{equation}

Therefore, since by virtue of Eqs.~\eqref{eq:suppl_info:eq_ibnn_minus_sm}-\eqref{eq:suppl_info:eq_bound_difference_ibnn_and_sm_without_a}

\begin{equation}
    \left\Vert
        \IBNN(\ve{x}) - \SM(\ve{x})
    \right\Vert_{2}
    \,\leq\, |\lambda| \sqrt{D} \, ,
\end{equation}

\noindent we can write, for any $\ve{x}\in\R[N]$,

\begin{equation}
    \IBNN(\ve{x}) =
    \SM(\ve{x})  +  \alpha(\ve{x}) \, \ve{q}(\ve{x})
\end{equation}

\noindent for certain $\alpha(\ve{x})\in\R$ and $\ve{q}(\ve{x}) \in \R[D]$ such that
$0 \leq \alpha(\ve{x}) \leq |\lambda| \sqrt{D}$ and
$\Vert\ve{q}(\ve{x})\Vert_{2}=1$.
Using this fact, and for an identical scoring hyperplane to that of $\Psi$,
that is, $\hat{\ve{c}}=\ve{c}$ and $\hat{c}_0 = c_0$), we have,
$\forall \ve{x}^{(1)} \in \mathcal{T}_{1}$ and $\forall \ve{x}^{(0)} \in \mathcal{T}_{0}$,

\begin{equation}
    \begin{split}
    \hat{s} & (\ve{x}^{(1)}) - \hat{s}(\ve{x}^{(0)})
    \\
    &= \,
    \Big( \ve{c}^{T} \IBNN(\ve{x}^{(1)}) - c_0 \Big) \, - \, 
    \Big( \ve{c}^{T} \IBNN(\ve{x}^{(0)}) - c_0 \Big)
    \\
    &= \,
    \bigg( \ve{c}^{T} \, 
        \Big( \SM(\ve{x}^{(1)})  +  \alpha(\ve{x}^{(1)}) \, \ve{q}(\ve{x}^{(1)}) \Big)
        - c_0
    \bigg)
    -
    \bigg( \ve{c}^{T} \, 
        \Big( \SM(\ve{x}^{(0)})  +  \alpha(\ve{x}^{(0)}) \, \ve{q}(\ve{x}^{(0)}) \Big) 
        - c_0
    \bigg)
    \\
    &= \,
    \bigg(
        \Big( \ve{c}^{T} \, \SM(\ve{x}^{(1)}) - c_0 \Big)
        -
        \Big( \ve{c}^{T} \, \SM(\ve{x}^{(0)}) - c_0 \Big)
    \bigg)
    +
    \ve{c}^{T} \, \Big(
        \alpha(\ve{x}^{(1)}) \, \ve{q}(\ve{x}^{(1)})
        -
        \alpha(\ve{x}^{(0)}) \, \ve{q}(\ve{x}^{(0)})
    \Big)
    \\
    &\geq \;
    M - 2 |\lambda| \sqrt{D} \, ,
    \end{split}
\end{equation}

\noindent where the last inequality results from the assumption of Eq.~\eqref{eq:suppl_info:eq_margin_M} for a margin $M$ in the \ac{SM}-based classifier and from the worst case alignment between the unit vectors $\ve{c}$ and $\ve{q}(\ve{x}^{(i)})$.